\Crefname{assumption}{Assumption}{Assumptions}
\newtheorem{theorem}{Theorem}
\newtheorem{theorem*}{Theorem}
\newtheorem{corollary}{Corollary}
\newtheorem{assumption}{Assumption}
\newtheorem{remark}{Remark}
\newtheorem{lemma}{Lemma}
\newcommand{\Ind}{\textbf{1}}
\newcommand{\rf}{\mathrm{ref}}
\newcommand{\argmax}{\mathop{\arg\max}}
\newcommand{\argmin}{\mathop{\arg\min}}
\DeclareMathOperator{\Var}{Var}
\newcommand{\A}{\mathcal A}
\newcommand{\X}{\mathcal X}
\newcommand{\Y}{\mathcal Y}
\newcommand{\ie}{\emph{i.e.}}
\newcommand{\eg}{\emph{e.g.}}
\title{
Post-Contextual-Bandit Inference
% OPE for contextual bandit data
}
\author{%
  Aurélien Bibaut\thanks{Alphabetical order} \\
  Netflix
\And
    Antoine Chambaz\footnotemark[1]\\
    Université Paris Descartes
\And
    Maria Dimakopoulou\footnotemark[1]\\
    Netflix
\And
    Nathan Kallus\footnotemark[1]\\
    Cornell University and Netflix
\And 
Mark van der Laan\footnotemark[1]\\
University of California, Berkeley
}
\begin{document}

\maketitle

\begin{abstract}
Contextual bandit algorithms are increasingly replacing non-adaptive A/B tests in e-commerce, healthcare, and policymaking because they can both improve outcomes for study participants and increase the chance of identifying good or even best policies. To support credible inference on novel interventions at the end of the study, nonetheless, we still want to construct valid confidence intervals on average treatment effects, subgroup effects, or value of new policies. The adaptive nature of the data collected by contextual bandit algorithms, however, makes this difficult: standard estimators are no longer asymptotically normally distributed and classic confidence intervals fail to provide correct coverage. While this has been addressed in non-contextual settings by using stabilized estimators, the contextual setting poses unique challenges that we tackle for the first time in this paper. We propose the Contextual Adaptive Doubly Robust (CADR) estimator, the first estimator for policy value that is asymptotically normal under contextual adaptive data collection. The main technical challenge in constructing CADR is designing adaptive and consistent conditional standard deviation estimators for stabilization. Extensive numerical experiments using 57 OpenML datasets demonstrate that confidence intervals based on CADR uniquely provide correct coverage.
\end{abstract}

\section{Introduction}

Contextual bandits, where personalized decisions are made sequentially and simultaneously with data collection, are increasingly used to address important decision-making problems where data is limited and/or expensive to collect, with applications in product recommendation \citep{li2010contextual}, revenue management \citep{kallus2020dynamic,qiang2016dynamic}, and personalized medicine \citep{tewari2017ads}. Adaptive experiments, whether based on bandit algorithms or Bayesian optimization, are increasingly being considered in place of classic randomized trials in order to improve both the outcomes for study participants and the chance of identifying the best treatment allocations \citep{atheytrial,kasytrial,kasy2021adaptive,bakshy2018ae}.

But, at the end of the study, we still want to construct valid confidence intervals on average treatment effects, subgroup effects, or the value of new personalized interventions. Such confidence intervals are, for example, crucial for enabling credible inference on the presence or absence of improvement of novel policies. However, due to the adaptive nature of the data collection, unlike classic randomized trials, standard estimates and their confidence intervals actually fail to provide correct coverage, that is, contain the true parameter with the desired confidence probability (\eg, 95\%). A variety of recent work has recognized this and offered remedies \citep{hadad2019confidence,luedtke_vdL2016}, but only for the case of non-contextual adaptive data collection. Like classic confidence intervals, when data comes from a contextual bandit -- or any other context-dependent adaptive data collection -- these intervals also fail to provide correct coverage. In this paper, we propose the first asymptotically normal estimator for the value of a (possibly contextual) policy from \emph{context-dependent} adaptively collected data. This asymptotic normality leads directly to the construction of valid confidence intervals.

Our estimator takes the form of a \emph{stabilized} doubly robust estimator, that is, a weighted time average of an estimate of the so-called canonical gradient using plug in estimators for the outcome model, where each time point is inversely weighted by its estimated conditional standard deviation given the past. We term this the Contextual Adaptive Doubly Robust (CADR) estimator. We show that, given consistent conditional variance estimates which at each time point only depend on previous data, the CADR estimator is asymptotically normal, and as a result we can easily construct asymptotically valid confidence intervals. This normality is in fact robust to misspecifying the outcome model. A significant technical challenge is actually constructing such variance estimators. We resolve this using an adaptive variance estimator based on the importance-sampling ratio of current to past (adaptive) policies at each time point. We also show that we can reliably estimate outcome models from the adaptively-collected data so that we can plug them in. Extensive experiments using 57 OpenML datasets demonstrate the failure of previous approaches and the success of ours at constructing confidence intervals with correct coverage.

% Say something about the structure of the paper.

% \paragraph{Motivation.}
% Data sets collected under statistical experiments were an intervention was sequentially and adaptively assigned to individuals are increasingly common in many fields.

% Web based services use adaptive intervention sampling in experimentation (adaptive AB tests) and in operations (open loop training of recommender systems for instance). Sequential adaptive designs are becoming more common in clinical trials [ref needed]. Various authors have proposed (contextual) bandit and online reinforcement learning methods for personalized medicine [cite Tewari and Murphy's paper on CBs]. 

% It is often of interest to perform off-policy inference from such data sets, that is to estimate and provide confidence intervals for the mean value of a counterfactual intervention.
% For example, given data collected from the interactions of a recommender system with a set of users, one might wonder whether a more complex recommender system would bring more value. In an adaptive clinical trial, one typically wants perform inference for the average treatment effect, that is to contrast the counterfactual treatment rule that assigns the treatment of interest to everyone v.s. the counterfactual treatment rule that assigns the standard of care to everyone. 

% \paragraph{Problem statement.} 

\subsection{Problem Statement and Notation}

% Our data consists of $T$ observations, $O(1),\dots,O(T)$, of context, action, and outcome, $O(t):=(X(t), A(t), Y(t))\in\mathcal X\times\mathcal A\time\mathbb R$, generated by an adaptive experiment such as a contextual bandit algorithm. The action space $\mathcal A$ may be general
% Namely, letting $Y_a(t)$ denote the potential outcome

\subparagraph{The data.} Our data consists of a sequence of observations indexed $t=1,\dots,T$ comprising of context $X(t)\in\X$, action $A(t)\in\A$, and outcome $Y(t)\in\Y\subset\mathbb R$ generated by an adaptive experiment, such as a contextual bandit algorithm.
Roughly, at each round $t=1,2,\dots,T$, an agent formed a contextual policy $g_t(a\mid x)$ based on all past observations, then observed an independently drawn context vector $X(t)\sim Q_{0,X}$, carried out an action $A(t)$ drawn from its current policy $g_t(\cdot\mid X(t))$, and observed an outcome $Y(t)\sim Q_{0,Y}(\cdot\mid A(t),X(t))$ depending only on the present context and action. 
The action and context measurable spaces $\X,\A$ are arbitrary, \eg, finite or continuous.

More formally, we let $O(t):=(X(t), A(t), Y(t))$ and make the following assumptions about the sequence $O(1),\dots,O(T)$ comprising our dataset. First, we assume $X(t)$ is independent of all else given $A(t)$ and has a time-independent marginal distribution that we denote by $Q_{0,X}$. Second, we assume $A(t)$ is independent of all else given $O(1),\dots,O(t-1),X(t)$ and we set $g_t(\cdot\mid X(t))$ to its (random) conditional distribution given $O(1),\dots,O(t-1),X(t)$. Third, we assume $Y(t)$ is independent of all else given $X(t),A(t)$ and has a time-independent conditional distribution given $X(t)=x,A(t)=a$ that is denoted by $Q_{0,Y}(\cdot\mid A,X)$. 
The distributions $Q_{0,X}$ and $Q_{0,Y}$ are unknown, while the policies $g_t(a\mid x)$ are known, as would be the case when running an adaptive experiment.
To simplify presentation we endow $\A$ with a base measure $\mu_\A$ (\eg, counting for finite actions or Lebesgue for continuous actions) and identify policies $g_t$ with conditional densities with respect to (w.r.t.) $\mu_\A$. In the case of $K<\infty$ actions, policies are maps from $\X$ to the $K$-simplex.

Note that, as the agent updates its policy based on already collected observations, $g_t$ is a random $O(1),\ldots,O(t-1)$-measurable object. This is the major departure from the setting considered in other literature on off-policy evaluation, which only consider a fixed logging policy, $g_t=g$, that is independent of the data. See \cref{sec:litreview}.

% \subparagraph{Statistical experiment.} In this article, we consider more specifically the situation where the statistical experiment can be modelled by the stochastic contextual bandit setting. In the contextual bandit setting, an agent/experimenter interacts with an environment over successive rounds. At each round $t$, the agent observes a vector $X(t)$ of measurements characterizing the state of the environment, the \textit{context}, assigns an intervention/carries out an action $A(t)$, and gets a reward $Y(t)$ from the environment. Let $O(t):=(X(t), A(t), Y(t))$. We suppose that the contexts form an i.i.d. sequence of random variables with common marginal distribution $Q_{0,X}$, and that the conditional distribution of $Y(t)$ given the past $X(1),A(1),Y(1),\ldots,X(t), A(t)$ only depends on the latest context $X(t)$ and the latest treatment $A(t)$, and that this conditional distribution is common across time points. We denote $Q_{0,Y}$ this conditional distribution. We denote $g_t$ the policy carried out by the agent at time $t$, that is the conditional distribution such that $A(t) \sim g_t(\cdot \mid X(t))$. As the agent updates its policy based on already collected observations, $g_t$ is an $O(1),\ldots,O(t-1)$-measurable object.

\subparagraph{The target parameter.} 
We are interested in inference on a \emph{generalized average causal effect} expressed as a functional of the unknown distributions above, $\Psi_0=\Psi(Q_{0,X},Q_{0,Y})$, where for any distributions $Q_X,Q_{Y}$, we define 
\begin{align}\notag%\label{eq:target}
    \Psi(Q_{X},Q_{Y}) := \int y Q_X(dx)g(a\mid x)d\mu_\A(a)Q_Y(dy\mid a,x),
\end{align}
where $g^*(a\mid x):\A\times\X\to[-G,G]$ is a given fixed, bounded function.
Two examples are: (a) when $g^*$ is a policy (conditional density), then $\Psi_0$ is its value; (b) when $g^*$ is the difference between two policies then $\Psi_0$ is the difference between their values. A prominent example of the latter is when $\A=\{+1,-1\}$ and $g^*(a\mid x)=a$, which is known as the average treatment effect. If we include an indicator for $x$ being in some set, then we get the subgroup effect.
% 
% In particular, using counterfactual notation, whether potential outcomes or \emph{do}-calculus, we can interpret $\Psi_0$ in the first example as the expected average outcome if each individual in the population with covariates $X$ were intervened on with an action $A\sim g^*(\cdot\mid X)$ and in the second example the difference in average outcomes between if everyone is treated with $+1$ or with $-1$.
% 

Defining the conditional mean outcome,
$$
\bar{Q}_0(a,x):=E_{Q_{0,Y}(\cdot\mid x,a)}[Y]=\int yQ_{0,Y}(dy\mid a,x),
$$
we note that the target parameter only depends on $Q_{0,Y}$ via $\bar{Q}_{0}$, so we also overload notation and write $\Psi(Q_{X},\bar{Q})=\int \bar Q(a,x)Q_X(dx)g(a\mid x)d\mu_\A(a)$ for any function $\bar Q:\A\times \X\to\Y$.
Note that when $|\A|<\infty$ and $\mu_\A$ is the counting measure, the integral over $a$ is a simple sum.

\paragraph{Canonical gradient.} We will make repeated use of the following function:
for any conditional density $(a,x) \mapsto g(a \mid x)$, any probability distribution $Q_X$ over the context space $\mathcal{X}$, and any function $\bar{Q}:\mathcal{A} \times \mathcal{X}$, we define the function $D'(g, \bar{Q}):\mathcal O\to\mathbb R$ by
\begin{align}\notag
    D'(g, \bar{Q})(x,a,y) := \frac{g^*(a \mid x)}{g(a \mid x)}(y - \bar{Q}(a,x)) + \int \bar{Q}(a',x)g^*(a' \mid x) d\mu_\A(a').
\end{align}
Further, define $D(g, Q_X, \bar{Q})=D'(g, Q_X, \bar{Q})-\Psi(Q_X, \bar{Q})$, which coincides with the so-called canonical gradient of the target parameter $\Psi$ w.r.t. the usual nonparametric statistical model comprising all joint distributions over $\mathcal{O}$ \citep{van2000asymptotic,van2003unified}.

\paragraph{Integration operator notation.} 
% First we define some notation.
For any policy $g$ and distributions $Q_X,Q_Y$, denote by $P_{Q,g}$ the induced distribution on $\mathcal{O}$.
% For any marginal distribution $Q_X$ on $\mathcal{X}$, any conditional probability mass function $g$ over $\mathcal{A}$ given an element of $\mathcal{X}$, and any conditional distribution $Q_Y$ over $\mathcal{Y}$ given elements of $\mathcal{A} \times \mathcal{X}$, $P_{Q,g}$ the distribution induced over $\mathcal{O} := \mathcal{X} \times \mathcal{A} \times \mathcal{Y}$. 
For any function $f:\mathcal{O} \rightarrow \mathbb{R}$, we use the integration operator notation
\begin{align}\notag
    P_{Q,g}f = \int f(x,a,y) Q_X(dx) g(a \mid x)d\mu_\A(a) Q_Y(dy\mid a,x),
\end{align}
that is, the expectation w.r.t. $P_{Q,g}$ \emph{alone}.
Then, for example, for any $O(1),\ldots,O(s-1)$-measurable random function $f:\mathcal O\rightarrow\mathbb R$, we have that $P_{Q_0,g_s} f = E_{Q_0,g_s}[f(O(s)) \mid {O}(1),\ldots,O(s-1)]$.

% We are interested in inference on $\Psi_0$, the value of a given fixed policy $g^*(a\mid x)$. This can be represented as a functional of the unknown distributions above, $\Psi_0=\Psi(Q_{0,X},Q_{0,Y})$, where for any distributions $Q_X,Q_{Y}$, we define
% \begin{align}\notag%\label{eq:target}
%     \Psi(Q_{X},Q_{Y}) := E_{Q_{X}, g^*, Q_{Y}}[Y],
% \end{align}
% where the expectation is taken over $X\sim Q_{X},A\sim g^*(\cdot\mid X),Y\sim Q_{Y}(\cdot\mid A,X)$. In fact, letting
% $$
% \bar{Q}_0(a,x):=E_{Q_{0,Y}(\cdot\mid x,a)}[Y],
% $$
% we note that the target parameter only depends on $Q_{0,Y}$ via $\bar{Q}_{0}$, so we also overload notation and write $\Psi(Q_{0,X},\bar{Q}_{0})=E_{Q_{0,X}, g^*}[\bar{Q}_0(A,X)]$.
% Using counterfactual notation, whether potential outcomes or \emph{do}-calculus, we can also interpret $\Psi(Q_{0,X},Q_{0,Y})$ as the expected average outcome if each individual in the population with covariates $X$ were intervened on with an action $A\sim g^*(\cdot\mid X)$.

% \todo{We might want to change $g^*$ to any bounded function so that we can easily cover ATE, but then we cannot use expectation notation. Need to make a decision and update accordingly.}

\subsection{Related Literature and Challenges for Post-Contextual-Bandit Inference}\label{sec:litreview}

\paragraph{Off-policy evaluation.}
In non-adaptive settings, where $g_t=g$ is fixed and does not depend on previous observations, common off-the shelf estimators for the mean outcome under $g^*$ include the Inverse Propensity Scoring (IPS) estimator \citep{beygelzimer2009offset,li2011unbiased} and and the Doubly Robust (DR) estimator \citep{dudik2011doubly,robins1994estimation}:
\begin{align}\notag
    \widehat{\Psi}^{\mathrm{IPS}} := \frac{1}{T} \sum_{t=1}^T D'(g,0)
    % \frac{1}{T} \sum_{t=1}^T \frac{g^*(A(t) \mid X(t)}{g_t(A(t) \mid X(t))} Y(t)
    ,
    \qquad
% \end{align}
% \begin{align}
% \notag
    \widehat{\Psi}^{\mathrm{DR}} := \frac{1}{T} \sum_{t=1}^T D'(g,\widehat{\bar Q})
    % \frac{g^*(A(t) \mid X(t)}{g_t(A(t) \mid X(t))} \left(Y(t) - \widehat{\bar{Q}}_{t-1}(A(t), X(t))\right) + \int g^*(a \mid X(t)) \widehat{\bar{Q}}_{t-1}(a, X(t))d\mu_\A(a),
\end{align}
where $\widehat{\bar{Q}}$ is an estimator of the outcome model $\bar{Q}_0(a,x)$. If we use cross-fitting to estimate $\widehat{\bar{Q}}$ \citep{chernozhukov2017double}, then both the IPS and DR estimators are unbiased and asymptotically normal, permitting straightforward inference using Wald confidence intervals (\ie, $\pm1.96$ of the estimated standard error).
There also exist many variants of the IPS and DR estimators that, rather than plugging in the importance sampling (IS) ratios $(g^* / g_t)(A(t) \mid X(t))$ and/or outcome-model estimators, instead choose them directly with the aim to minimize error \citep[\eg][]{kallus2018balanced,farajtabar2018more,thomas2016data,wang2017optimal,kallus2019intrinsically}.

\paragraph{Inference challenges in adaptive settings.}
In the adaptive setting, it is easy to see that, if in the $t$th term for DR we use an outcome model $\widehat{\bar{Q}}_{t-1}$ fit using only the observations $O(1),\ldots,O(t-1)$, then both the IPS and DR estimators both remain unbiased. However, neither generally converges to a normal distribution.
One key difference between the non-adaptive and adaptive settings is that the IS ratios $(g^* / g_t)(A(t) \mid X(t))$ can both diverge to infinity or converge to zero. As a result of this, the above two estimators may either be dominated by their first terms or their last terms. At a more theoretical level, this violates the classical condition of martingale central limit theorems that the conditional variance of the terms given previous observations stabilizes asymptotically. 

% Even though these estimators do not converge to a normal distribution, confidence intervals might still be obtained using Bernstein's inequality for martingale or empirical Bernstein's inequalities \textit{[cite ref or do it in the appendix and refer to it].} The caveat of this approach is that concentration inequalities yield overly conservative confidence intervals \textit{[refer to a simulation or some existing paper]}.

\paragraph{Stabilized DR estimators in non-contextual settings.} The issue for inference due to instability of the DR estimator terms was recognized by \citet{luedtke_vdL2016} in another setting. They work in the non-adaptive setting but consider the problem of inferring the maximum mean outcome over all policies when the optimal policy is non-unique. Their proposal is a so-called \textit{stabilized estimator}, in which each term is inversely weighted by an estimate of its conditional standard deviation given the previous terms. This stabilization trick has been also been reused for off-policy inference from \emph{non-contextual} bandit data by \citet{hadad2019confidence}, as the stabilized estimator remains asymptotically normal, permitting inference. In their non-contextual setting, an estimate of the conditional standard deviation of the terms can easily be obtained by the inverse square root propensities. In contrast, in our \emph{contextual} setting, obtaining valid stabilization weights is more challenging and requires a construction involving adaptive training on past data.

\subsection{Contributions} In this paper, we construct and analyze a stabilized estimator for policy evaluation from context-dependent adaptively collected data, such as the result of running a contextual bandit algorithm. This then immediately enables inference. After constructing a generic extension of the stabilization trick, the main technical challenge is to construct a sequence of estimators $\widehat{\sigma}_1,\ldots, \widehat{\sigma}_T$ of the conditional standard deviations that are both consistent and such that for each $t$, $\widehat{\sigma}_t$ only uses the previous data points $O(1),\ldots,O(t-1)$. We show in extensive experiments across a large set of contextual bandit environments that our confidence intervals uniquely achieve close to nominal coverage.

\section{Construction and Analysis of the Generic Contextual Stabilized Estimator}

In this section, we give a generic construction of a stabilized estimator in our contextual and adaptive setting. That is, given generic plug-ins for outcome model and conditional standard deviation. We then provide conditions under which the estimator is asymptotically normal, as desired. To develop CADR, we will then proceed to construct appropriate plug in estimators in the proceeding sections.

\subsection{Construction of the Estimator}

% We first introduce the various ingredients we use and then define the stabilized estimator. 

\paragraph{Outcome and variance estimators.} Our estimator uses a sequence $(\widehat{\bar{Q}}_t)_{t \geq 1}$ of estimators of the outcome model $\bar{Q}_0$, such that, for every $t$, $\widehat{\bar{Q}}_t$ is $O(1),\ldots,O(t)$-measurable, that is, is trained using \emph{only} the data up to time $t$. 
% For any $t \geq 1$, let $Q_{X,t}$ be the empirical probability distribution given by the context observations $X(1),\ldots,X(t)$, that is, $Q_{X,t} = t^{-1} \sum_{s=1}^t \delta_{X(s)}$, where $\delta_x$ is the Dirac measure at $x$. The plug-in (aka Direct Method) estimator derived from $Q_{X,t}$ and $\widehat{\bar{Q}}_t$ is defined as $\Psi(Q_{X,t},\widehat{\bar{Q}}_t)$.
% 
% \paragraph{Conditional variance estimators.}
% For any policy (conditional density) $g(a \mid x)$, let
% 
% \paragraph{Conditional variance estimators.} 
A key part of our estimator are the conditional variance estimators.

Additionally, we require estimates of the conditional standard deviation of the canonical gradient.
Define
\begin{align}\notag
\sigma_{0,t}&:= \sigma_{0,t}(g_t),\\\text{where}~~
    \sigma_{0,t}^2(g) &:= \Var_{Q_0,g}\left( D'(g, \widehat{\bar{Q}}_{t-1})(O(t)) \mid O(1),\ldots,O(t-1) \right).
\end{align}
Let $(\widehat{\sigma}_t)_{t \geq 1}$ be a given sequence of estimates of $\sigma_{0,t}$ such that $\widehat{\sigma}_t$ is $O(1),\ldots,O(t-1)$-measurable, that is, is estimated using \emph{only} the data up to time $t$. 
% be the true conditional variance of the canonical gradient at $(Q_{X,t-1}, g, \widehat{\bar{Q}}_{t-1})$ given $O(1),\ldots,O(t-1)$, 
% and let $$. 

\paragraph{The generic form of the estimator.} The generic contextual stabilized estimator is then defined as:
\begin{align}\label{eq:stabilized-one-step}
    \widehat{\Psi}_T := \left( \frac{1}{T} \sum_{t=1}^T \widehat{\sigma}_t^{-1} \right)^{-1} \frac{1}{T} \sum_{t=1}^T \widehat{\sigma}_t^{-1} D'(g,\widehat{\bar Q}_{t-1})
    % \left( \Psi(Q_{X,t-1}, \widehat{\bar{Q}}_{t-1}) + D(g_t, Q_{X,t-1}, \widehat{\bar{Q}}_{t-1})(O(t)) \right)
    .
\end{align}
% \nk{Need to define $Q_{X,0}$. Also: can be a bit confusing as it's different from $Q_{0,X}$. We should add hats to $Q_{X,t}$.}

\subsection{Asymptotic normality guarantees}

We next characterize the asymptotic distribution of $\widehat{\Psi}_T$ under some assumptions.

% \begin{assumption}[Non-degenerate outcome variance.]\label{assumption:non_degenerate_variance} For any $(X,A,Y) \sim P_{Q_0,g^*}$,
% \begin{align}
% \mathrm{Pr}_{Q_{0,X},g^*}\left[\Var_{Q_{0,Y}}\left( Y\mid A,X \right)=0\right] < 1.
% \end{align}
% \end{assumption}
% Assumption \ref{assumption:non_degenerate_variance} states that, under $P_{Q_0,g^*}$, there is non-zero probability mass stratum of $\mathcal{A} \times \mathcal{X}$ where $Q_{0,Y}$ has nontrivial variance.

% \begin{assumption}[Lower on the inverse IS ratios]
% It holds that $\sup_{t \geq 1} \|g^* / g_t\|_\infty < \infty$. 
% \end{assumption}

\begin{assumption}[Non degenerate efficiency bound]\label{assumption:non_degenerate_EB}
% It holds that 
% \begin{align}
$
    \inf_g P_{Q_0,g} D^2(g, \bar{Q}_0, Q_{0,X}) > 0.
$
% \end{align}
\end{assumption}
Assumption \ref{assumption:non_degenerate_EB} states that there is no fixed logging policy $g$ such that the efficiency bound for estimation of $\Psi(\bar{Q}_0, Q_{0,X})$ in the nonparametric model, from i.i.d. draws of $P_{Q_0,g}$, is zero. If assumption \ref{assumption:non_degenerate_EB} does not hold, there exists a logging policy $g$ such that, if $O=(X,A,Y) \sim P_{Q_0,g}$, then $(g^*(A\mid X) / g(A \mid X)) Y$ equals $\Psi(\bar{Q}_0, Q_{0,X})$ with probability 1. In other words, if assumption  \ref{assumption:non_degenerate_EB} does not hold, there exists a logging policy $g$ such that $\Psi(\bar{Q}_0, Q_{0,X})$ can be estimated with no error with probability 1 from a single draw of $P_{Q_0,g}$. 
Thus, it is very lax.
An easy sufficient condition for \cref{assumption:non_degenerate_EB} is that the outcome model has nontrivial variance in that $\Var_{Q_{0,X}}(\int \bar{Q}(a,X)g^*(a \mid x) d\mu_\A(a))>0$.

\begin{assumption}[Consistent standard deviation estimators.]\label{assumption:stdev_est_consistency} $\widehat{\sigma}_t - \sigma_{0,t} \xrightarrow{t\to\infty}0$ almost surely.
\end{assumption}
In the next section we will proceed to construct specific estimators $\widehat{\sigma}_t$ that satisfy \cref{assumption:stdev_est_consistency}, leading to our proposed CADR estimator and confidence intervals.

\begin{assumption}[Exploration rate]\label{assumption:exp_rate_generic_stab_one_step} 
% There exists $c>0$ such that, for any $t \geq 1$,
For any $t\geq1$, we have that
$\inf_{a \in \mathcal{A},x \in \mathcal{X}} g_t(a \mid x) \gtrsim t^{-1/2}$ almost surely.
% \nk{add a constant: $ct^{-1/2}$?}
\end{assumption}
Here, $a_t\gtrsim b_t$ means that for some constant $c>0$, we have $a_t\geq cb_t$ for all $t\geq1$.
\Cref{assumption:exp_rate_generic_stab_one_step} requires that the exploration rate of the adaptive experiment does not decay too quickly.

Based on these assumptions, we have the following asymptotic normality result:
\begin{theorem}\label{thm:asymp_normal_stab_one_step}
Denote $\Gamma_T := \left( T^{-1} \sum_{t=1}^T \widehat{\sigma}_t^{-1} \right)^{-1}$.
Under \cref{assumption:non_degenerate_EB,assumption:stdev_est_consistency,assumption:exp_rate_generic_stab_one_step}, it holds that 
\begin{align}\notag
    \Gamma_T^{-1} \sqrt{T} \left( \widehat{\Psi}_T - \Psi_0 \right) \xrightarrow{d} \mathcal{N}(0,1).
\end{align}
\end{theorem}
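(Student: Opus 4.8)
\textbf{Step 1 (Reduction to a martingale sum).} Since $\Gamma_T^{-1}=T^{-1}\sum_{t=1}^T\widehat\sigma_t^{-1}$ and $\widehat\Psi_T=\Gamma_T\,T^{-1}\sum_{t=1}^T\widehat\sigma_t^{-1}D'(g_t,\widehat{\bar Q}_{t-1})(O(t))$, one has the exact algebraic identity
\begin{align}\notag
\Gamma_T^{-1}\sqrt T\bigl(\widehat\Psi_T-\Psi_0\bigr)=\frac1{\sqrt T}\sum_{t=1}^T\xi_t,\qquad \xi_t:=\widehat\sigma_t^{-1}\bigl(D'(g_t,\widehat{\bar Q}_{t-1})(O(t))-\Psi_0\bigr),
\end{align}
so the random normalization on the left cancels and it suffices to prove a central limit theorem for $T^{-1/2}\sum_{t=1}^T\xi_t$. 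With $\mathcal F_t:=\sigma(O(1),\dots,O(t))$, the objects $g_t$, $\widehat{\bar Q}_{t-1}$ and $\widehat\sigma_t$ are $\mathcal F_{t-1}$-measurable, and the canonical-gradient identity $P_{Q_0,g}D'(g,\bar Q)=\Psi_0$ holds for every fixed density $g$ with $\inf g>0$ and every $\bar Q$ (from the integration-operator notation: the $(g^*/g)(y-\bar Q)$ term integrates to $\Psi(Q_{0,X},\bar Q_0)-\Psi(Q_{0,X},\bar Q)$ and the remaining term integrates to $\Psi(Q_{0,X},\bar Q)$, the two summing to $\Psi_0$). Applying this with $g=g_t$, $\bar Q=\widehat{\bar Q}_{t-1}$ and using that $P_{Q_0,g_t}f=E[f(O(t))\mid\mathcal F_{t-1}]$ gives $E[\xi_t\mid\mathcal F_{t-1}]=0$, so $\{\xi_t/\sqrt T,\mathcal F_t\}_{t\le T}$ is a martingale-difference triangular array. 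The plan is then to invoke a triangular-array martingale central limit theorem, which leaves two conditions to verify.

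\textbf{Step 2 (Conditional-variance condition).} By the definition of $\sigma_{0,t}^2$ and $\mathcal F_{t-1}$-measurability of $\widehat\sigma_t$, $E[\xi_t^2\mid\mathcal F_{t-1}]=\sigma_{0,t}^2/\widehat\sigma_t^2$, so $\sum_{t\le T}E[(\xi_t/\sqrt T)^2\mid\mathcal F_{t-1}]=T^{-1}\sum_{t=1}^T\sigma_{0,t}^2/\widehat\sigma_t^2$, and it suffices to show this tends to $1$. I would first establish that $\sigma_{0,t}$ is bounded below uniformly in $t$: the law of total variance isolates the outcome-noise contribution $\int\!\!\int\frac{g^*(a\mid x)^2}{g_t(a\mid x)}\Var_{Q_{0,Y}}(Y\mid a,x)\,d\mu_\A(a)\,Q_{0,X}(dx)\le\sigma_{0,t}^2$, and Cauchy--Schwarz with $\int g_t(a\mid x)\,d\mu_\A(a)=1$ bounds its inner integral below by $\bigl(\int|g^*(a\mid x)|\,d\mu_\A(a)\bigr)^2$, which is $g_t$-free; invoking \cref{assumption:non_degenerate_EB} (positivity of the $g$-infimum of, up to the nonnegative outcome-model term, exactly this expression) then yields $\sigma_{0,t}\ge\sigma_->0$ for all $t$. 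Combined with \cref{assumption:stdev_est_consistency}, this gives $\widehat\sigma_t/\sigma_{0,t}\to1$ a.s., hence $\sigma_{0,t}^2/\widehat\sigma_t^2\to1$ a.s., and a Ces\`aro/Toeplitz argument delivers $T^{-1}\sum_{t=1}^T\sigma_{0,t}^2/\widehat\sigma_t^2\to1$ a.s., a fortiori in probability.

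\textbf{Step 3 (Conditional Lindeberg condition --- the crux).} It remains to show $T^{-1}\sum_{t=1}^T E[\xi_t^2\,\mathbf 1\{\xi_t^2>\epsilon^2T\}\mid\mathcal F_{t-1}]\xrightarrow{p}0$ for every $\epsilon>0$, and here \cref{assumption:exp_rate_generic_stab_one_step} enters essentially. Writing $W(t):=g^*(A(t)\mid X(t))/g_t(A(t)\mid X(t))$ for the importance ratio, the exploration lower bound gives the \emph{deterministic} cap $|W(t)|\le(G/c)\,t^{1/2}$, whence (using boundedness of $Y$ and of $\widehat{\bar Q}_{t-1}$, together with the lower bound on $\widehat\sigma_t$ from Step~2) $|\xi_t|\lesssim|W(t)|+1$; moreover the conditional moments are controlled, $E[|W(t)|^{1+\delta}\mid\mathcal F_{t-1}]\lesssim t^{\delta/2}$ for $\delta\in[0,1]$, so that $P(|W(t)|>\tau\mid\mathcal F_{t-1})\lesssim\tau^{-1}$. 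The plan is to split the summand on $\{|W(t)|\le\tau_T\}$ versus $\{|W(t)|>\tau_T\}$ with $\tau_T\asymp\sqrt T$: on the first event $\xi_t^2/T<\epsilon^2$ so the contribution is identically zero, while on the second one bounds $E[\xi_t^2\,\mathbf 1\{|W(t)|>\tau_T\}\mid\mathcal F_{t-1}]$ by combining the conditional-moment control with the deterministic cap, then sums over $t\le T$ and divides by $T$. I expect the main obstacle to be precisely this balance: the exponent $-1/2$ in \cref{assumption:exp_rate_generic_stab_one_step} is what matches the two pieces --- a faster exploration decay pushes the cap $|W(t)|$ past the $\sqrt T$ threshold so that the small-increment event no longer covers the bulk of the mass, while the available moment control offsets exactly the $t^{1/2}$ growth that a $t^{-1/2}$ exploration floor permits --- so carrying it through requires a careful truncation argument rather than crude moment bounds.

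\textbf{Step 4 (Conclusion).} With the conditional-variance and conditional-Lindeberg conditions verified, the triangular-array martingale central limit theorem yields $T^{-1/2}\sum_{t=1}^T\xi_t\xrightarrow{d}\mathcal N(0,1)$, which by Step~1 is exactly the claimed $\Gamma_T^{-1}\sqrt T(\widehat\Psi_T-\Psi_0)\xrightarrow{d}\mathcal N(0,1)$.
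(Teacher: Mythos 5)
Your skeleton coincides with the paper's: the identity $P_{Q_0,g}D'(g,\bar Q)=\Psi_0$ turns $\Gamma_T^{-1}\sqrt T(\widehat\Psi_T-\Psi_0)$ into a martingale-difference triangular array with conditional variances $\sigma_{0,t}^2/\widehat\sigma_t^2$, and your Steps 1, 2 (consistency of $\widehat\sigma_t$ plus Ces\`aro) and 4 are exactly the paper's argument. The first genuine gap is in your lower bound on $\sigma_{0,t}$. You keep only the outcome-noise component $\int\!\!\int \frac{g^*(a\mid x)^2}{g_t(a\mid x)}\Var_{Q_{0,Y}}(Y\mid a,x)\,d\mu_\A(a)\,Q_{0,X}(dx)$ and then invoke \cref{assumption:non_degenerate_EB}; but $P_{Q_0,g}D^2(g,\bar Q_0,Q_{0,X})$ equals that noise term \emph{plus} the $g$-free term $\Var_{Q_{0,X}}\bigl(\int \bar Q_0(a,X)g^*(a\mid X)d\mu_\A(a)\bigr)$, and the assumption only asserts positivity of the infimum of the \emph{sum}. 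The paper's own sufficient condition for \cref{assumption:non_degenerate_EB} is positivity of the second term alone, which is compatible with $\Var_{Q_{0,Y}}(Y\mid a,x)\equiv 0$; in that case your bound is vacuous while the theorem's hypotheses hold. (Also, your Cauchy--Schwarz step should retain the factor $\sqrt{\Var_{Q_{0,Y}}(Y\mid a,x)}$ inside the integral.) The conditional-mean-variance part you discard is evaluated at $\widehat{\bar Q}_{t-1}$, not $\bar Q_0$, so it cannot be identified with the assumed quantity without an extra argument; the paper closes this by decomposing $D'(g,\bar Q)-\Psi_0$ as $D(g,\bar Q_0,Q_{0,X})$ plus an element of $L_2^0(g)$, which are orthogonal in $L_2(P_{Q_0,g})$, giving $\sigma_0^2(g,\bar Q)\ge P_{Q_0,g}D^2(g,\bar Q_0,Q_{0,X})$ uniformly in $\bar Q$ --- exactly the quantity \cref{assumption:non_degenerate_EB} bounds below. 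You need this (or an equivalent projection) step.

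The second gap is Step 3, which you flag as the crux but whose sketched truncation would in fact fail: by writing $|\xi_t|\lesssim |W(t)|+1$ you retain only $\widehat\sigma_t^{-1}=O(1)$ and discard the stabilization. Under exactly \cref{assumption:exp_rate_generic_stab_one_step} (take $\epsilon$-greedy with $\epsilon_t\asymp t^{-1/2}$ and a target arm that is rarely greedy) one has $E[W(t)^2\mid\mathcal F_{t-1}]\asymp t^{1/2}$ and the mass above any threshold $\tau_T\asymp\sqrt T$ is not negligible: for $t\ge cT$ the truncated conditional second moment of $W(t)$ is itself of order $t^{1/2}$, so $T^{-1}\sum_{t\geq cT} E[W(t)^2\Ind\{|W(t)|>\epsilon\sqrt T\}\mid\mathcal F_{t-1}]$ is of order $\sqrt T$; no Markov-type bound on the raw IS ratio can offset this. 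What rescues the Lindeberg condition is that $\sigma_{0,t}^2$ grows at the same rate as $E[W(t)^2\mid\mathcal F_{t-1}]$, so the \emph{normalized} terms have $E[\xi_t^2\mid\mathcal F_{t-1}]=\sigma_{0,t}^2/\widehat\sigma_t^2=O(1)$. The paper accordingly keeps $\widehat\sigma_t^{-1}$ inside the almost-sure bound, $|Z_{t,T}|\lesssim \widehat\sigma_t^{-1}\delta_t^{-1}T^{-1/2}$ with $\delta_t=\inf_{a,x}g_t(a\mid x)$, and argues that the Lindeberg indicators are eventually identically zero (this requires $\sup_{t\le T}\widehat\sigma_t^{-1}\delta_t^{-1}/\sqrt T\to 0$, i.e.\ effectively a small-$o$ reading of the exploration condition), so no moment or truncation calculus on $W(t)$ is needed at all. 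To repair your Step 3 you should abandon the unnormalized moment bounds and instead combine the deterministic bound on $\widehat\sigma_t^{-1}|W(t)|$ with the lower bound on $\sigma_{0,t}$ from Step 2, as in the paper.
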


\begin{remark}
Theorem 1 does not require the outcome model estimator to converge at all. As we will see in \cref{sec:variance-estimator}, our conditional variance estimator does require that the outcome model converges to a fixed limit $\bar{Q}_1$, but this limit does not have to be the true outcome model $\bar{Q}_0$. In other words, consistency of the outcome model is not required at any point of our analysis.
\end{remark}

\section{Construction of the Conditional Variance Estimator and CADR}
\label{sec:variance-estimator}

\begin{algorithm}[t!]\caption{The CADR Estimator and Confidence Interval}\label{alg:cadr}
\begin{algorithmic}
\STATE{\textbf{Input:} Data $O(1),\dots,O(T)$, policies $g_1,\dots,g_T$, target $g^*$, outcome regression estimator}
\FOR{$t = 1,2,\dots,T$}
\STATE{Train $\widehat{\bar Q}_{t-1}$ on $O(1),\dots,O(t-1)$ using the outcome regression estimator}
% \STATE{Set ${Q}_{X,t-1}$ to the empirical distribution of $X(1),\dots,X(t-1)$} \md{Is this needed here? It doesn't appear anywhere in the algo.}
\STATE{Set $D'_{t,s}=D(g_s,\widehat{\bar Q}_{t-1})(O(t))$ for $s=t,\dots,T$\hfill// (note index order compared to next line)} 
% \md{$s,t$ indexes inverted compared to line below; can be confusing.}
\STATE{Set $\widehat\sigma_t^2=\frac1{t-1}\sum_{s=1}^{t-1}\frac{g_t(A(s)\mid X(s))}{g_s(A(s)\mid X(s))}(D'_{s,t})^2-\left(\frac1{t-1}\sum_{s=1}^{t-1}\frac{g_t(A(s)\mid X(s))}{g_s(A(s)\mid X(s))}D'_{s,t}\right)^2$}
\ENDFOR
\STATE{Set $\Gamma_T=\left(\frac1T\sum_{t=1}^T\widehat\sigma_t^{-1}\right)^{-1}$}
\STATE{Return estimate $\widehat \Psi_T=\frac{\Gamma_T}T\sum_{t=1}^T\widehat\sigma_t^{-1}D'_{t,t}$ and confidence intervals $\operatorname{CI}_\alpha=[\widehat\Psi_T\pm \zeta_{1-\alpha/2}\Gamma_T/\sqrt{T}]$}
\end{algorithmic}
\end{algorithm}

We now tackle the construction of $\widehat \sigma_t$ satisfying our assumptions; namely, they must be adaptively trained only on past data at each $t$ and they must be consistent.
Observe that $\sigma^2_{0,t} = \sigma^2_0(g_t,\widehat{\bar{Q}}_{t-1})$, where we define
\begin{align}
\sigma^2_0(g, \bar{Q}) &:= \Phi_{0,1}(g,\bar{Q}) - (\Phi_{0,2}(g,\bar{Q}))^2,\\
    \Phi_{0,i}(g,\bar{Q}) &:= P_{Q_0,g} (D')^i(g,\bar{Q}),\,i=1,2.
\end{align}

Designing an $O(1),\ldots,O(t-1)$-measurable estimator of $\sigma_{0,t}^2$ presents several challenges. First, while we can only use observations $O(1),\ldots,O(t-1)$ to estimate it, $\sigma_{0,t}^2$ is defined as a function of integrals w.r.t. $P_{Q_0,g_t}$, from which we have only one observation, namely $O(t)$. Second, our estimation target $\sigma_{0,t}^2 = \sigma_0(g_t, \widehat{\bar{Q}}_{t-1})$ is \emph{random} as it depends on $g_t$ and $\widehat{\bar{Q}}_t$. Third, $g_t,\widehat{\bar{Q}}_t$ depend on the same observations $O(1),\ldots,O(t-1)$ that we have at our disposal to estimate $\sigma_{0,t}^2$.

\paragraph{Representation via importance sampling.} We can overcome the first difficulty via importance sampling, which allows us to write $\Phi_{0,i}(g,\bar{Q})$, $i=1,2$ as integrals w.r.t. $P_{Q_0,g_s}$, $s=1,\ldots,t-1$, \ie, the conditional distributions of observations $O(s)$, $s=1,\ldots,t-1$ given their respective past. Namely, for any $s \geq 1$, $i=1,2$, we have that
\begin{align}
    \Phi_{0,i}(g,\bar{Q}) = P_{Q_0,g_s} \frac{g}{g_s} (D')^i(g,\bar{Q}). \label{eq:IS_representation_components_cond_var}
\end{align}

\paragraph{Dealing with the randomness of the estimation target.}
We now turn to second challenge. Since $\sigma_{0,t}^2$ can be written in terms of $\Phi_{0,i}(g_t,\widehat{\bar{Q}}_{t-1})$ for $i=1,2$, 
\cref{eq:IS_representation_components_cond_var} suggests perhaps an approach based on sample averages of $(g_t / g_s) (D')^i(g_t,\widehat{\bar{Q}}_{t-1})$ over $s$. However, whenever $s < t$, the latter is an $O(1),\ldots,O(t-1)$-measurable function due to the dependence on $g_t$ and $\widehat{\bar{Q}}_t$.
% From \cref{eq:IS_representation_components_cond_var}, $\sigma_{0,t}^2$ can be written as a function of $P_{Q_0,g_s} \{(g_t / g_s) (D')^i(g_t,\widehat{\bar{Q}}_{t-1})\}$, for $i=1,2$. Whenever $s < t$, $\{(g_t / g_s) (D')^i(g_t,\widehat{\bar{Q}}_{t-1})\}$ is an $O(1),\ldots,O(t-1)$-measurable function due to the dependence of $g_t$ and $\widehat{\bar{Q}}_t$ on $O(1),\ldots,O(t-1)$, and therefore 
Namely, $P_{Q_0,g_s}\{ (g_t / g_s) (D')^i(g_t,\widehat{\bar{Q}}_{t-1})\}$ does not coincide in general with the conditional expectation $E_{Q_0,g_s} [((g_t / g_s) (D')^i(g_t,\widehat{\bar{Q}}_{t-1}))(O(s)) \mid \bar{O}(s-1)]$, as would arise from a sample average.
% This makes it harder to estimate it from sample averages. 
We now look at solutions to overcome this difficulty, considering first  $\widehat{\bar{Q}}_{t-1}$ and then $g_t$.
 % We treat first the challenge posed by the randomness of the outcome regression estimator $\widehat{\bar{Q}}_{t-1}$, and we then address the randomness of $g_t$.

\subparagraph{Dealing with the randomness of $\widehat{\bar{Q}}_{t-1}$.} We propose an estimator of $\sigma_0^2(g,\widehat{\bar{Q}}_{t-1})$ for any fixed $g$. While requiring that $\widehat{\bar{Q}}_{t-1}$ converges to the true outcome regression function $\bar{Q}_0$ is a strong requirement, most reasonable estimators will at least converge to some fixed limit $\bar{Q}_1$. As a result, under an appropriate stochastic convergence condition on $(\widehat{\bar{Q}}_{t-1})_{t \geq 1}$, $\Phi_{0,i}(g,\widehat{\bar{Q}}_{t-1})$ can be reasonably approximated by the corresponding Cesaro averages, defined for $i=1,2$ as 
\begin{align}
    \bar{\Phi}_{0,i,t}(g) :=& \frac{1}{t-1} \sum_{s=1}^{t-1} \Phi_{0,i}(g,\widehat{\bar{Q}}_{s-1}) 
    = \frac{1}{t-1} \sum_{s=1}^t E_{Q_0,g_s} \left[ ((g / g_s) (D')^i(g,\widehat{\bar{Q}}_{s-1}))(O(s)) \mid \bar{O}(s-1)\right].
\end{align}
These are easy to estimate from the corresponding sample averages, defined for $i=1,2$ as 
\begin{align}
    \widehat{\Phi}_{i,t}(g) := \frac{1}{t-1} \sum_{s=1}^t ((g/g_s) (D')^i(g,\widehat{\bar{Q}}_{s-1}))(O(s)),
\end{align}
since for each $i=1,2$, the difference $\widehat{\Phi}_{i,t}(g) - \bar{\Phi}_{0,i,t}(g)$ is the average of a martingale difference sequence (MDS).
We then define our estimator of $\sigma_0^2(g, \widehat{\bar{Q}}_{t-1})$ as \begin{align}\label{eq:varest}\widehat{\sigma}_t(g) := \widehat{\Phi}_{1,t}(g) - (\widehat{\Phi}_{2,t}(g))^2.\end{align}

\subparagraph{From fixed $g$ to random $g_t$.} So far, we have proposed and justified the construction of $\widehat{\sigma}_t(g)$ as an estimator of $\sigma_{0,t}(g, \widehat{\bar{Q}}_{t-1})$ for a fixed $g$. We now discuss conditions under which $\widehat{\sigma}_t(g_t)$ is valid estimator of $\sigma_{0,t}(g_t, \widehat{\bar{Q}}_{t-1})$. When $g$ is fixed, for each $i=1,2$, the error $\widehat{\Phi}_{i,t}(g) - \Phi_{0,i,t}(g, \widehat{\bar{Q}}_{t-1})$ decomposes as the sum of the MDS average $\widehat{\Phi}_{i,t}(g) - \bar{\Phi}_{0,i,t}(g)$ and of the Cesaro approximation error $\bar{\Phi}_{0,i,t}(g) - \Phi_{0,i}(g, \widehat{\bar{Q}}_{t-1})$. Both differences are straightforward to bound. For a random $g_t$, the term $\widehat{\Phi}_{i,t}(g_t) - \bar{\Phi}_{0,i,t}(g_t)$ is no longer an MDS average. Fortunately, under a complexity condition on the logging policy class $\mathcal{G}$, we can bound the supremum of the martingale empirical processes $\{|\widehat{\Phi}_{i,t}(g) - \bar{\Phi}_{0,i,t}(g)| : g \in \mathcal{G} \}$, which in turn gives us a bound on $|\widehat{\Phi}_{i,t}(g_t) - \bar{\Phi}_{0,i,t}(g_t)|$.

\paragraph{Consistency guarantee for $\widehat{\sigma}^2_t$.} Our formal consistency result relies on the following assumptions.

\begin{assumption}[Outcome regression estimator convergence]\label{assumption:outcome_regression_convergence} There exists $\beta > 0$, and a fixed function $\bar{Q}_1 : \mathcal{A} \times \mathcal{X} \rightarrow \mathbb{R}$ such that $\|\widehat{\bar{Q}}_t - \bar{Q}_1 \|_{1,Q_{0,X}, g^*} = O(t^{-\beta})$ almost surely.
\end{assumption}

The next assumption is a bound on the bracketing entropy (see, \eg, \citep{vdV_Wellner96} for definition) of the logging policy class. 
\begin{assumption}[Complexity of the logging policy class]\label{assumption:logging_policy_entropy}
% There exists a class of bounded conditional densities $\mathcal{G}\subseteq[\X\to[-G,G]]$ such that $g_t \in \mathcal{G}\,\forall t\geq1$ almost surely, and for some $p > 0$
% \begin{align}
%     \log N_{[\,]}(\epsilon, \mathcal{G}, \|\cdot\|_{2,Q_{0,X},\bar{g}}) \lesssim \epsilon^{-p}.
% \end{align}
There exists a class of conditional densities $\mathcal{G}$ such that $g_t \in \mathcal{G}\,\forall t\geq1$ almost surely, there exists $G > 0$ such that $\sup_{g \in \mathcal{G}} \|g / g^\rf\|_\infty \leq G$, and for some $p > 0$
\begin{align}
    \log N_{[\,]}(\epsilon, \mathcal{G} / g^\rf, \|\cdot\|_{2,Q_{0,X},g^\rf}) \lesssim \epsilon^{-p},
\end{align}
where $\mathcal{G} / g^\rf := \{ g / g^\rf : g \in \mathcal{G} \}$.
\end{assumption}

% The next assumption is a supremum norm bound on the logging policies, uniform over the policy class $\mathcal{G}$.

% \begin{assumption}[Uniform supremum norm bound on logging policies]\label{assumption:unif_sup_norm_bound_logging_policies}
% It holds that $\sup_{g \in \mathcal{G}} \|g\|_\infty < \infty$.
% \end{assumption}

% The next assumption is a lower bound on the uniform exploration rate of the logging policy.
Next, we require a condition on the exploration rate that is stronger than \cref{assumption:exp_rate_generic_stab_one_step}.
\begin{assumption}[Exploration rate (stronger)]\label{assumption:minimal_unif_exploration_rate}
% It holds that, 
For ant $t \geq 1$, we have that
$\inf_{a \in \mathcal{A}, x \in \mathcal{X}} g_t (a \mid x) / g^\rf(a \mid x) \gtrsim t^{-\alpha(\beta, p)}$ almost surely, where $\alpha(\beta, p) := \min(1/((3+p)), 1 / (1 + 2p), \beta)$.
\end{assumption}

% We can now state our consistency theorem for $\widehat{\sigma}_t$.
\begin{theorem}\label{thm:sd}
Suppose that \cref{assumption:outcome_regression_convergence,assumption:logging_policy_entropy,assumption:minimal_unif_exploration_rate} hold. Then, $\widehat{\sigma}_t^2 - \sigma_{0,t}^2 = o(1)$ almost surely.
\end{theorem}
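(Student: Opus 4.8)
The plan is to control the two conditional-moment building blocks $\widehat\Phi_{1,t}(g_t)$ and $\widehat\Phi_{2,t}(g_t)$ separately and recombine them through the variance identity. Recall that $\sigma_{0,t}^2 = \Phi_{0,2}(g_t,\widehat{\bar{Q}}_{t-1}) - \Phi_{0,1}(g_t,\widehat{\bar{Q}}_{t-1})^2$ and $\widehat\sigma_t^2 = \widehat\Phi_{2,t}(g_t) - \widehat\Phi_{1,t}(g_t)^2$, so that
\[
\widehat\sigma_t^2 - \sigma_{0,t}^2 = \bigl(\widehat\Phi_{2,t}(g_t) - \Phi_{0,2}(g_t,\widehat{\bar{Q}}_{t-1})\bigr) - \bigl(\widehat\Phi_{1,t}(g_t) - \Phi_{0,1}(g_t,\widehat{\bar{Q}}_{t-1})\bigr)\bigl(\widehat\Phi_{1,t}(g_t) + \Phi_{0,1}(g_t,\widehat{\bar{Q}}_{t-1})\bigr).
\]
By Neyman orthogonality of the canonical gradient, $\Phi_{0,1}(g,\bar{Q}) = P_{Q_0,g}D'(g,\bar{Q}) = \Psi_0$ for \emph{every} $g$ and $\bar{Q}$, so the first-moment population term is a bounded constant (and in particular $\bar\Phi_{0,1,t}(g)\equiv\Psi_0$). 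Hence it suffices to prove that, almost surely, $\widehat\Phi_{i,t}(g_t) - \Phi_{0,i}(g_t,\widehat{\bar{Q}}_{t-1}) = o(1)$ for $i=1,2$: the $i=2$ statement handles the first difference directly, and the $i=1$ statement makes $\widehat\Phi_{1,t}(g_t)$ eventually bounded and therefore forces the product term to vanish.

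Following the decomposition indicated in the text, I split $\widehat\Phi_{i,t}(g_t) - \Phi_{0,i}(g_t,\widehat{\bar{Q}}_{t-1}) = (\mathrm I)_{i,t} + (\mathrm{II})_{i,t}$, with martingale-empirical-process remainder $(\mathrm I)_{i,t} := \widehat\Phi_{i,t}(g_t) - \bar\Phi_{0,i,t}(g_t)$ and Ces\`aro approximation error $(\mathrm{II})_{i,t} := \bar\Phi_{0,i,t}(g_t) - \Phi_{0,i}(g_t,\widehat{\bar{Q}}_{t-1})$. The Ces\`aro term is routine: using the importance-sampling identity \eqref{eq:IS_representation_components_cond_var} and boundedness of $Y$, $g^*$ and the outcome-model iterates, one checks that $\bar{Q}\mapsto\Phi_{0,i}(g,\bar{Q})$ is Lipschitz in $\|\cdot\|_{1,Q_{0,X},g^*}$ with constant $O(1)$ for $i=1$ and $O\!\bigl(\sup_{a,x}(g^\rf/g)(a\mid x)\bigr)$ for $i=2$ — the blow-up for $i=2$ stemming from the squared importance weight in $(D')^2(g,\cdot)$, and being $O(t^{\alpha(\beta,p)})$ when $g=g_t$ by \cref{assumption:minimal_unif_exploration_rate}. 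Thus $(\mathrm{II})_{1,t}\equiv0$, while
\[
|(\mathrm{II})_{2,t}| \lesssim t^{\alpha(\beta,p)}\,\frac{1}{t-1}\sum_{s=1}^{t-1}\bigl(\|\widehat{\bar{Q}}_{s-1}-\bar{Q}_1\|_{1,Q_{0,X},g^*} + \|\widehat{\bar{Q}}_{t-1}-\bar{Q}_1\|_{1,Q_{0,X},g^*}\bigr),
\]
which by \cref{assumption:outcome_regression_convergence} and the elementary bound $\frac1t\sum_{s\le t}s^{-\beta} = O(t^{-\min(\beta,1)}\log t)$ is of order $t^{\alpha(\beta,p)-\beta}$ up to logarithms; since $\alpha(\beta,p)\le\beta$ and $\alpha(\beta,p)<1$, this tends to $0$ almost surely.

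The term $(\mathrm I)_{i,t}$ is the crux. For a \emph{fixed} policy $g$, $\widehat\Phi_{i,t}(g) - \bar\Phi_{0,i,t}(g)$ is the normalized partial sum of a martingale difference sequence and converges to $0$ a.s.\ by a martingale strong law; the difficulty is that we must evaluate it at the data-dependent $g_t$. I would therefore bound $|(\mathrm I)_{i,t}|$ by $\sup_{g\in\mathcal G}|\widehat\Phi_{i,t}(g) - \bar\Phi_{0,i,t}(g)|$, which is legitimate since $g_t\in\mathcal G$ a.s.\ by \cref{assumption:logging_policy_entropy}, and control this supremum with a maximal inequality for martingale empirical processes: a Freedman/Bernstein-type deviation bound for a single $g$, a chaining argument over $\mathcal G/g^\rf$ driven by the bracketing-entropy bound $\log N_{[\,]}(\epsilon,\mathcal G/g^\rf,\|\cdot\|_{2,Q_{0,X},g^\rf})\lesssim\epsilon^{-p}$, and then a Borel--Cantelli step to upgrade the rate to almost-sure convergence. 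What makes this delicate is that the martingale increments are \emph{unbounded}: both their envelope and their conditional second moments grow with the inverse exploration rate, i.e.\ polynomially in $t^{\alpha(\beta,p)}$ by \cref{assumption:minimal_unif_exploration_rate}. The exponent $\alpha(\beta,p) = \min\!\bigl(\tfrac1{3+p},\,\tfrac1{1+2p},\,\beta\bigr)$ is calibrated exactly so that this exploding-envelope martingale maximal inequality still yields an $o(1)$ bound — the $\tfrac1{1+2p}$ term balancing the variance contribution against the chaining entropy integral and the $\tfrac1{3+p}$ term balancing the envelope contribution. I expect this uniform-in-$g$ control of a martingale empirical process with diverging envelopes to be the main obstacle; the reduction in the first paragraph and the Ces\`aro bound are essentially bookkeeping. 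Substituting $(\mathrm I)_{i,t}=o(1)$ and $(\mathrm{II})_{i,t}=o(1)$ a.s.\ for $i=1,2$ into the displayed decomposition gives $\widehat\sigma_t^2 - \sigma_{0,t}^2 = o(1)$ almost surely.
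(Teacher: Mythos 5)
Your proposal is correct and takes essentially the same route as the paper: the same error decomposition into a martingale empirical-process term, bounded uniformly over $\mathcal{G}$ via a bracketing-entropy maximal inequality for importance-sampling-weighted martingale processes plus Borel--Cantelli (with the exponent $\alpha(\beta,p)$ absorbing the diverging envelopes and variances), and a Ces\`aro/approximation term bounded via Lipschitz continuity of $\bar{Q}\mapsto\Phi_{0,i}(g,\bar{Q})$ combined with the outcome-regression rate and the exploration-rate assumption. The only real deviations are cosmetic simplifications --- you exploit the exact identity $P_{Q_0,g}D'(g,\bar{Q})=\Psi_0$, which makes the first-moment Ces\`aro error vanish and replaces the paper's five-component integrand decomposition with an $a^2-b^2$ factorization --- and the boundary case $\alpha(\beta,p)=\beta$, where $t^{\alpha-\beta}$ is not $o(1)$, is glossed over in your argument exactly as it is in the paper's.
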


\begin{remark}
While we theoretically require the existence of a logging policy class $\mathcal{G}$ with controlled complexity, we do not actually need to know $\mathcal G$ to construct our estimator.
Moreover, while we require a bound on the bracketing entropy of the logging policy class $\mathcal{G}$, we impose no restriction on the outcome regression model complexity, permitting us to use flexible black-box regression methods.
\end{remark}

\begin{remark}
Assumption \ref{assumption:outcome_regression_convergence} requires $(\widehat{\bar{Q}}_t)$ to be a sequence of regression estimator, such that for every $t \geq 1$, $\widehat{\bar{Q}}_t$ is fitted on $O(1),\ldots,O(t)$ and for which we can guarantee a rate of convergence to some fixed limit $\bar{Q}_1$. Note that this can at first glance pose a challenge since observations $O(1),\ldots,O(t)$ are adaptively collected. 
% Literature on nonparametric estimation from adaptively collected data is relatively sparse at the time of writing. 
In the appendix, we give guarantees for outcome regression estimation over a nonparametric model using an
% , using a variant of Empirical Risk Minimization which we term 
importance sampling weighted empirical risk minimization.
\end{remark}

\paragraph{CADR asymptotics.}
Our proposed CADR estimator is now given by plugging our estimates $\widehat\sigma_t$ from \cref{eq:varest} into \cref{eq:stabilized-one-step}, as summarized in \cref{alg:cadr}
As an immediate corollary of \cref{thm:asymp_normal_stab_one_step,thm:sd} we have our main guarantee for this final estimator, showing CADR is asymptotically normal, whence we immediately obtain asymptotically valid confidence intervals.
\begin{corollary}[CADR Asymptotics and Inference]
Suppose that \cref{assumption:non_degenerate_EB,assumption:outcome_regression_convergence,assumption:logging_policy_entropy,assumption:minimal_unif_exploration_rate} hold.
Let $\widehat{\sigma}_t$ be given as in \cref{eq:varest}.
Denote $\Gamma_T := \left( T^{-1} \sum_{t=1}^T \widehat{\sigma}_t^{-1} \right)^{-1}$.
Then, $$\Gamma_T^{-1} \sqrt{T} \left( \widehat{\Psi}_T - \Psi_0 \right) \xrightarrow{d} \mathcal{N}(0,1).$$ Moreover, letting $\zeta_\alpha$ denote the $\alpha$-quantile of the standard normal distribution,
% of \cref{thm:asymp_normal_stab_one_step} is that
\begin{align}\notag
    \mathrm{Pr} \left[ \Psi(Q_{0,X}, \bar{Q}_0) \in \left[ \widehat{\Psi}_T  \pm {\zeta_{1-\alpha/2} \Gamma_T}/{\sqrt{T}}  \right] \right] \xrightarrow{T \to \infty} 1 - \alpha.
\end{align}
\end{corollary}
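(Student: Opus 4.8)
The plan is to read the statement off \cref{thm:asymp_normal_stab_one_step,thm:sd} after checking that the hypotheses assumed here imply theirs. First, \cref{assumption:outcome_regression_convergence,assumption:logging_policy_entropy,assumption:minimal_unif_exploration_rate} are exactly the hypotheses of \cref{thm:sd}, so that theorem applies verbatim and yields $\widehat{\sigma}_t^2 - \sigma_{0,t}^2 = o(1)$ almost surely, where $\widehat{\sigma}_t^2$ is the variance estimator of \cref{eq:varest} computed in \cref{alg:cadr} and $\sigma_{0,t}^2 = \sigma_0^2(g_t,\widehat{\bar{Q}}_{t-1})$ is precisely the conditional variance appearing in the generic framework of \cref{sec:variance-estimator} and \cref{thm:asymp_normal_stab_one_step}.

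Next I would upgrade this to the hypothesis \cref{assumption:stdev_est_consistency} needed for \cref{thm:asymp_normal_stab_one_step}, i.e.\ consistency of the standard deviations rather than the variances. This uses the elementary $1/2$-Hölder bound $|\sqrt{u}-\sqrt{v}| \le \sqrt{|u-v|}$ for $u,v \ge 0$: applying it with $u$ the nonnegative part of $\widehat{\sigma}_t^2$ and $v = \sigma_{0,t}^2 \ge 0$ (truncation at $0$ is harmless since projection onto $[0,\infty)$ is $1$-Lipschitz and $\sigma_{0,t}^2\ge 0$) gives $|\widehat{\sigma}_t - \sigma_{0,t}| \le \sqrt{|\widehat{\sigma}_t^2 - \sigma_{0,t}^2|} \xrightarrow{t\to\infty} 0$ almost surely, which is \cref{assumption:stdev_est_consistency}. \cref{assumption:non_degenerate_EB} is assumed here outright, and \cref{assumption:minimal_unif_exploration_rate} implies \cref{assumption:exp_rate_generic_stab_one_step} (it is strictly stronger, as noted in the text, since $\alpha(\beta,p) \le 1/(3+p) \le 1/3 < 1/2$). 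Finally, $\widehat{\sigma}_t$ is $O(1),\ldots,O(t-1)$-measurable — immediate from \cref{alg:cadr}, as it depends only on $g_t$ and on $\widehat{\bar{Q}}_{s-1}, O(s)$ for $s\le t-1$ — so it is an admissible stabilization sequence. Hence all three hypotheses of \cref{thm:asymp_normal_stab_one_step} hold, and since the CADR estimator $\widehat{\Psi}_T$ of \cref{alg:cadr} is exactly \cref{eq:stabilized-one-step} instantiated with these $\widehat{\sigma}_t$, that theorem gives $\Gamma_T^{-1}\sqrt{T}(\widehat{\Psi}_T - \Psi_0) \xrightarrow{d} \mathcal{N}(0,1)$, which is the first display of the corollary. (\cref{assumption:non_degenerate_EB} together with the consistency just established also ensures $\widehat{\sigma}_t$ is eventually bounded away from $0$ almost surely, so $\Gamma_T$ is well-defined and positive.)

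It then remains to convert the central limit theorem into the coverage statement, which is routine. Writing $Z_T := \Gamma_T^{-1}\sqrt{T}(\widehat{\Psi}_T - \Psi_0)$ and using $\Gamma_T > 0$ to rearrange, the event $\{\Psi(Q_{0,X},\bar{Q}_0) \in [\widehat{\Psi}_T \pm \zeta_{1-\alpha/2}\Gamma_T/\sqrt{T}]\}$ equals $\{|Z_T| \le \zeta_{1-\alpha/2}\}$. By the continuous mapping theorem $|Z_T| \xrightarrow{d} |\mathcal{N}(0,1)|$, and since $\zeta_{1-\alpha/2}$ is a continuity point of the limiting distribution function, $\Pr[|Z_T| \le \zeta_{1-\alpha/2}] \to \Pr[|\mathcal{N}(0,1)| \le \zeta_{1-\alpha/2}] = (1-\alpha/2) - \alpha/2 = 1 - \alpha$.

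The only step that is more than bookkeeping is the passage from consistency of $\widehat{\sigma}_t^2$ to consistency of $\widehat{\sigma}_t$: because the importance-weighted empirical ``variance'' in \cref{eq:varest} is not a genuine sample variance and could a priori be negative or vanish, the square-root argument above must be paired with the truncation and with the lower bound on $\sigma_{0,t}$ coming from \cref{assumption:non_degenerate_EB}; everything else is a direct instantiation of \cref{thm:asymp_normal_stab_one_step,thm:sd} plus a portmanteau argument.
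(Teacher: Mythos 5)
Your proposal is correct and follows the same route the paper takes, which treats the corollary as an immediate consequence of \cref{thm:sd} (giving \cref{assumption:stdev_est_consistency}) and \cref{thm:asymp_normal_stab_one_step}, followed by the standard conversion of asymptotic normality into coverage. Your extra care in passing from consistency of $\widehat{\sigma}_t^2$ to consistency of $\widehat{\sigma}_t$ (the H\"older bound plus truncation, using the lower bound on $\sigma_{0,t}$ from \cref{assumption:non_degenerate_EB}) is a detail the paper glosses over but is handled correctly.
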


% \begin{remark}
% \Cref{assumption:outcome_regression_convergence} requires a rate of convergence for an outcome regression estimator fitted on data collected adaptively under a contextual bandit sampling scheme. We are not aware of results giving such rates for non parametric outcome regression in the literature. In appendix, we show how to bound the excess risk of estimators obtained by what we term Importance Sampling Weighted Empirical Risk Minimization.
% \end{remark}

\section{Empirical Evaluation}
We next present computational results on public datasets that demonstrate the robustness of CADR confidence intervals using contextual bandit data with comparison to several baselines.
Our experiments focus on the case of finitely-many actions, $\A=\{1,\dots,K\}$.

\subsection{Baseline Estimators}
\label{sec:baselines}

We compare CADR to several benchmarks. All take the following form for a choice of $w_t,\omega_t,\widehat{\bar{Q}}_t$:
\begin{align}\notag
    \widehat{\Psi}_T &= \left( \frac{1}{T} w_t \right)^{-1} \frac{1}{T} \sum_{t=1}^T w_t \tilde D'_t,\quad \operatorname{CI}_\alpha=\left[\widehat\Psi_T\pm\zeta_{1-\alpha/2}\sqrt{\frac{\sum_{t=1}^Tw_t^2(\tilde D'_t-\widehat{\Psi})^2}{\left(\sum_{t=1}^Tw_t\right)^2}}\right],\\\text{where}~\tilde D'_t&=\omega_t(Y(t) - \widehat{\bar{Q}}_{t-1}(A(t),X(t))) + \sum_{a=1}^K \widehat{\bar{Q}}_{t-1}(a,X(t))g^*(a \mid X(t))
    .
\end{align}
The Direct Method (DM) sets $w_t=1,\omega_t=0$ and fits $\hat{\bar{Q}}_{t-1}(a,\cdot)$ by running some regression method for each $a$ on the data $\{(X(s),Y(s)):1 \leq s \leq t-1, A(s) = a\}$. We will use either linear regression or decision-tree regression, both using default \verb|sklearn| parameters.
% solving the least squares regression problem $\hat{\bar{Q}}_{t-1}(a; \cdot) = \argmin_{Q \in \mathcal{Q}} \sum_{1 \leq s \leq t-1: a_s = a}(Y(s) - Q(X(s)))^2$.
Note that even in non-contextual settings, where $\hat{\bar{Q}}_{t-1}$ is a simple per-arm sample average, $\hat{\bar{Q}}_{t-1}$ may be biased due to adaptive data collection \citep{xu2013estimation,luedtke_vdL2016,bowden2017unbiased,nie2018adaptively,hadad2019confidence,shin2019bias}.
Inverse Propensity Score Weighting (IPW) sets $w_t=1,\omega_t=(g^*/g_t)(A(t) \mid X(t)),\hat{\bar{Q}}_{t}=0$. Doubly Robust (DR) sets $w_t=1,\omega_t=(g^*/g_t)(A(t) \mid X(t))$ and fits $\hat{\bar{Q}}_{t-1}$ as in DM. 
More Robust Doubly Robust (MRDR) \citep{farajtabar2018more} is the same as DR but when fitting $\hat{\bar{Q}}_{t-1}$ we reweight each data point by $\frac{g^*(A(s) | X(s))(1 - g_s(A(s) | X(s)))}{g_s(A(s) | X(s))^2}$.
None of the above are generally asymptotically normal under adaptive data collection \citep{hadad2019confidence}.
Adaptive Doubly Robust (ADR; a.k.a. stabilized one-step estimator for multi-armed bandit data) \citep{luedtke_vdL2016,hadad2019confidence} is the same as DR but sets $w_t=g^{-1/2}_t(A(t) | X(t))$. ADR is unbiased and asymptotically normal for multi-armed bandit logging policies but is biased for context-measurable adaptive logging policies, which is the focus of this paper. Finally, note that our proposal CADR takes the same form as DR but with $w_t=\widehat{\sigma}_t^{-1}$ using our adaptive conditional standard deviation estimators $\widehat{\sigma}_t$ in \cref{eq:varest}.% (see \cref{sec:variance-estimator}).

\subsection{Contextual Bandit Data from Multiclass Classification Data} \label{sec:multiclass} 
To construct our data, we turn $K$-class classification tasks into a $K$-armed contextual bandit problems \citep{dudik2014doubly,dimakopoulou2017estimation,su2019cab}, which has the benefits of reproducibility using public datasets and being able to make uncontroversial comparisons using actual ground truth data with counterfactuals. We use the public OpenML Curated Classification benchmarking suite 2018 (OpenML-CC18; BSD 3-Clause license) \citep{bischl2017openml}, which has datasets that
vary in domain, number of observations, number of classes and number of features. Among these, we select the classification datasets which have less than 100 features. This results in 57 classification datasets from OpenML-CC18 used for evaluation and \cref{tab:openml} summarizes the characteristics of these datasets.

Each dataset is a collection of pairs of covariates $X$ and labels $L\in\{1,\dots,K\}$. We transform each dataset to the contextual bandit problem as follows. At each round, we draw $X(t),L(t)$ uniformly at random with replacement from the dataset. We reveal the context $X(t)$ to the agent, and given an arm pull $A(t)$, we draw and return the reward $Y(t) \sim \mathcal{N}(\textbf{1}\{A(t) = L(t)\}, 1)$. To generate our data, we set $T=10000$ and use the following $\epsilon$-greedy procedure. We pull arms uniformly at random until each arm has been pulled at least once. Then at each subsequent round $t$, we fit $\widehat{\bar Q}_{t-1}$ using the data up to that time in the same fashion as used for the DM estimator above using decision-tree regressions. We set $\tilde A_x(t)=\argmax_{a=1,\dots,K}\widehat{\bar Q}_{t-1}(a,X(t))$ and $\epsilon_t=0.01 \cdot t^{-1/3}$. We then let $g_t(a\mid x)=\epsilon_t/K$ for $a\neq \tilde A_x(t)$ and $g_t(\tilde A_x(t)\mid x)=1-\epsilon_t+\epsilon_t/K$. That is, with probability $\epsilon_t$ we pull a random arm, and otherwise we pull $\tilde A_{X(t)}(t)$.

\begin{table}[t!]
\centering
\begin{tabular}{|c|c|}
\hline
Samples & Count \\
\hline
$< 1000$  & 17  \\ 
\hline
$\geq 1000$ and $< 10000$ & 30  \\ 
\hline
$\geq 10000$ & 10 \\
\hline
\end{tabular}
\hspace{5pt}
\begin{tabular}{|c|c|}
\hline
Classes & Count \\
\hline  
$= 2$ & 31 \\
\hline
$> 2 \text{ and } < 10$  & 17  \\ 
\hline
$ \geq 10 $ & 9  \\
\hline
\end{tabular}
\hspace{5pt}
\begin{tabular}{|c|c|}
\hline
Features & Count \\
\hline  
$\geq 2 \text{ and } < 10$ & 14 \\
\hline
$\geq 10 \text{ and } < 50$  & 34  \\ 
\hline
$\geq 50 \text{ and } \leq 100$ & 9  \\
\hline
\end{tabular}
\vspace{5pt}
\caption{Characteristics of the 57 OpenML-CC18 datasets used for evaluation.}
\label{tab:openml}
% \vspace{-\baselineskip}
\end{table}
\begin{figure*}[t!]
\centering
\includegraphics[width=1\linewidth]{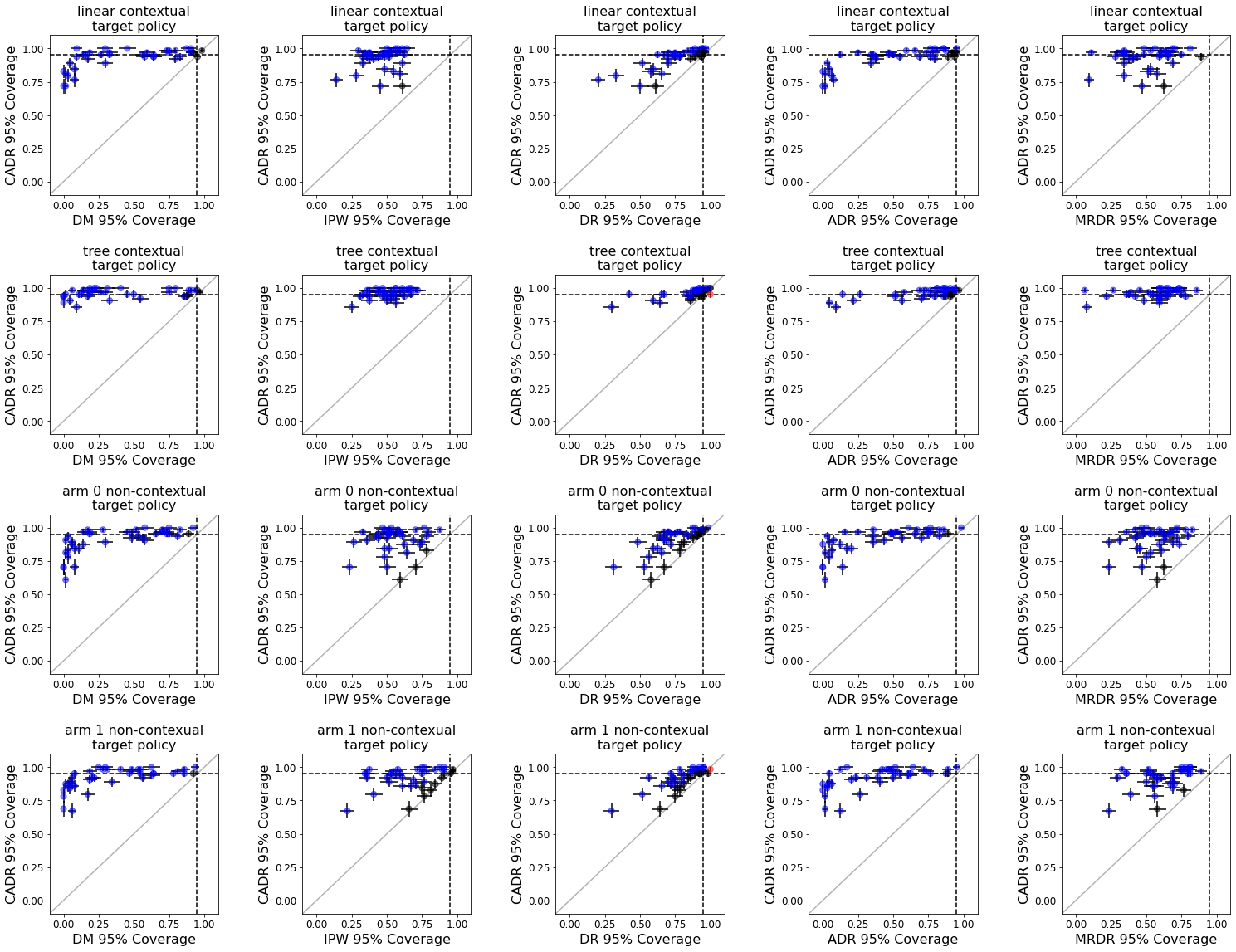}
\caption{Comparison of CADR estimator against DM, IPW, DR, ADR and MRDR w.r.t. 95\% confidence interval coverage on 57 OpenML-CC18 datasets and 4 target policies.}
\label{fig:coverage}
\end{figure*}

We then consider four candidate policies to evaluate:
% \begin{itemize}[leftmargin=0.4cm,,labelindent=0in,itemsep=0ex,partopsep=0ex,parsep=0ex,topsep=0ex]
(1) ``arm 1 non-contextual": $g^*(1 \mid x) = 1$ and otherwise $g^*(a \mid x) = 0$ (note that the meaning of label ``1'' changes by dataset),
(2) ``arm 2 non-contextual": $g^*(2 \mid x) = 1$ and otherwise $g^*(a \mid x) = 0$,
(3) ``linear contextual": we sample a \emph{new} dataset of size $T$ using a uniform exploration policy, then fit $\widehat{\bar Q}_{T}$ as above using linear regression, fix $a^*=\argmax_{a \in \{1, \dots, K\}}\widehat{\bar Q}_{T}(a,x)$, and set $g^*(a^*\mid x)=1$ and otherwise $g^*(a \mid x) = 0$,
(4) ``tree contextual": same as ``linear contextual" but fit $\widehat{\bar Q}_{T}$ using decision-tree regression.

\subsection{Results}

\Cref{fig:coverage} shows the comparison of CADR estimator against DM, IPW, DR, ADR, and MRDR w.r.t. coverage, that is, the frequency over 64 replications of the 95\% confidence interval covering the true $\Psi_0$, for each of the 57 OpenML-CC18 datasets and 4 target policies. In each subfigure, each dot represents a dataset, the $y$-axis corresponds to the coverage of the CADR estimator and the $x$-axis corresponds to the coverage of one of the baseline estimators. The lines represent one standard error over the 64 replications. The dot is depicted in blue if for that dataset CADR has significantly better coverage than the baseline estimator, in red if it has significantly worse coverage, and in black if the difference in coverage of both estimators is within one standard error. 
% Results are averaged over 64 simulations for each of the 57 datasets. The standard errors are depicted and the 95\% coverage targets are shown in dashed lines. 
In \cref{fig:coverage}, outcome models for CADR, DM, DR, ADR, and MRDR are fit using linear regression (with default \verb|sklearn| parameters). In the appendix, we provide additional empirical results where we use decision-tree regressions, or where we use the MRDR outcome model for CADR, or where we use cross-fold estimation across time.

Across all of our experiments,
we observe that the confidence interval of CADR has better coverage of the ground truth than any other baseline, which can be attributed to its asymptotic normality. The second best estimator in terms of coverage is DR. The advantages of CADR over DR are most pronounced when either (a) there is a mismatch between the logging policy and the target policy (\eg, compare the 1st and 2nd rows in \cref{fig:coverage}; the tree target policy is most similar to the logging policy, which also uses trees) or (b) when the outcome model is bad (either due to model misspecification such as with a linear model on real data or due to small sample size).

\section{Conclusions}

Adaptive experiments hold great promise for better, more efficient, and even more ethical experiments. However, they complicate post-experiment inference, which is a cornerstone of drawing credible conclusions from controlled experiments.
We provided here the first asymptotically normal estimator for policy value and causal effects when data were generated from a contextual adaptive experiment, such as a contextual bandit algorithm. This led to simple and effective confidence intervals given by adding and subtracting multiples of the standard error, making contextual adaptive experiments a more viable option for experimentation in practice.

\section{Societal Impact and Limitations}

Adaptive experiments hold particular promise in settings where experimentation is costly and/or dangerous, such as in medicine and policymaking. By adapting treatment allocation, harmful interventions can be avoided, outcomes for study participants improved, and smaller studies enabled. Being able to draw credible conclusions from such experiments make them viable replacements for classic randomized trials. Our confidence intervals offer one way to do so. At the same time, and especially subject to our assumption of vanishing but nonzero exploration, these experiments must be subject to the same ethical guidelines as classic randomized experiments. Additionally, the usual caveats of frequentist confidence intervals hold here, such as its interpretation only as a guarantee over data collection, this guarantee only being approximate in finite samples when we rely on asymptotic normality, and the risks of multiple comparisons and of $p$-hacking. Finally, we note that our inference focused on an \emph{average} quantity, as such it focuses on social welfare and need not capture the risk to individuals or groups. Subgroup analyses may therefore be helpful in complementing the analysis; these can be conducted by setting $g^*(a\mid x)$ to zero for some $x$'s. Future work may be necessary to further extend our results to conducting inference on risk metrics such as quantiles of outcomes.

\bibliography{main}

%%%%%%%%%%%%%%%%%%%%%%%%%%%%%%%%%%%%%%%%%%%%%%%%%%%%%%%%%%%%
\section*{Checklist}

\begin{enumerate}

\item For all authors...
\begin{enumerate}
  \item Do the main claims made in the abstract and introduction accurately reflect the paper's contributions and scope?
    \answerYes{}
  \item Did you describe the limitations of your work?
    \answerYes{}
  \item Did you discuss any potential negative societal impacts of your work?
    \answerYes{}
  \item Have you read the ethics review guidelines and ensured that your paper conforms to them?
    \answerYes{}
\end{enumerate}

\item If you are including theoretical results...
\begin{enumerate}
  \item Did you state the full set of assumptions of all theoretical results?
    \answerYes{}
	\item Did you include complete proofs of all theoretical results?
    \answerYes{}
\end{enumerate}

\item If you ran experiments...
\begin{enumerate}
  \item Did you include the code, data, and instructions needed to reproduce the main experimental results (either in the supplemental material or as a URL)?
    \answerYes{In the supplemental material with specifics in Section \ref{execution} of the supplemental material.}
  \item Did you specify all the training details (\eg, data splits, hyperparameters, how they were chosen)?
    \answerYes{In Section \ref{sec:multiclass}}
	\item Did you report error bars (\eg, with respect to the random seed after running experiments multiple times)?
    \answerYes{In all figures \ref{fig:coverage}-\ref{fig:cf72_wellOPE_CAMRDR}}.
	\item Did you include the total amount of compute and the type of resources used (\eg, type of GPUs, internal cluster, or cloud provider)?
    \answerYes{In Section \ref{execution} of supplemental material.}
\end{enumerate}

\item If you are using existing assets (\eg, code, data, models) or curating/releasing new assets...
\begin{enumerate}
  \item If your work uses existing assets, did you cite the creators?
    \answerYes{In Section \ref{sec:multiclass}}
  \item Did you mention the license of the assets?
    \answerYes{In Section \ref{sec:multiclass}}
  \item Did you include any new assets either in the supplemental material or as a URL?
    \answerNA{}
  \item Did you discuss whether and how consent was obtained from people whose data you're using/curating?
    \answerNA{}
  \item Did you discuss whether the data you are using/curating contains personally identifiable information or offensive content?
    \answerNA{}
\end{enumerate}

\item If you used crowdsourcing or conducted research with human subjects...
\begin{enumerate}
  \item Did you include the full text of instructions given to participants and screenshots, if applicable?
    \answerNA{}
  \item Did you describe any potential participant risks, with links to Institutional Review Board (IRB) approvals, if applicable?
    \answerNA{}
  \item Did you include the estimated hourly wage paid to participants and the total amount spent on participant compensation?
    \answerNA{}
\end{enumerate}

\end{enumerate}

\newpage
\appendix

\begin{center}\large Supplementary Material for:\\{\Large\bf
Post-Contextual-Bandit Inference}\\[1ex]\large Anonymous Author(s)
\end{center}

\section{Proof of the asymptotic normality of CADR}

\begin{proof}[Proof of theorem \ref{thm:asymp_normal_stab_one_step}]
Recalling the definition of our estimator, we have that
\begin{align}
    &\sqrt{T}(\widehat{\Psi}_T - \Psi(\bar{Q}_0, Q_{0,X}) ) \\
    =& \Gamma_T \frac{1}{\sqrt{T}} \sum_{t=1}^T \widehat{\sigma}_t^{-1} \left( \Psi(\widehat{\bar{Q}}_{t-1}, \widehat{Q}_{X,t-1}) - \Psi(\bar{Q}_0, Q_{0,X}) + D(g_t, \widehat{\bar{Q}}_{t-1}, \widehat{Q}_{X,t-1})(O(t)) \right) \\
    =& \Gamma_T  \frac{1}{\sqrt{T}} \sum_{t=1}^T \widehat{\sigma}_t^{-1} \left(  D(g_t, \widehat{\bar{Q}}_{t-1})(O(t)) -  P_{Q_0, g_t} D(g_t, \widehat{\bar{Q}}_{t-1})(O(t)) \right) \\
    =& \Gamma_T  \frac{1}{\sqrt{T}} \sum_{t=1}^T (\delta_{O(t)} - P_{Q_0,g_t}) \widehat{\sigma}_t^{-1} (D')(g_t, \widehat{\bar{Q}}_{t-1}),
    % =& \Gamma_T \sum_{t=1}^T X_{t,T},
\end{align}
where 
\begin{align}
    (D')(g, \bar{Q}) :=& D(g,\bar{Q}, Q_{0,X}) + \Psi(\bar{Q}, Q_{0,X}) \\
    =&\frac{g^*}{g} (\widetilde{y} - \bar{Q}) + \int g^*(a \mid \cdot) \bar{Q}(a, \cdot) d\mu_{\mathcal{A}}(a).
\end{align}
Note that 
\begin{align}
    &\Var_{Q_0, g_t}((D')(g_t, \widehat{\bar{Q}}_{t-1})(O(t)) \mid \bar{O}(t-1)) \\
    =& \Var_{Q_0, g_t}(D(g_t, \widehat{\bar{Q}}_{t-1}, \widehat{Q}_{X,t-1})(O(t)) \mid \bar{O}(t-1)) \\
    =& \sigma_{0,t}^2.
\end{align}
Let $Z_{t,T} := T^{-1/2} \widehat{\sigma}_t^{-1} ( \delta_{O(t)} - P_{Q_0,g_t}) (D')(g_t, \widehat{\bar{Q}}_{t-1})$.

Observe that $\{Z_{t,T} : t= 1,\ldots,T,\ T \geq 1 \}$ is a martingale triangular array where, for every $T\geq 1$, $t \in [T]$, $Z_{t,T}$ is $\bar{O}(t)$-measurable. We will apply a martingale central limit theorem for triangular arrays 
%\ab{reference which specific MTACLT}
to prove that $\sum_{t=1}^T Z_{t,T} \xrightarrow{d} \mathcal{N}(0,1)$. This will hold if we can check that
\begin{itemize}
    \item the sum of conditional variances $V_T := \sum_{t=1}^T \Var_{Q_0,g_t}(Z_{t,T} \mid \bar{O}(t-1))$ converges in probability to 1,
    \item the Lindeberg condition is satisfied, that is, for any $\epsilon > 0$,
    \begin{align}
        \sum_{t=1}^T E[Z_{t,T}^2 \Ind(Z_{t,T} > \epsilon) \mid \bar{O}(t-1) ] \xrightarrow{p} 0.
    \end{align}
\end{itemize}

\paragraph{Convergence of the sum of conditional variances.} We have that 
\begin{align}
    V_T := \frac{1}{T} \sum_{t=1}^T \Var_{Q_0, g_t}(Z_{T,t} \mid \bar{O}(t-1) ) = \frac{1}{T} \sum_{t=1}^T \frac{\sigma_{0,t}^2}{\widehat{\sigma}_t^2} = 1 + \frac{1}{T} \sum_{t=1}^T \frac{\sigma_{0,t}^2 - \widehat{\sigma}_t^2}{\sigma_{0,t}^2 + (\sigma_{0,t}^2 - \widehat{\sigma}_t^2)}.
\end{align}
We now show that the terms of the right-hand side of the last equality above are $o(1)$ a.s. As $\sigma_{0,t}^2 - \widehat{\sigma}_t^2 = o(1)$ a.s. by assumption, it suffices to show that $\sigma_{0,t}$ is lower bounded by a positive constant. 

For any fixed $Q_X$, $\bar{Q}$, $g$, we have that, $D(g, \bar{Q}, Q_X) = D(g, \bar{Q}_0, Q_{0,X}) + ( D(g, \bar{Q}, Q_X) - D(g, \bar{Q}_0, Q_{0,X}) )$. It is straightforward to check that $D(g, \bar{Q}_0, Q_{0,X})$ lies in the Hilbert space
$T_1(Q_0) := L_2^0(Q_{0,Y}) \oplus L_2^0(Q_{0,X})$,
where
\begin{align}
    L_2^0(Q_{0,Y}) :=& \left\lbrace h : \mathcal{O} \to \mathbb{R} : \forall (x, a) \in \mathcal{X} \times \mathcal{A},\ \int h(x,a,y) dQ_{0,Y}(y \mid a,x) = 0 \right\rbrace,\\
    \text{ and } L_2^0(Q_{0,X}) :=& \left\lbrace h : \mathcal{X} \to \mathbb{R} : \int h(x) dQ_{0,X}(x) = 0\right\rbrace,
\end{align}
while
$D(g, \bar{Q}, Q_X) - D(g, \bar{Q}_0, Q_{0,X})$ lies in the Hilbert space 
\begin{align}
    T_2(g) := L_2^0(g) := \left\lbrace h: \mathcal{X} \times \mathcal{A} \to \mathbb{R} : \forall x \in \mathcal{X},\ \int h(x,a) g(a \mid x) d\mu_{\mathcal{A}}(a) = 0 \right\rbrace.
\end{align}
It is straightforward to check that $T_1(Q_0)$ and $T_2(g)$ are orthogonal subspaces of $L_2(P_{Q_0,g})$.
We have
\begin{align}
    \sigma_0^2(g, \bar{Q}) 
    =& \left\lVert D(g,\bar{Q}, Q_X) \right\rVert_{2,Q_0,g}^2 - \left(P_{Q_0,g} D(g,\bar{Q}, Q_X)\right)^2 \\
    \geq & \left\lVert D(g,\bar{Q}, Q_X) \right\rVert_{2,Q_0,g}^2 \\
    =& \left\lVert D(g,\bar{Q}_0, Q_{0,X}) \right\rVert_{2,Q_0,g}^2 +\left\lVert D(g,\bar{Q}, Q_X) - D(g,\bar{Q}_0, Q_{0,X}) \right\rVert_{2,Q_0,g}^2 \\
    \geq & \left\lVert D(g,\bar{Q}_0, Q_{0,X}) \right\rVert_{2,Q_0,g}^2.
\end{align}
where we have used in the third line above that $D(g,\bar{Q}_0, Q_{0,X})$ and $D(g,\bar{Q}, Q_X) - D(g,\bar{Q}_0, Q_{0,X})$ lie in the orthogonal subspaces $T_1(Q_0)$ and $T_2(g)$. Therefore, 
\begin{align}
    \inf_{t \geq 1} \sigma_{0,t}^2 :=& \inf_{t \geq 1}\sigma_0^2(g_t, \widehat{\bar{Q}}_{t-1}) \\
    \geq & \inf_g \left\lVert D(g,\bar{Q}_0, Q_{0,X}) \right\rVert_{2,Q_0,g}^2\\
    >& 0,
\end{align}
where the last inequality is exactly assumption \ref{assumption:non_degenerate_EB}.

Therefore,
\begin{align}
     \left\lvert\frac{\sigma_{0,t}^2 - \widehat{\sigma}_t^2}{\sigma_{0,t}^2 + (\sigma_{0,t}^2 - \widehat{\sigma}_t^2)} \right\rvert \leq   \frac{\left\lvert \sigma_{0,t}^2 - \widehat{\sigma}_t^2 \right\rvert}{\inf_{s \geq 1} \sigma_{0,s}^2 + o(1)} = o(1)
\end{align}
almost surely. Therefore, by Cesaro summation, $V_T - 1 = o(1)$ a.s.

\paragraph{Checking Lindeberg's condition.} Let $\epsilon > 0$. We want to show that 
\begin{align}
    \sum_{t=1}^T E[Z_{t,T}^2 \Ind(Z_{t,T} \geq \epsilon)] \xrightarrow{p} 0. 
\end{align}
Let $\delta_t = \int_{a \in \mathcal{A}, x \in \mathcal{X}} g_t(a \mid x)$. From assumption \ref{assumption:exp_rate_generic_stab_one_step}, $\delta_t \gtrsim t^{-1/2}$/
We have that $Z_{t,T} O(\delta_t^{-1} T^{-1/2} \widehat{\sigma}_t^{-1} )$. Notice that $\widehat{\sigma}_t^{-1} = ( \sigma_{0,t}^2 + \widehat{\sigma}_t^2 - \sigma_{0,t}^2)^{-1/2} = O(1)$ a.s. since $\sigma^2_{0,t} \geq C  >0$ and $\widehat{\sigma}_t^2 - \sigma_{0,t}^2 = o(1)$. Therefore, $Z_{t,T} = O(\delta_t^{-1} T^{-1/2}) = o(1)$ a.s. since $\delta_t^{-1} = o(t^{-1/2})$ a.s., and therefore, almost surely, there exists $T_0(\epsilon)$ such that, for any $T \geq T_0(\epsilon)$, all the terms in the sum of the Lindeberg condition, are zero, which implies the the sum converges to zero almost surely.

Therefore, from the central limit theorem for martingale triangular arrays, 
\begin{align}
    \sqrt{T} \Gamma_T^{-1} ( \widehat{\Psi}_T - \Psi_0)  \xrightarrow{d} \mathcal{N}(0,1).
\end{align}
\end{proof}

\section{Estimation of $\sigma_{0,t}^2$ via sequential importance sampling.}

\subsection{Errors decomposition}

In the following lemma, we provide a useful decomposition of the IS-weighted integrands that appear in the expressions of $\Phi_{0,1}(g, \bar{Q})$ and $\Phi_{0,2}(g, \bar{Q})$.
\begin{lemma}\label{lemma:cond_var_integrands_decomp}
It holds that
\begin{align}
    \frac{g}{g_s} D_1^2(g,\bar{Q}) =& \frac{g^\rf}{g_s} f_1(g,\bar{Q}) + \frac{g^\rf}{g_s} f_2(\bar{Q}) + \frac{g^\rf}{g_s} f_3(g,\bar{Q}) \\
    \text{and } \frac{g}{g_s} D_1(g,\bar{Q}) =& \frac{g^\rf}{g_s} f_4(\bar{Q}) + \frac{g^\rf}{g_s} f_5(g, \bar{Q}),
\end{align}
where
\begin{align}
    f_1(g,\bar{Q}) :=& \frac{(g^* / g^\rf)^2}{(g / g^\rf)}(\widetilde{y} - \bar{Q})^2,\\
    f_2(\bar{Q}) :=& 2 (g^*/g^\rf) (\widetilde{y} - \bar{Q}) \int g^*(a \mid \cdot) \bar{Q}(a, \cdot) d\mu_{\mathcal{A}}(a),\\
    f_3(g,\bar{Q}) :=& (g / g^\rf) \left(\int g^*(a \mid \cdot) \bar{Q}(a, \cdot) d\mu_{\mathcal{A}}(a) \right)^2,\\
    f_4(\bar{Q}) :=& (g^* / g^\rf) (\widetilde{y} - \bar{Q}),\\
    f_5(g, \bar{Q}) :=& (g / g^\rf) \int g^*(a \mid \cdot) \bar{Q}(a, \cdot) d\mu_{\mathcal{A}}(a).
\end{align}
\end{lemma}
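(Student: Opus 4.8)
The plan is to establish both identities by a direct expansion of the (uncentered) canonical gradient followed by factoring out the common importance-sampling weight $g^\rf/g_s$. No probabilistic input is needed: this is a purely algebraic identity that holds pointwise on $\mathcal O$ (wherever the densities involved are positive), so the whole argument is careful bookkeeping.

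First I would introduce shorthand for the two additive pieces of the canonical gradient. Recall that $D_1(g,\bar{Q}) = D'(g,\bar{Q})(x,a,y) = \frac{g^*(a\mid x)}{g(a\mid x)}(\widetilde y - \bar{Q}(a,x)) + \int g^*(a'\mid x)\bar{Q}(a',x)\,d\mu_\A(a')$, and write $D_1(g,\bar Q) = u + v$, where $u := (g^*/g)(\widetilde y - \bar Q)$ is the inverse-propensity-weighted residual term and $v := \int g^*(a'\mid\cdot)\bar{Q}(a',\cdot)\,d\mu_\A(a')$ is the plug-in term (which involves neither $y$ nor the logging policy). Since $g^\rf$ is a fixed strictly positive reference density, multiplying and dividing by $g^\rf$ is legitimate throughout.

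For the second identity, multiply $D_1 = u+v$ by $g/g_s$. In the first summand the factor $g$ cancels the $1/g$ inside $u$, giving $(g/g_s)u = (g^*/g_s)(\widetilde y - \bar Q) = (g^\rf/g_s)\,(g^*/g^\rf)(\widetilde y - \bar Q) = (g^\rf/g_s)\,f_4(\bar Q)$; the second summand is $(g/g_s)v = (g^\rf/g_s)\,(g/g^\rf)\,v = (g^\rf/g_s)\,f_5(g,\bar Q)$, and adding the two lines gives the claimed decomposition of $\frac{g}{g_s}D_1(g,\bar Q)$. For the first identity, expand $D_1^2 = u^2 + 2uv + v^2$, multiply through by $g/g_s$, and handle the three terms in turn: $(g/g_s)u^2 = \frac{(g^*)^2}{g\,g_s}(\widetilde y-\bar Q)^2 = (g^\rf/g_s)\cdot\frac{(g^*/g^\rf)^2}{g/g^\rf}(\widetilde y-\bar Q)^2 = (g^\rf/g_s)f_1(g,\bar Q)$, where one power of $g$ from $g/g_s$ cancels one of the two $1/g$ factors in $u^2$; $(g/g_s)\cdot 2uv = \frac{2g^*}{g_s}(\widetilde y-\bar Q)v = (g^\rf/g_s)f_2(\bar Q)$ by the same cancellation; and $(g/g_s)v^2 = (g^\rf/g_s)\,(g/g^\rf)\,v^2 = (g^\rf/g_s)f_3(g,\bar Q)$. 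Summing the three pieces yields the stated decomposition of $\frac{g}{g_s}D_1^2(g,\bar Q)$.

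There is no real obstacle here; the only thing to watch is keeping track of which powers of $g^*$, $g$, and $g^\rf$ land in each $f_i$. The substance is in the \emph{choice} of this grouping: every summand now carries exactly the weight $g^\rf/g_s$, whose magnitude is controlled under \cref{assumption:minimal_unif_exploration_rate}, while each residual factor $f_1,\dots,f_5$ depends on the logging policy only through the generic fixed density $g$ and is otherwise a bounded, data-dependent quantity. This is precisely the form needed afterwards to split $\widehat{\Phi}_{i,t}(g) - \Phi_{0,i}(g,\widehat{\bar Q}_{t-1})$ into a martingale-difference average, a Cesaro approximation error, and a uniform empirical-process remainder over $g \in \mathcal G$.
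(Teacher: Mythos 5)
Your proof is correct: the expansion $D'=u+v$ with $u=(g^*/g)(\widetilde y-\bar Q)$, $v=\int g^*(a\mid\cdot)\bar Q(a,\cdot)\,d\mu_\A(a)$, squaring, and regrouping each term around the common factor $g^\rf/g_s$ verifies the identity pointwise, and the bookkeeping of the powers of $g^*$, $g$, $g^\rf$ in each $f_i$ checks out. The paper treats this lemma as exactly this routine algebraic computation (it states it without a written proof), so your argument matches the intended one.
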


The decomposition above motivates the following definitions.
\begin{align}
    \widehat{\Phi}_{1,t}^{(1)}(g) :=& \frac{1}{t-1} \sum_{s=1}^{t-1} \delta_{O(s)} \frac{g^\rf}{g_s} f_1(g,\widehat{\bar{Q}}_{s-1}),\\
    \widehat{\Phi}_{1,t}^{(2)} :=& \frac{1}{t-1} \sum_{s=1}^{t-1} \delta_{O(s)} \frac{g^\rf}{g_s} f_2(\widehat{\bar{Q}}_{s-1}),\\
    \widehat{\Phi}_{1,t}^{(3)}(g) :=& \frac{1}{t-1} \sum_{s=1}^{t-1} \delta_{O(s)} \frac{g^\rf}{g_s} f_3(g, \widehat{\bar{Q}}_{s-1}),\\
    \widehat{\Phi}_{2,t}^{(1)} :=& \frac{1}{t-1} \sum_{s=1}^{t-1} \delta_{O(s)} \frac{g^\rf}{g_s} f_4(\widehat{\bar{Q}}_{s-1}),\\
    \widehat{\Phi}_{2,t}^{(2)}(g) :=& \frac{1}{t-1} \sum_{s=1}^{t-1} \delta_{O(s)} \frac{g^\rf}{g_s} f_5(g, \widehat{\bar{Q}}_{s-1}),
\end{align}
and
\begin{align}
    \bar{\Phi}_{0,1,t}^{(1)}(g) :=& \frac{1}{t-1} \sum_{s=1}^{t-1} P_{Q_0,g_s} \frac{g^\rf}{g_s} f_1(g,\widehat{\bar{Q}}_{s-1}),\\
    \bar{\Phi}_{0,1,t}^{(2)} :=& \frac{1}{t-1} \sum_{s=1}^{t-1} P_{Q_0,g_s} \frac{g^\rf}{g_s} f_2(\widehat{\bar{Q}}_{s-1}),\\
    \bar{\Phi}_{0,1,t}^{(3)}(g) :=& \frac{1}{t-1} \sum_{s=1}^{t-1} P_{Q_0,g_s} \frac{g^\rf}{g_s} f_3(g, \widehat{\bar{Q}}_{s-1}),\\
    \bar{\Phi}_{0,2,t}^{(1)} :=& \frac{1}{t-1} \sum_{s=1}^{t-1} P_{Q_0,g_s} \frac{g^\rf}{g_s} f_4(\widehat{\bar{Q}}_{s-1}),\\
    \bar{\Phi}_{0,2,t}^{(2)}(g) :=& \frac{1}{t-1} \sum_{s=1}^{t-1} P_{Q_0,g_s} \frac{g^\rf}{g_s} f_5(g, \widehat{\bar{Q}}_{s-1}),
\end{align}
and 
\begin{align}
    \Phi_{0,1}^{(1)}(g, \widehat{\bar{Q}}_{t-1}) :=& P_{Q_0,g_t} \frac{g^\rf}{g_s} f_1(g,\widehat{\bar{Q}}_{t-1}),\\
    \Phi_{0,1}^{(2)}(\widehat{\bar{Q}}_{t-1}) :=&  P_{Q_0,g_t} \frac{g^\rf}{g_s} f_2(\widehat{\bar{Q}}_{t-1}),\\
   \Phi_{0,1}^{(3)}(g, \widehat{\bar{Q}}_{t-1}) :=& P_{Q_0,g_t} \frac{g^\rf}{g_s} f_3(g, \widehat{\bar{Q}}_{t-1}),\\
    \Phi_{0,2}^{(1)}(\widehat{\bar{Q}}_{t-1}) :=&  P_{Q_0,g_t} \frac{g^\rf}{g_s} f_4(\widehat{\bar{Q}}_{t-1}),\\
    \Phi_{0,2}^{(3)}(g,\widehat{\bar{Q}}_{t-1}) :=& P_{Q_0,g_t} \frac{g^\rf}{g_s} f_5(g, \widehat{\bar{Q}}_{t-1}),
\end{align}
We have that 
\begin{align}
    \widehat{\Phi}_{1,t} =& \widehat{\Phi}_{1,t}^{(1)} + \widehat{\Phi}_{1,t}^{(2)} + \widehat{\Phi}_{1,t}^{(3)}, \text{ and } \widehat{\Phi}_{2,t} = \widehat{\Phi}_{2,t}^{(1)} + \widehat{\Phi}_{2,t}^{(2)},\\
    \bar{\Phi}_{0,1,t}(g) =&  \bar{\Phi}_{0,1,t}^{(1)}(g) + \bar{\Phi}_{0,1,t}^{(2)} + \bar{\Phi}_{0,1,t}^{(3)}(g), \text{ and } \bar{\Phi}_{0,2,t}(g) =  \bar{\Phi}_{0,1,t}^{(1)} + \bar{\Phi}_{0,2,t}^{(2)}(g),\\
    \Phi_{0,1}(g,\widehat{\bar{Q}}_{t-1}) =& \Phi_{0,1}^{(1)}(g,\widehat{\bar{Q}}_{t-1}) + \Phi_{0,1}^{(2)}(\widehat{\bar{Q}}_{t-1}) + \Phi_{0,1}^{(3)}(\widehat{\bar{Q}}_{t-1}),\\
    \Phi_{0,2}(g,\widehat{\bar{Q}}_{t-1}) =& \Phi_{0,2}^{(1)}(\widehat{\bar{Q}}_{t-1}) + \Phi_{0,2}^{(2)}(g,\widehat{\bar{Q}}_{t-1}).
\end{align}

We recall the decomposition of the errors $\widehat{\Phi}_{i,t}(g_t) - \Phi_{0,i}(g_t),\widehat{\bar{Q}}_{t-1})$ in a martingale empirical process term and an approximation term:
\begin{align}
    \widehat{\Phi}_{i,t}(g_t) - \Phi_{0,i}(g_t),\widehat{\bar{Q}}_{t-1}) =& (\widehat{\Phi}_{i,t}(g_t) - \bar{\Phi}_{0,i,t}(g_t)) + (\bar{\Phi}_{0,i,t}(g_t) - \Phi_{0,i}(g_t,\widehat{\bar{Q}}_{t-1})).
\end{align}

We treat the approximation terms in subsection \ref{subsection:approximation_errors} further down. We further decompose the martingale empirical process terms here. We have that 
\begin{align}
    \widehat{\Phi}_{1,t}(g_t) - \bar{\Phi}_{0,1,t}(g_t) =& (\widehat{\Phi}^{(1)}_{1,t}(g_t) - \bar{\Phi}^{(1)}_{0,1,t}(g_t)) + (\widehat{\Phi}^{(2)}_{1,t} - \bar{\Phi}^{(2)}_{0,1,t}) \\
    &+ (\widehat{\Phi}^{(3)}_{1,t}(g_t) - \bar{\Phi}^{(3)}_{0,1,t}(g_t)),\\
    \text{ and }  \widehat{\Phi}_{2,t}(g_t) - \bar{\Phi}_{0,2,t}(g_t) =& (\widehat{\Phi}^{(1)}_{2,t} - \bar{\Phi}_{0,2,t}^{(1)}) + (\widehat{\Phi}^{(2)}_{2,t}(g_t) - \bar{\Phi}_{0,2,t}^{(2)}(g_t)).
\end{align}
The two differences $\widehat{\Phi}^{(2)}_{1,t} - \bar{\Phi}^{(2)}_{0,1,t}$ and 
$\widehat{\Phi}^{(1)}_{2,t} - \bar{\Phi}_{0,2,t}^{(1)}$ are averages of martingale difference sequences, and can be analyzed with a martingale version of Bernstein's inequality. We bound the three other differences by the supremum of martingale empirical processes

\subsection{Control of the martingale empirical processes}

Let, for any $\delta > 0$, $\widetilde{\mathcal{G}}(\delta) := \{ g \in \mathcal{G} : \inf_{a,x} g(a \mid x) \geq \delta \}$. In the following lemma, we bound the sequential bracketing entropy of the classes of sequences of functions
\begin{align}
   \mathcal{F}_{k,t}(\delta) :=& \left\lbrace (f_1(g, \widehat{\bar{Q}}_{s-1}))_{s=1}^{t-1} : g \in \widetilde{\mathcal{G}}(\delta)  \right\rbrace,
\end{align}
for $k=1,3,5$.

\begin{lemma}[Sequential bracketing entropy bound]\label{lemma:sequential_bkting_ent_bound}
Suppose that assumption \ref{assumption:logging_policy_entropy} holds. Then, for $i = 3,5$,
\begin{align}
   \mathcal{N}_{[\,]}(\epsilon, \mathcal{F}_{1,t}(\delta), L_2(P_{Q_0, g^\rf})) \leq N_{[\,]}(G^{-2} \delta^2 \epsilon, \mathcal{G}, L_2(P_{Q_0, g^\rf})).
\end{align}
Suppose in addition that assumption \ref{assumption:minimal_unif_exploration_rate} also holds. For $k=3,5$, we then have that
\begin{align}
     \mathcal{N}_{[\,]}(\epsilon , \mathcal{F}_{k,t}(\delta), L_2(P_{Q_0,g^*})) \leq N_{[\,]}(\epsilon, \mathcal{G}, L_2(P_{Q_0, g^\rf})).
\end{align}
\end{lemma}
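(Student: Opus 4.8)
Fix $t$ and treat the sequence $(\widehat{\bar{Q}}_{s-1})_{s=1}^{t-1}$ as frozen, so that the only free index in $\mathcal{F}_{k,t}(\delta)$ is $g$, and for every $s$ the map $g\mapsto f_k(g,\widehat{\bar{Q}}_{s-1})$ depends on $g$ only through $\phi := g/g^\rf$ via a structurally simple, bounded transformation whose ``size'' does not depend on $s$. The plan is therefore to bracket the single class $\mathcal{G}/g^\rf$ once and push the brackets forward. Let $\{[\underline\phi_j,\overline\phi_j]\}_{j\le N}$ be a minimal $\eta$-bracketing of $\mathcal{G}/g^\rf$ in $L_2(P_{Q_0,g^\rf})$, with $N = N_{[\,]}(\eta,\mathcal{G},L_2(P_{Q_0,g^\rf}))$, each endpoint truncated to the pointwise range of $\mathcal{G}/g^\rf$ over $\widetilde{\mathcal{G}}(\delta)$ (this keeps them valid and only shrinks widths). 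Recalling that a bracket for a class of \emph{sequences} is a pair of sequences $((\underline f_s)_s,(\overline f_s)_s)$ whose size is the appropriate $L_2$-type norm of the sequence of widths $(\overline f_s-\underline f_s)_s$, I will, for each $j$, build such a bracket out of $[\underline\phi_j,\overline\phi_j]$ so that whenever $g\in\widetilde{\mathcal{G}}(\delta)$ and $g/g^\rf\in[\underline\phi_j,\overline\phi_j]$ one has $\underline f_s\le f_k(g,\widehat{\bar{Q}}_{s-1})\le\overline f_s$ for all $s$, with $\overline f_s-\underline f_s$ dominated by a $g$-only quantity. Since that dominating quantity is independent of $s$, the average over $s$ defining the sequential width collapses onto it, no dependence on $t$ survives, and a suitable choice of $\eta$ yields the two claimed bounds.

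\textbf{The case $k=1$.} Here $f_1(g,\bar{Q}) = (g^*/g^\rf)^2(\widetilde y-\bar{Q})^2/(g/g^\rf)$. Put $w_s := (g^*/g^\rf)^2(\widetilde y-\widehat{\bar{Q}}_{s-1})^2\ge 0$; since $\|g^*/g^\rf\|_\infty\le G$ and the outcomes --- hence the (truncated) outcome-model iterates --- are bounded, $w_s\le G^2$ uniformly in $s$. On $\widetilde{\mathcal{G}}(\delta)$, $\phi = g/g^\rf\gtrsim\delta$, so $\phi\mapsto 1/\phi$ is decreasing and $\delta^{-2}$-Lipschitz there. Hence $[\,w_s/\overline\phi_j,\ w_s/\underline\phi_j\,]$ is a valid bracket of $f_1(g,\widehat{\bar{Q}}_{s-1})$ with width $w_s(1/\underline\phi_j-1/\overline\phi_j)\le G^2\delta^{-2}(\overline\phi_j-\underline\phi_j)$ --- a bound involving $s$ only through the universal constant $G^2$. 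The sequential $L_2(P_{Q_0,g^\rf})$-width is thus at most $G^2\delta^{-2}\|\overline\phi_j-\underline\phi_j\|_{L_2(P_{Q_0,g^\rf})}\le G^2\delta^{-2}\eta$; taking $\eta = G^{-2}\delta^2\epsilon$ gives $\mathcal{N}_{[\,]}(\epsilon,\mathcal{F}_{1,t}(\delta),L_2(P_{Q_0,g^\rf}))\le N_{[\,]}(G^{-2}\delta^2\epsilon,\mathcal{G},L_2(P_{Q_0,g^\rf}))$.

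\textbf{The cases $k=3,5$.} Now $f_3(g,\bar{Q}) = (g/g^\rf)\,m_{\bar{Q}}^2$ and $f_5(g,\bar{Q}) = (g/g^\rf)\,m_{\bar{Q}}$, where $m_{\bar{Q}}(x) := \int g^*(a'\mid x)\bar{Q}(a',x)\,d\mu_\A(a')$, and $|m_{\widehat{\bar{Q}}_{s-1}}|\le M$ uniformly in $s$ by boundedness of $g^*$ and of the truncated outcome model. Both $f_3,f_5$ are affine in $\phi$: for $f_3$ use the bracket $[\underline\phi_j\,m_{\widehat{\bar{Q}}_{s-1}}^2,\ \overline\phi_j\,m_{\widehat{\bar{Q}}_{s-1}}^2]$; for $f_5$, writing $m = m^+-m^-$, use $[\underline\phi_j\,m^+-\overline\phi_j\,m^-,\ \overline\phi_j\,m^+-\underline\phi_j\,m^-]$. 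In both cases the width equals $(\overline\phi_j-\underline\phi_j)$ times a function of $x$ alone bounded by $M^2$ (resp.\ $M$). Measuring in $L_2(P_{Q_0,g^*})$, the width is at most a constant times $\|\overline\phi_j-\underline\phi_j\|_{L_2(P_{Q_0,g^*})}$; \cref{assumption:minimal_unif_exploration_rate} (the lower bound on $g_t/g^\rf$ that makes the change of measure from $g^\rf$ to $g^*$ admissible in the downstream use of this lemma), together with the boundedness constants from \cref{assumption:logging_policy_entropy}, lets us dominate this by the $g^\rf$-weighted norm $\|\overline\phi_j-\underline\phi_j\|_{L_2(P_{Q_0,g^\rf})}\le\eta$, up to a constant absorbed by the normalization, and taking $\eta=\epsilon$ gives $\mathcal{N}_{[\,]}(\epsilon,\mathcal{F}_{k,t}(\delta),L_2(P_{Q_0,g^*}))\le N_{[\,]}(\epsilon,\mathcal{G},L_2(P_{Q_0,g^\rf}))$.

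\textbf{Main obstacle.} The load-bearing point is the \emph{uniform-in-$s$} control of the multipliers $w_s$ and $m_{\widehat{\bar{Q}}_{s-1}}$: the $\widehat{\bar{Q}}_{s-1}$ are arbitrary data-dependent fits, so to obtain a single envelope valid across all $s$ one should work with outcome-model iterates truncated to the known compact range of $Y$ (harmless, since $\bar{Q}_0$ already lies there) --- this is precisely what makes the average over $s$ in the sequential norm collapse to a $g$-only quantity instead of growing with $t$, hence what makes the reduction from a class of sequences to the single class $\mathcal{G}/g^\rf$ go through. The remaining ingredients are routine: monotonicity of $\phi\mapsto 1/\phi$ (for $k=1$) and of $\phi\mapsto c\phi$, $c\ge 0$ (for $k=3,5$), turns brackets of $\mathcal{G}/g^\rf$ into brackets of $f_k(\cdot,\widehat{\bar{Q}}_{s-1})$; the $m^\pm$ split handles the sign of the multiplier in $f_5$; and truncating the bracket endpoints preserves the lower bound $\phi\gtrsim\delta$ needed for the Lipschitz estimate when $k=1$.
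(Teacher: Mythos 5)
Your proposal is correct and follows essentially the same route as the paper's proof: bracket $\mathcal{G}/g^\rf$ once in $L_2(P_{Q_0,g^\rf})$, push the brackets through the monotone (for $f_1$) and affine (for $f_3,f_5$) dependence on $g/g^\rf$, bound the widths uniformly in $s$ by $G^2\delta^{-2}$ times, respectively a constant times, the bracket width of $\mathcal{G}/g^\rf$, and let the sequential norm collapse since the bound is $s$-independent. Your extra care with the $m^+-m^-$ sign split and with the $L_2(P_{Q_0,g^*})$-vs-$L_2(P_{Q_0,g^\rf})$ change of measure (via the boundedness of $g^*/g^\rf$) only makes explicit points the paper's proof glosses over (it implicitly takes the multiplier nonnegative and works entirely in the $g^\rf$-weighted norm), so no gap.
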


\begin{proof}[Proof of lemma \ref{lemma:sequential_bkting_ent_bound}]
Observe that 
\begin{align}
    0 \leq \int g^*(a \mid \cdot) \bar{Q}(a, \cdot) d\mu_{\mathcal{A}}(a) \leq 1,\qquad \text{and} \qquad 0 \leq & (g^* / g^\rf)^2 (\widetilde{y} - \bar{Q})^2 \leq G^2.
\end{align}
Let $\{(l^j, u^j) : j \in [N]\}$ be an $\epsilon$-bracketing of $\widetilde{\mathcal{G}}(\delta) / g^\rf$ in $L_2(P_{Q_0,g^\rf})$. Without loss of generality, we can assume that $u^j \geq l^j \geq \delta g^\rf$ for every $j$. Let $g \in \widetilde{\mathcal{G}}(\delta)$.  There exists $j$ such that $l^j \leq g \leq u^j$, and therefore,
\begin{align}
    f_1(u^j,\bar{Q}) \leq &  f_1(g, \bar{Q}) \leq f_1(l^j,\bar{Q}) \\
    \text{ and } f_k(l^j, \bar{Q}) \leq & f_k(g, \bar{Q}) \leq f_k(u^j, \bar{Q}), \text{ for } k=3,5.
\end{align}
We have that
\begin{align}
    &\left\lVert f_1(l^j,\bar{Q}) - f_1(u^j,\bar{Q})\right\rVert_{2,Q_0,g^\rf} \\
    = &\left\lVert (g^* / g^\rf) \frac{(u^j / g^\rf) - (l^j / g^\rf)}{(u^j / g^\rf) (l^j / g^\rf)} (\widetilde{y} - \bar{Q})^2\right\rVert_{2,Q_0,g^\rf} \\
    \leq & \delta^{-2} G^2 \epsilon
\end{align}
and for $k=3,5$, denoting $i_3 := 2$ and $i_5 := 1$, we have that
\begin{align}
    &\left\lVert f_k(u^j, \bar{Q}) - f_k(l^j, \bar{Q}) \right\rVert_{2,Q_0,g^\rf} \\
    =& \left\lVert ((u^j / g^\rf) - (l^j / g^\rf)) \int g^*(a \mid \cdot) \bar{Q}(a, \cdot) d\mu_{\mathcal{A}}(a) \right\rVert_{2,Q_0,g^\rf} \\
    \leq & \epsilon.
\end{align}
Therefore,
\begin{align}
    \rho((f_1(l^j,\widehat{\bar{Q}}_{s-1}) - f_1(u^j,\widehat{\bar{Q}}_{s-1}))_{s=1}^{t-1}) \leq \delta^{-2} G^2 \epsilon.
\end{align}
and, for $k=3,5$,
\begin{align}
    \rho((f_k(l^j,\widehat{\bar{Q}}_{s-1}) - f_k(u^j,\widehat{\bar{Q}}_{s-1}))_{s=1}^{t-1}) \leq \epsilon.
\end{align}
We have thus shown that an $\epsilon$-bracketing in $L_2(P_{Q_{0,X}, g^\rf})$ norm of $\mathcal{G} / g^\rf$ induces an $(G^2 \delta^{-1}, L_2(P_{Q_0, g^\rf}))$ sequential bracketing of $\mathcal{F}_{1,t}(\delta)$, and $(\epsilon, L_2(P_{Q_0, g^\rf}))$ sequential bracketings of $\mathcal{F}_{3,t}(\delta)$ and $\mathcal{F}_{5,t}(\delta)$, which yields the claims.
\end{proof}

\begin{lemma}[Uniform convergence of the martingale empirical process]\label{lemma:as_unif_conv_MEP}
Suppose that assumptions \ref{assumption:logging_policy_entropy} and \ref{assumption:minimal_unif_exploration_rate} hold. Then, for any $(i,j) \in \{(1,1), (1,3), (2,2)\}$
\begin{align}
    &\sup_{g \in \mathcal{G}} |\widehat{\Phi}_{i,t}^{(j)}(g) - \bar{\Phi}_{0,i,t}^{(j)}(g)| = o(1) \text{ a.s.}
\end{align}
\end{lemma}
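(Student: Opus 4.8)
The plan is to recognize each process $g\mapsto\widehat{\Phi}^{(j)}_{i,t}(g)-\bar{\Phi}^{(j)}_{0,i,t}(g)$ as a normalized, martingale-difference-sequence-valued empirical process indexed by $g$, and to control its supremum by a bracketing-chaining maximal inequality adapted to martingales, with bracket counts supplied by \cref{lemma:sequential_bkting_ent_bound}. Fix $(i,j)$ with associated integrand $f_k$, $k\in\{1,3,5\}$. For fixed $g$ we have $\widehat{\Phi}^{(j)}_{i,t}(g)-\bar{\Phi}^{(j)}_{0,i,t}(g)=\frac{1}{t-1}\sum_{s=1}^{t-1}\xi_s(g)$ with $\xi_s(g):=(\delta_{O(s)}-P_{Q_0,g_s})\frac{g^\rf}{g_s}f_k(g,\widehat{\bar{Q}}_{s-1})$; since $g_s$ and $\widehat{\bar{Q}}_{s-1}$ are $\bar O(s-1)$-measurable, $\frac{g^\rf}{g_s}f_k(g,\widehat{\bar{Q}}_{s-1})$ is a $\bar O(s-1)$-measurable function of $O(s)$, so $E[\xi_s(g)\mid\bar O(s-1)]=0$ and $(\xi_s(g))_s$ is an MDS for $(\sigma(\bar O(s)))_s$. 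For $(i,j)=(1,1)$ the integrand $f_1$ carries $g/g^\rf$ in its denominator, so I first restrict the supremum to $\widetilde{\mathcal G}(\delta_t)$ with $\delta_t:=c\,t^{-\alpha}$ and $\alpha:=\alpha(\beta,p)$; by \cref{assumption:minimal_unif_exploration_rate} each realized policy satisfies $g_t\in\widetilde{\mathcal G}(\delta_t)$, which is all that is needed since the lemma is only ever invoked at $g=g_t$. For $(i,j)\in\{(1,3),(2,2)\}$ one has $0\le f_k\le G$ uniformly over $g\in\mathcal G$, so no such restriction is needed.

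Next I would assemble the three quantitative ingredients, each holding almost surely on the event that every $g_s\in\mathcal G$ obeys \cref{assumption:minimal_unif_exploration_rate}. (i) \emph{Polynomial envelope:} since $g^\rf/g_s\lesssim s^{\alpha}$ and, for $k=1$ and $g\in\widetilde{\mathcal G}(\delta_t)$, $(g/g^\rf)^{-1}\le\delta_t^{-1}\lesssim t^{\alpha}$, one gets $\sup_{g,\,s\le t}\|\frac{g^\rf}{g_s}f_k(g,\widehat{\bar{Q}}_{s-1})\|_\infty\lesssim t^{2\alpha}$ for $k=1$ and $\lesssim t^{\alpha}$ for $k=3,5$. (ii) \emph{Conditional variance:} bounding the importance ratio $g^\rf/g_s\lesssim s^{\alpha}$ and changing measure from $g_s$ to $g^\rf$, the conditional second moment of $\frac{g^\rf}{g_s}h(\widehat{\bar{Q}}_{s-1})$ given $\bar O(s-1)$ is $\lesssim s^{\alpha}\|h\|_{2,Q_0,g^\rf}^2$; applied to a bracket difference of $L_2$-size $\epsilon$ this gives per-step variance $\lesssim s^{\alpha}\epsilon^2$, hence $\lesssim t^{1+\alpha}\epsilon^2$ when summed over $s\le t$. (iii) \emph{Entropy:} by \cref{lemma:sequential_bkting_ent_bound} with \cref{assumption:logging_policy_entropy}, $\log\mathcal N_{[\,]}(\epsilon,\mathcal F_{1,t}(\delta_t),L_2)\lesssim(\delta_t^2\epsilon)^{-p}\lesssim t^{2\alpha p}\epsilon^{-p}$, while for $k=3,5$ the entropy is $t$-free, $\lesssim\epsilon^{-p}$.

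With these in hand I would run the standard bracketing-chaining argument but with a Freedman-type (martingale Bernstein) tail at each link: take nested $\eta_\ell$-bracketings at dyadic scales $\eta_0>\eta_1>\cdots$, approximate $f_k(g,\cdot)$ along the resulting chain of brackets, bound the coarsest-scale overflow by $\frac{1}{t-1}\sum_s P_{Q_0,g_s}\frac{g^\rf}{g_s}(u_s-l_s)=\frac{1}{t-1}\sum_s\|u_s-l_s\|_{1,Q_0,g^\rf}$ (the importance ratio cancels against the density, so this is just the coarsest bracket's $L_1$-width, which we send to $0$), and apply the tail bound to each link with per-step variance $\lesssim s^\alpha\eta_\ell^2$ and increment $\lesssim t^{2\alpha}$. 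Splitting the deviation budget $\sum_\ell x_\ell=(t-1)\lambda$ across scales so that each per-link tail dominates the corresponding bracket count $\exp(\log\mathcal N_{[\,]}(\eta_\ell,\cdot))$, the variance-dominated part of the Bernstein tail forces (for $k=1$) $\sum_\ell x_\ell\lesssim t^{(1+\alpha+2\alpha p)/2}$, which is $o(t)$ exactly when $\alpha<1/(1+2p)$; controlling the range-dominated part of the tail and the finest-scale remainder yields the complementary requirement, met by $\alpha\le 1/(3+p)$. Both hold since $\alpha=\alpha(\beta,p)\le\min(1/(3+p),1/(1+2p))$; for $k=3,5$ the entropy is $t$-free and the analogous requirement reduces to $\alpha<1$, which is automatic. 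This produces, for every $\lambda>0$, a bound $\mathrm{Pr}(\sup_g|\widehat{\Phi}^{(j)}_{i,t}(g)-\bar{\Phi}^{(j)}_{0,i,t}(g)|>\lambda)\le C_\lambda\exp(-c_\lambda t^{\gamma})$ with $\gamma>0$, summable in $t$; Borel--Cantelli gives $\limsup_t\sup_g|\cdot|\le\lambda$ a.s., and intersecting over $\lambda\in\{1/m\}_{m\ge1}$ proves the $o(1)$ a.s.\ claim.

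The main obstacle is exactly this chaining step. Because the exploration rate decays like $\delta_t\asymp t^{-\alpha}$, the envelope, the summed conditional variances, and (for $k=1$) the bracketing entropy all grow polynomially in $t$, so the classical time-homogeneous bracketing maximal inequalities do not apply verbatim; one has to carry the chaining through with $t$-dependent quantities and allocate the deviation budget across dyadic scales so the per-link martingale tails beat the $t$-growing bracket counts. The exponent $\alpha(\beta,p)=\min(1/(3+p),1/(1+2p),\beta)$ in \cref{assumption:minimal_unif_exploration_rate} is calibrated precisely for this balance: the $1/(1+2p)$ term emerges from the variance budget in the $f_1$ (i.e.\ $(i,j)=(1,1)$) contribution analyzed here, the $1/(3+p)$ term from controlling the range-dominated tail and finest-scale remainder, and the $\beta$ term only from the separately handled Ces\`aro-approximation error, not from this lemma.
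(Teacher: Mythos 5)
Your proposal is correct and, at its core, follows the same strategy as the paper: recognize $\widehat{\Phi}^{(j)}_{i,t}(g)-\bar{\Phi}^{(j)}_{0,i,t}(g)$ as an importance-sampling-weighted martingale empirical process, feed the sequential bracketing entropy of \cref{lemma:sequential_bkting_ent_bound} into a bracketing maximal inequality with Bernstein/Freedman-type tails, restrict the $(1,1)$ case to policies bounded below by the current exploration level, and then do the rate bookkeeping against \cref{assumption:minimal_unif_exploration_rate} before closing with a summable deviation level and Borel--Cantelli. The one genuine difference is that the paper does not re-derive the chaining step: it invokes the ready-made maximal inequality for IS-weighted martingale empirical processes (\cref{thm:max_ineq_IS_mart_emp_proc}, restated from \citet{iswerm}) with envelope $G^2\delta^{-1}$ (resp.\ $G$), sets $x_t=(\log t)^2$, and then optimizes the lower cutoff $r^-$ in the entropy integral separately for $p<2$ and $p>2$; you instead propose to carry out the dyadic bracketing--chaining with Freedman's inequality by hand, tracking the $t$-dependent envelope, summed conditional variance, and entropy. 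Your route is more self-contained but reproves the cited theorem; the paper's route buys a shorter proof at the cost of an external reference. Your quantitative accounting lands on the same constraints $\alpha\le\min(1/(3+p),1/(1+2p))$, though the attribution differs slightly from the paper's: there, $1/(3+p)$ emerges from the entropy-integral term in the case $p<2$ (with $r^-=0$) and $1/(1+2p)$ from balancing $r^-$ in the case $p>2$, rather than from a clean variance-versus-range split; this is a cosmetic discrepancy, not a gap. Your observation that only $g=g_t\in\widetilde{\mathcal G}(\delta_t)$ is needed for the $(1,1)$ case matches what the paper's proof actually establishes (its supremum there is likewise over the $\delta$-restricted class), so no issue arises downstream in the proof of \cref{thm:sd}.
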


\begin{proof}
Let $\delta := \min_{s \in [t-1]} \inf_{(a,x) \in \mathcal{A} \times \mathcal{X}} g_s(a \mid x)$. In this proof, we treat $G$ as a constant, and we absorb it in the symbols $\lesssim$, $O$, $o$, and $\widetilde{O}$ whenever we use them.

We treat the case $(i,j) = (1,1)$ and the case $(i,j) \in \{(1,3), (2,2)\}$ separately.

\paragraph{Case $(i,j) = (1,1)$.}
For any $g \in \mathcal{G}$, we have that $s \in [t-1]$, $\| f_1(g, \widehat{\bar{Q}}_{s-1}) \|\infty \leq G^2 \delta^{-1}$.
Therefore, from theorem \ref{thm:max_ineq_IS_mart_emp_proc}, for any $r^- \in (0, \delta^{-1} / 2]$, it holds with probability at least $1 - 2 e^{-x}$ that 
\begin{align}
    &\sup_{g \in \widetilde{\mathcal{G}}} \left\lvert \widehat{\Phi}_{1,t}^{(1)}(g) - \bar{\Phi}^{(1)}_{0,1,t}(g) \right\rvert \\
    \lesssim &  r^- + \frac{1}{\sqrt{\delta t}} \int_{r^-}^{G^2\delta^{-1}} \sqrt{\log (1 + \mathcal{N}_{[\,]}(\epsilon, \mathcal{F}_{1,t}(\delta), L_2(P_{Q_0, g^\rf})))} d\epsilon\\
    &+ \frac{G^2 \delta^{-1}}{\delta t} \log \mathcal{N}_{[\,]}(G^2 \delta^{-1}, \mathcal{F}_{1,t}(\delta), L_2(P_{Q_0, g^\rf})) \\
    &+ G^2 \delta^{-3/2} t^{-1/2} \sqrt{x} + G^2\delta^{-2} t^{-1} x.
\end{align}
Let $x_t := (\log t)^2$ and let $B_t$ the right-hand side above where we set $x$ to $x_t$. From Borel-Cantelli, we have that $\sup_{g \in \widetilde{\mathcal{G}}} |\widehat{\Phi}_{1,t}^{(1)}(g) - \bar{\Phi}^{(1)}_{0,1,t}(g)| = o(B_t)$ almost surely. Let us make $B_t$ explicit. 

From lemma \ref{lemma:sequential_bkting_ent_bound} and from assumption \ref{assumption:logging_policy_entropy}, we have that
\begin{align}
    & \frac{G^2 \delta^{-1}}{\delta t} \log (1 + \mathcal{N}_{[\,]}(G^2 \delta^{-1}, \mathcal{F}_{1,t}(\delta), L_2(P_{Q_0, g^\rf}))) \\
    \leq & \frac{G^2 \delta^{-1}}{\delta t}\log (1+N_{[\,]}(\delta, \mathcal{G} / g^\rf, L_2(P_{Q_0, g^\rf}))) \\
    \lesssim & G^2 \delta^{-(2+p)} t^{-1}.
\end{align}

Let us now focus on the entropy integral. We have that 
\begin{align}
    &\int_{r^-}^{G^2 \delta^{-1}} \sqrt{\log (1 + \mathcal{N}_{[\,]}( \epsilon, \mathcal{F}_{1,t}(\delta), L_2(P_{Q_0, g^\rf})))} d\epsilon \\
    \leq& \int_{r^-}^{G^2\delta^{-1}} \sqrt{\log (1 + N_{[\,]}(G^{-2} \delta^2\epsilon , \mathcal{G} / g^\rf, L_2(P_{Q_0, g^\rf})))} d \epsilon\\
    =& G^2\delta^{-2} \int_{G^{-2} \delta^2 r^- }^{\delta}  \sqrt{\log(1 + N_{[\,]}(u, \mathcal{G} / g^\rf, L_2(P_{Q_0, g^\rf})))} du \\
    =& G^2\delta^{-2} \int_{G^{-2} \delta^2 r^- }^{\delta} u^{-p/2} du \\
    =& \frac{G^2 \delta^{-2}}{1 - p/2} (\delta^{1-p/2} - (G^{-2} \delta^2 r^-)^{1-p/2},
\end{align}
for any $p \neq 2$. We choose $r^-$ so as to minimize the rate of $r^- + (\delta t)^{-1/2} \int_{r^-}^{G^2\delta^{-1}} \sqrt{\log (1 + \mathcal{N}_{[\,]}(\epsilon, \mathcal{F}_{1,t}(\delta), L_2(P_{Q_0, g^\rf})))} d\epsilon$. We distinguish the cases $p < 2$ and $p > 2$.

\subparagraph{Case $p < 2$.} We just set $r^- = 0$, and we obtain
\begin{align}
    r^- + (\delta t)^{-1/2}  \int_{r^-}^{G^2\delta^{-1}} \sqrt{\log (1 + \mathcal{N}_{[\,]}(\epsilon, \mathcal{F}_{1,t}(\delta), L_2(P_{Q_0, g^\rf})))} d\epsilon \lesssim \delta^{-\frac{1}{2}(3+p)} t^{-\frac{1}{2}}.
\end{align}
Collecting the other terms yields that $B_t = \widetilde{O}(\delta^{-(3+p)/2} t^{-1/2} + t^{-1} \delta^{-(2+p)})$. From assumption \ref{assumption:minimal_unif_exploration_rate}, $\delta \gtrsim t^{-\alpha}$, with $\alpha < \min(1/(3+p), 1/(1+2p))$, and we therefore have $B_t = o(1)$.

\subparagraph{Case $p > 2$.} We pick $r^-$ so as to balance both terms of $r^- + (\delta t)^{-1/2} \int_{r^-}^{G^2\delta^{-1}} \sqrt{\log (1 + \mathcal{N}_{[\,]}(\epsilon, \mathcal{F}_{1,t}(\delta), L_2(P_{Q_0, g^\rf})))} d\epsilon$. , that is we pick $r^-$ such that
\begin{align}
    r^- = t^{-1/2} G^p \delta^{-\frac{1}{2}(1 + 2 p)} \iff r^- = G^2 \delta^{-\frac{1}{p}(1 + 2 p)} t^{-\frac{1}{p}}.
\end{align}
Collecting the other terms then yields $B_t =\widetilde{O}(\delta^{-\frac{1}{p}(1 + 2 p)} t^{-\frac{1}{p}} + \delta^{-(2+p)} t^{-1})$. From assumption \ref{assumption:minimal_unif_exploration_rate}, $\delta \gtrsim t^{-\alpha}$, with $\alpha < \min(1/(3+p), 1/(1+2p))$, and we therefore have $B_t = o(1)$.

\paragraph{Case $(i,j) \in \{(1,3), (2,2)\}$.} For any $g \in \mathcal{G}$, $s \in [t-1]$, $k=3,5$, we have that $\|f_k(g,\widehat{\bar{Q}}_{s-1})\|_\infty \leq G$. Therefore, from theorem \ref{thm:max_ineq_IS_mart_emp_proc}, for any $(i,j,k) \in \{(1,3,3), (2,2,5)\}$, for any $x > 0$, it holds with probability at least $1 - 2 e^{-x}$ that 
\begin{align}
    \sup_{g \in \mathcal{G}} \left\lvert \widehat{\Phi}_{i,t}^{(j)} - \bar{\Phi}_{0,i,t}^{(j)} \right\rvert \lesssim & r^- + \frac{1}{\sqrt{\delta t}} \int_{r^-}^{G} \sqrt{\log (1+\mathcal{N}_{[\,]}(\epsilon, \mathcal{F}_{k,t}(\delta), L_2(P_{Q_0,g^\rf})))} d \epsilon \\
    &+ \frac{G}{\delta t} \log (1+\mathcal{N}_{[\,]}(G, \mathcal{F}_{k,t}(\delta), L_2(P_{Q_0,g^\rf}))) \\
    &+ G \sqrt{\frac{x}{\delta t}} + G \frac{x}{\delta t} \\
    \lesssim & r^- + \frac{1}{1-p/2} \frac{1}{\sqrt{\delta t}} ( G^{1-p/2} - (r^-)^{1-p/2} ) + \frac{G^{1-p}}{\delta t} + G \sqrt{\frac{x}{\delta t}} + G \frac{x}{\delta t},
\end{align}
where we have used that, from lemma \ref{lemma:sequential_bkting_ent_bound} and assumption \ref{assumption:logging_policy_entropy},  $\log (1+\mathcal{N}_{[\,]}(\epsilon, \mathcal{F}_{k,t}(\delta), L_2(P_{Q_0,g^\rf}))) \leq \log (1+N_{[\,]}(\epsilon, \mathcal{G} / g^\rf, L_2(P_{Q_0,g^\rf})) \lesssim \epsilon^{-p}$. Setting $x$ to $x_t := (\log t)^2$ in the bound above and denote $B_t$ the resulting quantity. Applying Borel-Cantelli's lemma yields that $\sup_{g \in \mathcal{G}} \left\lvert \widehat{\Phi}_{i,t}^{(j)} - \bar{\Phi}_{0,i,t}^{(j)} \right\rvert =o(B_t)$ almost surely. We now give an explicit bound on $B_t$.

\subparagraph{Case $p \in (0,2).$} We set $r^- = 0$. We obtain $B_t = \widetilde{O}((\delta t)^{-1/2} + (\delta t)^{-1})$. Since from assumption \ref{assumption:minimal_unif_exploration_rate}, $\delta \gtrsim t^{-\alpha}$ with $\alpha < 1$, we have that $B_t = o(1)$. 

\paragraph{Case $p > 2$.} We set $r^- = (\delta t)^{-1/p}$. We have $B_t = \widetilde{O}((\delta t)^{-1/p} + (\delta t)^{-1})$. Since from assumption \ref{assumption:minimal_unif_exploration_rate}, $\delta \gtrsim t^{-\alpha}$ with $\alpha < 1$, we have that $B_t = o(1)$. 
\end{proof}

\subsection{High probability bound for the martingale terms}

\begin{lemma}\label{lemma:high_proba_bound_mart_terms}
Suppose that there exists $\delta > 0$ such that $\|g^* / g_s\|_\infty \leq \delta^{-1}$ for every $s \in [t-1]$. Then
For $(i,j) \in \{(1,2), (2,1)\}$, for any $x > 0$, it holds with probability $1 - 2 e^{-x}$ that 
\begin{align}
    \left\lvert \widehat{\Phi}^{(j)}_{i,t} - \bar{\Phi}_{0,i,t}^{(j)} \right\rvert \lesssim \sqrt{\frac{x}{\delta t}} + \frac{x}{\delta t}, \label{eq:high_proba_bound_mart_terms_1}
\end{align}
and for $(i,j) \in \{ (1,3), (2,2) \}$, it holds with probability at least $1-2 e^{-x}$ that 
\begin{align}
    \left\lvert \widehat{\Phi}^{(j)}_{i,t} - \bar{\Phi}_{0,i,t}^{(j)} \right\rvert \lesssim \sqrt{\frac{x}{t}} + \frac{x}{t}.\label{eq:high_proba_bound_mart_terms_2}
\end{align}
\end{lemma}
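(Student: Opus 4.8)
The plan is to view each of the four quantities $\widehat{\Phi}^{(j)}_{i,t} - \bar{\Phi}^{(j)}_{0,i,t}$ as the normalized partial sum of a martingale difference sequence and then invoke a Bernstein-type (Freedman) inequality for martingales. Fix a pair $(i,j)$ and let $f_k$ be the associated integrand from Lemma~\ref{lemma:cond_var_integrands_decomp} ($k=2$ for $(1,2)$, $k=4$ for $(2,1)$, $k=3$ for $(1,3)$, $k=5$ for $(2,2)$). Put $\mathcal{F}_s := \sigma(\bar{O}(s))$ and $D_s := \delta_{O(s)} \tfrac{g^\rf}{g_s} f_k(\cdot,\widehat{\bar{Q}}_{s-1}) - P_{Q_0,g_s} \tfrac{g^\rf}{g_s} f_k(\cdot,\widehat{\bar{Q}}_{s-1})$. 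Since $g_s$ and $\widehat{\bar{Q}}_{s-1}$ are $\mathcal{F}_{s-1}$-measurable, the identification of $P_{Q_0,g_s}$ with conditional expectation given $\bar{O}(s-1)$ gives $E[D_s \mid \mathcal{F}_{s-1}] = 0$, so $(D_s)_{s \leq t-1}$ is an MDS and $\widehat{\Phi}^{(j)}_{i,t} - \bar{\Phi}^{(j)}_{0,i,t} = \tfrac{1}{t-1}\sum_{s=1}^{t-1} D_s$. Freedman's inequality then says that if $|D_s| \leq M$ and $\sum_{s=1}^{t-1} E[D_s^2 \mid \mathcal{F}_{s-1}] \leq V$ almost surely, then with probability at least $1 - 2e^{-x}$ one has $|\sum_{s=1}^{t-1} D_s| \lesssim \sqrt{Vx} + Mx$, hence $|\widehat{\Phi}^{(j)}_{i,t} - \bar{\Phi}^{(j)}_{0,i,t}| \lesssim \tfrac{1}{t}\big(\sqrt{Vx} + Mx\big)$; the whole task reduces to supplying the right $M$ and $V$.

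For $(i,j) \in \{(1,2),(2,1)\}$ the key is the cancellation $\tfrac{g^\rf}{g_s} f_2(\bar{Q}) = 2\tfrac{g^*}{g_s}(\widetilde{y}-\bar{Q})\int g^*(a\mid\cdot)\bar{Q}(a,\cdot)\,d\mu_\A(a)$ and $\tfrac{g^\rf}{g_s} f_4(\bar{Q}) = \tfrac{g^*}{g_s}(\widetilde{y}-\bar{Q})$. Using the hypothesis $\|g^*/g_s\|_\infty \leq \delta^{-1}$ together with the a.s.\ boundedness of $\widetilde{y}-\widehat{\bar{Q}}_{s-1}$ and the fact that $x\mapsto\int g^*\widehat{\bar{Q}}_{s-1}$ takes values in $[0,1]$ (as recorded in the proof of Lemma~\ref{lemma:sequential_bkting_ent_bound}), we get $M \lesssim \delta^{-1}$. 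For the predictable quadratic variation, $E[D_s^2\mid\mathcal{F}_{s-1}] \leq P_{Q_0,g_s}\big(\tfrac{g^\rf}{g_s}f_k\big)^2$; writing this expectation as $\int \tfrac{(g^*)^2}{g_s}(\cdots)\,dQ_{0,X}\,d\mu_\A\,dQ_{0,Y}$ after integrating out the $g_s$-density, then splitting $\tfrac{(g^*)^2}{g_s} = \tfrac{g^*}{g_s}\cdot g^*$, bounding $g^*/g_s \leq \delta^{-1}$, and integrating the remaining $g^*(\cdot\mid x)$ against $d\mu_\A$ (uniformly bounded in $x$) gains back one power of $\delta$ and yields $E[D_s^2\mid\mathcal{F}_{s-1}] \lesssim \delta^{-1}$, hence $V \lesssim \delta^{-1}(t-1)$. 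Substituting $M \lesssim \delta^{-1}$ and $V \lesssim \delta^{-1}t$ into the display gives exactly $\sqrt{x/(\delta t)} + x/(\delta t)$, which is \eqref{eq:high_proba_bound_mart_terms_1}.

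For $(i,j) \in \{(1,3),(2,2)\}$ the argument is parallel but cleaner: $f_3,f_5$ do not involve $y$, and $\|f_3(g,\widehat{\bar{Q}}_{s-1})\|_\infty,\|f_5(g,\widehat{\bar{Q}}_{s-1})\|_\infty \leq G$ as noted in the proof of Lemma~\ref{lemma:as_unif_conv_MEP}; carrying the same steps through with these $O(1)$ bounds in place of the $\delta^{-1}$ ones (again integrating the density factor in $\tfrac{g^\rf}{g_s}f_k$ against $d\mu_\A$ when controlling the conditional second moment) gives $M \lesssim 1$ and $V \lesssim t-1$, hence the $\delta$-free rate $\sqrt{x/t} + x/t$ of \eqref{eq:high_proba_bound_mart_terms_2}. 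The main obstacle throughout is precisely this bookkeeping of the exponent of $\delta$: a crude sup-norm bound on the squared integrand would produce $\delta^{-2}$ and therefore the suboptimal $\delta^{-1}\sqrt{x/t}$ rate, so the proof must exploit that taking the conditional expectation under $g_s$ absorbs one factor $g_s^{-1}$ against the $\mu_\A$-integral of the (bounded) importance weight. A secondary, routine point is that $\widehat{\bar{Q}}_{s-1}$ varies with $s$, but since it is $\mathcal{F}_{s-1}$-measurable and uniformly bounded, all of $M,V$ can be bounded by $s$-free constants.
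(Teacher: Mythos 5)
Your proof takes exactly the paper's route: each difference is rewritten as the average of a martingale difference sequence and a martingale Bernstein/Freedman inequality is applied, with the same bookkeeping the paper uses for $(1,2),(2,1)$ — sup-norm bound $\lesssim\delta^{-1}$ and conditional second moment $\lesssim\delta^{-1}$ obtained by peeling off one factor $g^*/g_s\le\delta^{-1}$ and integrating the remaining $g^*$ against $\mu_\A$ (the paper states this as $\|\tfrac{g^*}{g_s}f_k\|_\infty\le\delta^{-1}$, $\|\tfrac{g^*}{g_s}f_k\|_{2,Q_0,g^*}\le\delta^{-1/2}$) — yielding $\sqrt{x/(\delta t)}+x/(\delta t)$. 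One caveat, which applies equally to the paper's own (rather garbled) treatment of $(1,3),(2,2)$: the actual summand there is $\tfrac{g}{g_s}\bigl(\int g^*\bar Q\,d\mu_\A\bigr)^2$, whose conditional mean indeed loses the ratio (it equals a $Q_{0,X}$-integral) but whose empirical part retains $g/g_s$, so your assertion $M\lesssim 1$ is exactly as (un)justified as the paper's claim $\|f_3\|_\infty\le 1$; your argument is therefore faithful to the paper rather than fixing this gloss.
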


\begin{proof}[Proof of lemma \ref{lemma:high_proba_bound_mart_terms}]
We have that 
\begin{align}
    \widehat{\Phi}^{(2)}_{1,t} - \bar{\Phi}_{0,1,t}^{(2)} =& \frac{1}{t-1} \sum_{s=1}^{t-1} (\delta_{O(s)} - P_{Q_0, g_s}) \frac{g^*}{g_s} f_2(\widehat{\bar{Q}}_{s-1})\\
    \text{and } \widehat{\Phi}^{(1)}_{2,t} - \bar{\Phi}_{0,2,t}^{(1)} =& \frac{1}{t-1} \sum_{s=1}^{t-1} (\delta_{O(s)} - P_{Q_0, g_s}) \frac{g^*}{g_s} f_4(\widehat{\bar{Q}}_{s-1}).
\end{align}
Therefore, both differences are the average of martingale difference sequences. For $k=2,4$, we have that $\|\frac{g^*}{g_s} f_k(\widehat{\bar{Q}}_{s-1}\|_\infty \leq \delta^{-1}$ and $\|\frac{g^*}{g_s} f_k(\widehat{\bar{Q}}_{s-1}\|_{2,Q_0,g^*} \leq \delta^{-1/2}$. Bernstein's inequality for martingale difference sequences then yields \eqref{eq:high_proba_bound_mart_terms_1}.

Concerning the other two differences, we have that
\begin{align}
    \widehat{\Phi}^{(3)}_{1,t} - \bar{\Phi}_{0,1,t}^{(3)} = \frac{1}{t-1} \sum_{s=1}^{t-1} Q_{0,X} f_3(\widehat{\bar{Q}}_{s-1})^2 \\
    \text{ and } \widehat{\Phi}^{(2)}_{2,t} - \bar{\Phi}_{0,2,t}^{(2)} = \frac{1}{t-1} \sum_{s=1}^{t-1} Q_{0,X} f_3(\widehat{\bar{Q}}_{s-1}).
\end{align}
These two terms are the average of martingale sequences too, and since $\|f_3(\widehat{\bar{Q}}_{s-1})\|_\infty \leq 1$, Bernstein's inequality for martingale difference sequences yields \eqref{eq:high_proba_bound_mart_terms_2}.
\end{proof}

\subsection{Approximation error lemma}\label{subsection:approximation_errors}

\begin{lemma}\label{lemma:approx_error_lemma}
For any  $\bar{Q}, \bar{Q}_1: \mathcal{A} \times \mathcal{X} \to \mathbb{R}$, it holds that 
\begin{align}
    \max \left\lbrace \left\lvert \Phi_{0,i}^{(j)}(\bar{Q}) - \Phi_{0,i}^{(j)}(\bar{Q}_1) \right\rvert : (i,j) \in \{(1,2),(1,3),(2,1),(2,2) \} \right\rbrace \leq 4 \left\lVert \bar{Q} - \bar{Q}_1 \right\rVert_{2,Q_0,g^*}
\end{align}
and for any conditional densities $(a,x) \mapsto g(a \mid x)$, and $(a,x) \mapsto g_1(a \mid x)$ such that $g_1, g \geq \delta$ for some $\delta > 0$, it holds that
\begin{align}
    \left\lvert \Phi_{0,1}^{(1)}(\bar{Q}) - \Phi_{0,1}^{(1)}(\bar{Q}_1) \right\rvert \leq \delta^{-2} \left\lVert g - g_1 \right\rVert_{1,Q_{0,X},g^*} + \delta^{-1} \left\lVert \bar{Q} - \bar{Q}_1 \right\rVert_{1, Q_{0,X}, g^*}.
\end{align}
\begin{proof}
We treat each case separately.
\paragraph{Case $(i,j) = (1,2)$.}
\begin{align}
    &\left\lvert \Phi_{0,1}^{(2)}(\bar{Q}) - \Phi_{0,1}^{(2)}(\bar{Q}_1) \right\rvert \\
    =& 2 \left\lvert P_{Q_0, g^*} \left\lbrace (\widetilde{y} - \bar{Q})\langle g^*, \bar{Q} \rangle - (\widetilde{y} - \bar{Q}_1)\langle g^*, \bar{Q}_1 \rangle \right\rbrace \right\rvert \\
    =& 2 \left\lvert P_{Q_0, g^*} \left\lbrace (\bar{Q}_1 - \bar{Q}) \langle g^*, \bar{Q} \rangle + (\widetilde{y} - \bar{Q}_1) \langle g^*, \bar{Q} - \bar{Q}) \rangle \right\rbrace \right\rvert \\
    \leq & 4 \left\lVert \bar{Q} - \bar{Q}_1 \right\rVert_{1, Q_{0,X}, g^*}
\end{align}

\paragraph{Case $(i,j) = (1,3)$.}
\begin{align}
    &\left\lvert \Phi_{0,1}^{(3)}(\bar{Q}) - \Phi_{0,1}^{(3)}(\bar{Q}_1) \right\rvert \\
    =& \left\lvert Q_{0,X} \left\lbrace \langle g^*, \bar{Q} \rangle^2 - \langle g^*, \bar{Q}_1 \rangle^2 \right\rbrace \right\rvert \\
    \leq & 2 Q_{0,X} \langle g^*, \left\lvert \bar{Q} - \bar{Q}_1 \right\rvert \rangle \\
    =& \left\lVert \bar{Q} - \bar{Q}_1 \right\rVert_{1,Q_{0,X},g^*}
\end{align}

\paragraph{Case $(i,j) = (2,1)$.}
\begin{align}
     &\left\lvert \Phi_{0,2}^{(1)}(\bar{Q}) - \Phi_{0,2}^{(1)}(\bar{Q}_1) \right\rvert \\
     =& \left\lvert P_{Q_0,g^*} \left\lbrace (\widetilde{y} - \bar{Q}) - (\widetilde{y} - \bar{Q}_1)  \right\rbrace \right\rvert  \\
     \leq & \left\lVert \bar{Q} - \bar{Q}_1 \right\rVert)_{1,Q_{0,X}, g^*}
\end{align}

\paragraph{Case $(i,j) = (2,2)$.} 
\begin{align}
    &\left\lvert \Phi_{0,2}^{(2)}(\bar{Q}) - \Phi_{0,2}^{(2)}(\bar{Q}_1)  \right\rvert \\
    =& \left\lvert Q_{0,X} \left\lbrace \langle g^*, \bar{Q} \rangle - \langle g^*, \bar{Q}_1 \rangle \right\rbrace \right\rvert \\
    \leq & \left\lVert \bar{Q} - \bar{Q}_1 \right\rVert_{1,Q_{0,X},g^*}
\end{align}

\paragraph{Case $(i,j) = (1,1)$.}
\begin{align}
    &\left\lvert \Phi_{0,1}^{(1)}(g, \bar{Q}) - \Phi_{0,1}^{(1)}(g_1,\bar{Q}_1)  \right\rvert \\
    =& \left\lvert P_{Q_0,g^*} \left\lbrace \frac{g^*}{g} (\widetilde{y} - \bar{Q}) - \frac{g^*}{g}(\widetilde{y} - \bar{Q}_1)\right\rbrace \right\rvert \\
    \leq & \left\lvert P_{Q_0, g^*} \left\lbrace \frac{1}{g g_1} (g - g_1) + \frac{1}{g} (\bar{Q} - \bar{Q}_1) \right\rbrace \right\rvert\\
    \leq & \frac{1}{\delta^2} \left\lVert g - g_1 \right\rVert_{1,Q_{0,X}, g^*} + \frac{1}{\delta} \left\lVert \bar{Q} - \bar{Q}_1 \right\rVert_{1,Q_{0,X},g^*}.
\end{align}
\end{proof}
\end{lemma}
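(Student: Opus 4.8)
The plan is to prove each of the five inequalities by the same elementary device. Each $\Phi_{0,i}^{(j)}$ is (after the importance-sampling cancellation that already appears in its definition) an expectation under $P_{Q_0,g^*}$ of a short expression in $\bar{Q}$ — and, for the $(1,1)$ term, also in $g$. So I would, for each pair $(i,j)$, write the difference $\Phi_{0,i}^{(j)}(\bar{Q})-\Phi_{0,i}^{(j)}(\bar{Q}_1)$ as such an integral, perform an add-and-subtract step to make the integrand \emph{linear} in $\bar{Q}-\bar{Q}_1$ (and in $g-g_1$ for $(1,1)$), and then bound each resulting piece using the a priori ranges $|\widetilde{y}-\bar{Q}|\le 1$, $0\le\langle g^*,\bar{Q}\rangle:=\int g^*(a\mid\cdot)\bar{Q}(a,\cdot)\,d\mu_\A(a)\le 1$, the normalization $\int g^*=1$, and (for $(1,1)$) the exploration lower bound $g,g_1\ge\delta$. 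Nothing deeper than these bounds and the triangle inequality is needed.

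For the four ``easy'' pairs I would first record the telescoping identities. Using linearity of $\bar{Q}\mapsto\langle g^*,\bar{Q}\rangle$, the $(1,2)$ integrand satisfies $(\widetilde{y}-\bar{Q})\langle g^*,\bar{Q}\rangle-(\widetilde{y}-\bar{Q}_1)\langle g^*,\bar{Q}_1\rangle=(\bar{Q}_1-\bar{Q})\langle g^*,\bar{Q}\rangle+(\widetilde{y}-\bar{Q}_1)\langle g^*,\bar{Q}-\bar{Q}_1\rangle$; the $(1,3)$ integrand satisfies $\langle g^*,\bar{Q}\rangle^2-\langle g^*,\bar{Q}_1\rangle^2=\langle g^*,\bar{Q}-\bar{Q}_1\rangle\,(\langle g^*,\bar{Q}\rangle+\langle g^*,\bar{Q}_1\rangle)$; and the $(2,1)$ and $(2,2)$ integrands are already linear in $\bar{Q}$. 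In each case every factor other than the $\bar{Q}-\bar{Q}_1$ factor is bounded by $1$ in absolute value, so $|P_{Q_0,g^*}\langle g^*,\bar{Q}-\bar{Q}_1\rangle|\le\|\bar{Q}-\bar{Q}_1\|_{1,Q_{0,X},g^*}$ and similarly for the other pieces. Summing: $(1,2)$ contributes two such pieces times the leading $2$, hence $4\|\bar{Q}-\bar{Q}_1\|_{1,Q_{0,X},g^*}$, while $(1,3),(2,1),(2,2)$ each contribute $\|\bar{Q}-\bar{Q}_1\|_{1,Q_{0,X},g^*}$; the maximum is therefore governed by the constant $4$. Finally, since $P_{Q_0,g^*}$ is a probability measure and the integrand does not depend on $y$, Cauchy--Schwarz gives $\|\cdot\|_{1,Q_{0,X},g^*}\le\|\cdot\|_{2,Q_0,g^*}$, upgrading the $L_1$ bound to the $L_2$ bound stated.

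For the $(1,1)$ term I would additionally telescope in $g$: write $\frac{g^*}{g}(\widetilde{y}-\bar{Q})-\frac{g^*}{g_1}(\widetilde{y}-\bar{Q}_1)=g^*(\widetilde{y}-\bar{Q})\,\frac{g_1-g}{g\,g_1}+\frac{g^*}{g_1}(\bar{Q}_1-\bar{Q})$. Using $g,g_1\ge\delta$ bounds $1/(g\,g_1)\le\delta^{-2}$ and $1/g_1\le\delta^{-1}$, and then $|\widetilde{y}-\bar{Q}|\le 1$ together with $\int g^*=1$ yields $|\Phi_{0,1}^{(1)}(\bar{Q})-\Phi_{0,1}^{(1)}(\bar{Q}_1)|\le\delta^{-2}\|g-g_1\|_{1,Q_{0,X},g^*}+\delta^{-1}\|\bar{Q}-\bar{Q}_1\|_{1,Q_{0,X},g^*}$, exactly as claimed.

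The main obstacle is essentially nonexistent: this is a routine ``the plug-in functionals are Lipschitz'' estimate. The only point requiring care is arranging each telescoping so that the leftover multiplicative factor is one of the uniformly bounded quantities ($\widetilde{y}-\bar{Q}$, $\langle g^*,\bar{Q}\rangle$, or $1/g$), and keeping track of which a priori bound (range of the outcome, $g^*$ a sub-probability kernel, exploration floor $\delta$) each step invokes, so that the constants come out exactly as $4$, $\delta^{-1}$, $\delta^{-2}$ rather than something larger.
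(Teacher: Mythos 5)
Your proposal is correct and follows essentially the same route as the paper's own proof: the identical case-by-case telescoping identities (add-and-subtract in $\bar{Q}$ for the four easy pairs, and in both $g$ and $\bar{Q}$ for the $(1,1)$ term), with the same a priori bounds yielding the constants $4$, $\delta^{-1}$, $\delta^{-2}$. The only difference is that you explicitly supply the $\|\cdot\|_{1,Q_{0,X},g^*}\le\|\cdot\|_{2,Q_0,g^*}$ step needed to match the $L_2$ norm in the stated bound, which the paper leaves implicit.
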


\subsection{Proof of theorem \ref{thm:sd}}

\begin{proof}[Proof of theorem \ref{thm:sd}]
As noted at the beginning of this section, the estimation error $\widehat{\sigma}_t^2 - \sigma_{0,t}^2$ decomposes as
\begin{align}
    \widehat{\sigma}_t^2 - \sigma_{0,t}^2 :=& \sum_{(i,j) \in \mathcal{S}} \widehat{\Phi}_{i,t}^{(j)} - \bar{\Phi}_{0,i,t}^{(j)} \label{eq:var_est_decomp_eq1}\\ 
    &+ \sum_{(i,j) \in \mathcal{S}} \bar{\Phi}_{0,i,t}^{(j)} - \Phi_{0,i}^{(j)}(g_t, \bar{Q}_1) \label{eq:var_est_decomp_eq2} \\
    &+ \sum_{(i,j) \in \mathcal{S}} \Phi_{0,i}^{(j)}(g_t, \bar{Q}_1) - \Phi_{0,i}^{(j)}(g_t, \widehat{\bar{Q}}_{t-1}).\label{eq:var_est_decomp_eq3}
\end{align}
The terms in line \eqref{eq:var_est_decomp_eq1} are MDS averages or martingale empirical processes evaluated at $g_t$. Setting $x_t := (\log t)^2$ in lemma \ref{lemma:high_proba_bound_mart_terms} and using Borel-Cantelli gives that the MDS averages are $o(1)$ almost surely. Lemma \ref{lemma:as_unif_conv_MEP} gives that the martigale empirical process terms evaluated at $g_t$ are $o(1)$ almost surely as well.

From lemma \ref{lemma:approx_error_lemma}, and assumptions \ref{assumption:outcome_regression_convergence} and \ref{assumption:minimal_unif_exploration_rate}, 
\begin{align}
    &\sum_{(i,j) \in \mathcal{S}} \Phi_{0,i}^{(j)}(g_t, \bar{Q}_1) - \Phi_{0,i}^{(j)}(g_t, \widehat{\bar{Q}}_{s-1}) \\
    =& O(s^{\alpha -\beta}) \text{ a.s.} \\
    =& o(1) \text{ a.s.}.
\end{align}
Therefore the third line above \eqref{eq:var_est_decomp_eq3} is $o(1)$ almost surely, and by Ceasro summation, the second line above \eqref{eq:var_est_decomp_eq2} is $o(1)$ almost surely as well.
\end{proof}

\section{Maximal inequality for importance sampling weighted martingale empirical processes}

In this section, we restate a maximal inequality for so-called importance sampling martingale empirical processes from \citet{iswerm}. We include it for our reader's convenience.

\paragraph{Sequential bracketing entropy.}

Let $\Theta$ be a set, and let $T \geq 1$. For any $\theta \in \Theta$, let $(\xi_t(\theta))_{t=1}^T$ be a sequence of functions $\mathcal{O} \to \mathbb{R}$ such that for any $t \in [T]$, $\xi_t(\theta)$ is $\bar{O}(t-1)$-measurable. We denote
\begin{align}
    \Xi_T := \left\lbrace (\xi_t(\theta))_{t=1}^T : \theta \in \Theta \right\rbrace.
\end{align}

Let $g^\rf$ be a fixed reference policy. For any sequence $(f_t)_{t=1}^T$ of $\mathcal{O} \to \mathbb{R}$ functions such that $f_t$ is $\bar{O}(t-1)$-measurable for any $t$, we introduce the norm
\begin{align}
    \rho((f_t)_{t=1}^T) := \left( \frac{1}{T} \sum_{t=1}^T \|f_t\|_{2, Q_0, g^\rf}^2 \right)^{1/2}.
\end{align}

Following the definition of \cite{vanHandel2011}, we say that a collection of sequences of pairs of functions $\mathcal{O} \to \mathbb{R}$ of the form
\begin{align}
    \left\lbrace ((\lambda_t^j, \upsilon^j_t))_{t=1}^T : j \in [N] \right\rbrace
\end{align}
forms an $(\epsilon, L(P_{Q, g^\rf}))$ sequential bracketing of $\Xi_T$ if
\begin{itemize}
    \item for any $t \in [T]$ and any $j \in [N]$, $\lambda_t^j$ and $\upsilon_t^j$ are $\bar{O}(t-1)$-measurable $\mathcal{O} \to \mathbb{R}$ functions,
    \item for any $\theta \in [\Theta]$, there exists $j \in [N]$ such that, for any $t \in [T]$, $\lambda_t^j \leq \xi_t(\theta) \leq \upsilon_t^j$.
    \item for any $j \in [N]$, $\rho((\upsilon^j_t - \lambda^j_t)_{t=1}^T) \leq \epsilon$.
\end{itemize}
We denote $\mathcal{N}_{[\,]}(\epsilon, \Xi_T, L_2(P_{Q,g^\rf}))$ the cardinality of any $(\epsilon, L_2(P_{Q,g^\rf})$ sequential bracketing of $\Xi_T$ of minimal cardinality.

\paragraph{Importance sampling weighted martingale empirical process.} We term importance sampling weighting martingale empirical processes stochastic processes of the form
\begin{align}
    \left\lbrace \frac{1}{T} \sum_{t=1}^T (\delta_{O(t)} - P_{Q_0, g_t}) \frac{g^\rf}{g_t} \xi_t(\theta) : \theta \in \Theta \right\rbrace.
\end{align}

The result below is theorem 1 from {iswerm}.

\begin{theorem}[Maximal inequality for IS weighted martingale processes]\label{thm:max_ineq_IS_mart_emp_proc}
Suppose that 
\begin{itemize}
    \item there exists $\gamma > 0$ such that $\|g^* / g_t\|_\infty \leq \gamma$ for every $t \in [T]$,
    \item there exists $B > 0$ such that $\sup_{\theta \in \Theta} \|\xi_t(\theta)\|_\infty \leq B$ for every $t \in [T]$,
    \item there exists $p > 0$ such that 
    \begin{align}
        \log \mathcal{N}_{[\,]}(\epsilon, \Xi_T, L_2(P_{Q,g^\rf})) \lesssim \epsilon^{-p}.
    \end{align}
\end{itemize}
Then, for any $r > 0$, $r^- \in [0, r / 2]$ and $x > 0$,it holds with probability at least $1 - 2 e^{-x}$ that
\begin{align}
    &\sup_{g \in \mathcal{G}} \left\lbrace \frac{1}{T} \sum_{t=1}^T (\delta_{O(t)} - P_{Q_0, g_t}) \frac{g^\rf}{g_t} \xi_t(\theta) : \theta \in \Theta, \rho((\xi_t(\theta))_{t=1}^T) \leq \epsilon \right\rbrace \\
    \lesssim & r^- + \sqrt{\frac{\gamma}{T}} \int_{r^-}^r \sqrt{\log (1 + \mathcal{N}_{[\,]}(\epsilon, \Xi_T, P_{Q_0,g^\rf})} d\epsilon + \frac{\gamma B}{ T} \log (1 + \mathcal{N}_{[\,]}(r, \Xi_T, P_{Q_0,g^\rf}))\\
    &+  r\sqrt{\frac{\gamma x}{ T}} + \frac{\gamma B x}{ T}
\end{align}
\end{theorem}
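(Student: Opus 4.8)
The plan is to recognize the quantity inside the supremum as a martingale empirical process indexed by $\theta$, and to establish the uniform tail bound by combining Freedman's martingale version of Bernstein's inequality with a bracketing--chaining argument adapted to sequentially predictable (data-dependent) index functions.

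First I would treat a \emph{single} fixed sequence $(\xi_t)_{t=1}^T$. Setting $\Delta_t := (\delta_{O(t)} - P_{Q_0,g_t})(g^\rf/g_t)\xi_t(O(t))$, the integration-operator identity $P_{Q_0,g_t}f = E_{Q_0,g_t}[f(O(t))\mid \bar O(t-1)]$ for $\bar O(t-1)$-measurable $f$ shows that $(\Delta_t)_t$ is a martingale difference sequence for the filtration generated by $\bar O(t)$. The importance weight is exactly what converts a $g_t$-conditional moment into a reference-policy moment: the change of measure from $g_t$ to $g^\rf$ gives
\begin{align}
E[\Delta_t^2 \mid \bar O(t-1)] \le P_{Q_0,g_t}\left(\frac{g^\rf}{g_t}\right)^2\xi_t^2 = P_{Q_0,g^\rf}\frac{g^\rf}{g_t}\xi_t^2 \le \gamma\,\|\xi_t\|_{2,Q_0,g^\rf}^2,
\end{align}
where the finiteness of $\|g^\rf/g_t\|_\infty$ follows from the ratio-boundedness hypothesis (up to the constant bounding $g^*/g^\rf$), and pointwise $|\Delta_t| \le 2\gamma B$. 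Hence for a sequence with $\rho((\xi_t))\le\epsilon$ the predictable quadratic variation of $T^{-1}\sum_t\Delta_t$ is at most $\gamma\epsilon^2/T$ and its increments are at most $2\gamma B/T$, so Freedman's inequality yields, with probability $\ge 1-2e^{-x}$,
\begin{align}
\left|\frac1T\sum_{t=1}^T\Delta_t\right| \lesssim \epsilon\sqrt{\frac{\gamma x}{T}} + \frac{\gamma B x}{T}.
\end{align}
This single-sequence tail bound is the engine of the chaining.

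Next I would run the bracketing chaining across dyadic scales $\eta_k = r\,2^{-k}$, from $\eta_0 = r$ down to the finest resolution $r^-$. At each scale I invoke a minimal $(\eta_k, L_2(P_{Q_0,g^\rf}))$ \emph{sequential} bracketing of $\Xi_T$ with cardinality $N_k$; for each $\theta$ I select nested brackets and telescope the process into links between consecutive approximations, each link being a predictable sequence of $\rho$-norm $\lesssim \eta_k$. Applying the single-sequence bound to every link with confidence level inflated by $\log(1+N_k)$ (plus a summable-in-$k$ correction), and taking a union bound over all brackets and scales, keeps the total failure probability at $\le 2e^{-x}$. Summing the resulting $\eta_k\sqrt{\gamma x_k/T}$ terms and comparing $\sum_k\eta_k\sqrt{\log(1+N_k)}$ to the entropy integral produces the term $\sqrt{\gamma/T}\int_{r^-}^r\sqrt{\log(1+\mathcal N_{[\,]}(\epsilon,\Xi_T,P_{Q_0,g^\rf}))}\,d\epsilon$ together with the $r\sqrt{\gamma x/T}$ remainder, while summing the linear $\gamma B x_k/T$ terms produces $\frac{\gamma B}{T}\log(1+\mathcal N_{[\,]}(r,\Xi_T,P_{Q_0,g^\rf}))$ and the $\gamma B x/T$ remainder. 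The contribution below scale $r^-$ is absorbed into the additive $r^-$ in the bound by controlling the bracket-width terms through their predictable means, which are deterministically $\lesssim r^-$.

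The main obstacle is the chaining itself, because the index functions $\xi_t(\theta)$ are \emph{random and only $\bar O(t-1)$-measurable}, so the classical bracketing chaining for i.i.d.\ empirical processes does not apply directly; instead I would use the sequential bracketing framework of \citet{vanHandel2011}, in which the brackets are themselves predictable function sequences $(\lambda_t^j,\upsilon_t^j)$ and the sandwiching $\lambda_t^j\le\xi_t(\theta)\le\upsilon_t^j$ is combined with predictable control of the width $\upsilon_t^j-\lambda_t^j$ in $\rho$-norm. The delicate point is that the centering $(\delta_{O(t)}-P_{Q_0,g_t})$ is not monotone in $\xi_t$, so each bracket-width contribution must be split into its predictable mean (bounded deterministically by the $\rho$-width) and a martingale fluctuation (bounded by Freedman), and the entropy bookkeeping must be carried consistently in the $\rho$-norm rather than a fixed $L_2(P)$ norm. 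The importance-sampling change of measure, by contrast, is routine once the reference-norm identity above is in hand.
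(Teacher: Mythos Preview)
The paper does not actually prove this theorem: it is stated in the appendix as a restatement of Theorem~1 from \citet{iswerm}, included ``for our reader's convenience,'' with no accompanying argument. There is therefore nothing in the paper to compare your proposal against.

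That said, your sketch is the expected route for this kind of result and is consistent with how such maximal inequalities are obtained in the sequential-bracketing literature: a Freedman/Bernstein martingale tail bound at each fixed sequence, combined with a peeling/chaining over dyadic scales of sequential brackets in the sense of \citet{vanHandel2011}, with the importance-sampling identity $P_{Q_0,g_t}(g^\rf/g_t)^2\xi_t^2 = P_{Q_0,g^\rf}(g^\rf/g_t)\xi_t^2$ converting conditional variances into the reference norm $\rho$. Two small points worth tightening if you write this out in full: (i) the theorem's hypothesis bounds $\|g^*/g_t\|_\infty$ rather than $\|g^\rf/g_t\|_\infty$, which is almost certainly a typo in the paper (the proof needs the latter, and you correctly flag that a bridging constant is needed otherwise); and (ii) your treatment of the sub-$r^-$ residual (``absorbed into the additive $r^-$'') glosses over the usual split into the predictable mean of the bracket width and its martingale fluctuation---the former is indeed $\lesssim r^-$ in $\rho$-norm, but only in mean, so one still applies Freedman once more to the width sequence at the finest scale rather than bounding it deterministically. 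Neither point is a real gap in the strategy.
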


\section{High probability bound for IS weighted nonparametric least squares from adaptively collected data}

Suppose $\mathcal{Y} \subseteq [-\sqrt{M}, \sqrt{M}]$ for some $M > 0$ and let $\bar{\mathcal{Q}}$ be a convex class of functions $\mathcal{A} \times \mathcal{X} \to \mathcal{Y}$. For any $\bar{Q}:\mathcal{A} \times \mathcal{X} \to \mathbb{R}$, and any $o = (x, a, y) \in \mathcal{O}$, let $\ell(\bar{Q},o) := (y - \bar{Q}(a,x))^2$. Let $g^\rf$ be a fixed (as opposed to random) density w.r.t. some dominating measure $\mu$ on $\mathcal{A}$. For any $\bar{Q}$, define the corresponding population risk w.r.t. $P_{Q_0, g^\rf}$ as $R_0(\bar{Q}) := P_{Q_0, g^\rf} \ell(\bar{Q}, \cdot)$. Observe that the population risk can be rewritten in terms of the conditional distributions $(P_{Q_0,g_s})_{s=1}^t$ of observations $(O(s))_{s=1}^t$ given their respective past, via IS weighting:
\begin{align}
    R_0(\bar{Q}) := \frac{1}{t} \sum_{s=1}^t P_{Q_0, g_s} \frac{g^\rf}{g_s} \ell(\bar{Q}, \cdot).
\end{align}
We define the corresponding IS weighted empirical risk as
\begin{align}
    \widehat{R}_t(\bar{Q}) :=  \frac{1}{t} \sum_{s=1}^t \delta_{O(s)} \frac{g^\rf}{g_s} \ell(\bar{Q}, \cdot).
\end{align}

Let $\widehat{\bar{Q}}_t \in \argmin_{\bar{Q} \in \bar{\mathcal{Q}}} \widehat{R}_t(\bar{Q})$ be an empirical risk minimizer over $\bar{\mathcal{Q}}$. In the upcoming theorem, we provide a high probability bound on the excess risk $R_0(\widehat{\bar{Q}}_t) - R_0(\bar{Q}_1)$. Our result requires the following assumptions.

\begin{assumption}[Entropy of the loss class]\label{assumption:ent_loss_class}
There exists $p > 0$ such that $\log N_{[\,]}(M \epsilon, \ell(\bar{\mathcal{Q}}), L_2(P_{Q_0, g^\rf})) \lesssim \epsilon^{-p}$, where $\ell(\bar{\mathcal{Q}}) := \{ \ell(\bar{Q}) : \bar{Q} \in \bar{\mathcal{Q}} \}.$
\end{assumption}

\begin{assumption}[Bounded IS ratios]\label{assumption:bounded_IS_ratios}
There exists $\gamma_t > 0$ such that $\|g^* / g_s\|_\infty \leq \gamma_t$ for every $s=1,\ldots,t$.
\end{assumption}

Theorem 4 in \citet{iswerm} gives a high probability excess risk bound on the least squares estimator. We restate it here under the current notation for our reader's convenience.

\begin{theorem}
Consider the setting of the current section, and suppose that \ref{assumption:ent_loss_class} and \ref{assumption:bounded_IS_ratios} hold. Then, for any $x > 0$, it holds with proability $1-2 e^{-x}$ that
\begin{align}
    R(\widehat{\bar{Q}}_t) - \inf_{\bar{Q} \in \bar{\mathcal{Q}}} R(\bar{Q}) \lesssim M \begin{cases}
    \left( \frac{\gamma_t}{t}\right)^{\frac{1}{1+p/2}} + 
    \frac{\gamma_t x}{t} 
    & \text{ if } p < 2, \\
    \left(\frac{\gamma_t}{t}\right)^{\frac{1}{p}} +  \frac{\gamma_t}{t} + \sqrt{\frac{ \gamma_t x}{t}} + \frac{\gamma_t x}{t}  
    & \text{ if } p > 2.
    \end{cases}
\end{align}
\end{theorem}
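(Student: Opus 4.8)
The plan is a localized empirical-risk-minimization argument, in which the usual i.i.d. symmetrization/chaining step is replaced by the martingale maximal inequality of Theorem~\ref{thm:max_ineq_IS_mart_emp_proc}, which is tailored to the adaptive, importance-sampling-weighted setting. Write $\bar{Q}_1 \in \argmin_{\bar{Q} \in \bar{\mathcal{Q}}} R_0(\bar{Q})$ (equivalently, the $L_2(P_{Q_0,g^\rf})$-projection of the regression function $\bar{Q}_0$ onto the convex set $\bar{\mathcal{Q}}$; if the infimum is not attained, take a near-minimizer and let the slack vanish), and abbreviate $\|\cdot\| := \|\cdot\|_{2,Q_0,g^\rf}$. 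Two deterministic facts drive the argument. First, \emph{curvature}: since $\ell$ is the square loss and $\bar{\mathcal{Q}}$ is convex, the projection inequality yields $R_0(\bar{Q}) - R_0(\bar{Q}_1) \geq \|\bar{Q} - \bar{Q}_1\|^2$ for every $\bar{Q} \in \bar{\mathcal{Q}}$. Second, \emph{Lipschitz loss increments}: for $o = (x,a,y)$, $\ell(\bar{Q},o) - \ell(\bar{Q}_1,o) = (\bar{Q}_1 - \bar{Q})(a,x)\,\big(2y - \bar{Q}(a,x) - \bar{Q}_1(a,x)\big)$, whose last factor is bounded by $4\sqrt{M}$ because $\mathcal{Y}, \bar{\mathcal{Q}} \subseteq [-\sqrt{M},\sqrt{M}]$; hence $\|\ell(\bar{Q}) - \ell(\bar{Q}_1)\|_\infty \lesssim M$ and $\|\ell(\bar{Q}) - \ell(\bar{Q}_1)\| \leq 4\sqrt{M}\,\|\bar{Q} - \bar{Q}_1\|$. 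Combined with Assumption~\ref{assumption:ent_loss_class} (translation by the fixed $\ell(\bar{Q}_1)$ and localization do not increase entropy), this controls the sequential bracketing entropy of the constant sequences $\{(\ell(\bar{Q}) - \ell(\bar{Q}_1))_{s=1}^{t} : \bar{Q} \in \bar{\mathcal{Q}}\}$.

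Next I would invoke the basic inequality $\widehat{R}_t(\widehat{\bar{Q}}_t) \leq \widehat{R}_t(\bar{Q}_1)$, which gives
\begin{align}\notag
R_0(\widehat{\bar{Q}}_t) - R_0(\bar{Q}_1) \;\leq\; (R_0 - \widehat{R}_t)\big(\ell(\widehat{\bar{Q}}_t) - \ell(\bar{Q}_1)\big) \;=\; -\frac{1}{t}\sum_{s=1}^{t} (\delta_{O(s)} - P_{Q_0,g_s})\,\frac{g^\rf}{g_s}\,\big(\ell(\widehat{\bar{Q}}_t) - \ell(\bar{Q}_1)\big),
\end{align}
that is, an importance-sampling-weighted martingale empirical process of the loss-difference class, evaluated at the data-dependent point $\widehat{\bar{Q}}_t$. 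I would then apply Theorem~\ref{thm:max_ineq_IS_mart_emp_proc} with $\Theta = \bar{\mathcal{Q}}$, $\xi_s(\bar{Q}) = \ell(\bar{Q}) - \ell(\bar{Q}_1)$, bounded IS ratios $\gamma = \gamma_t$ (Assumption~\ref{assumption:bounded_IS_ratios}), envelope $B \lesssim M$, and localization radius $r \asymp \sqrt{M}\,\delta$, and peel over a dyadic sequence of radii $\delta \in \{2^{-k}\}$ (a union bound costing an extra $\log t$ inside $x$, absorbed into polylog factors), to obtain, on an event of probability at least $1 - 2e^{-x}$, a modulus-of-continuity bound of the schematic form $|(R_0 - \widehat{R}_t)(\ell(\bar{Q}) - \ell(\bar{Q}_1))| \lesssim \psi_t(\|\bar{Q} - \bar{Q}_1\|)$ valid for all $\bar{Q} \in \bar{\mathcal{Q}}$, where (tracking the $M$-dependence through the scaling in Assumption~\ref{assumption:ent_loss_class}) $\psi_t(\delta)$ contains an entropy-integral term $\asymp \sqrt{M}\,\sqrt{\gamma_t/t}\,(\sqrt{M}\,\delta)^{1-p/2}$ when $p<2$ (with an additive truncation level $r^-$ when $p>2$, since the integral then diverges at $0$), a deviation term $\asymp \sqrt{M}\,\delta\,\sqrt{\gamma_t x/t}$, and a lower-order term $\asymp M\gamma_t x/t$.

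Finally I would close the loop: plugging $\bar{Q} = \widehat{\bar{Q}}_t$ into the modulus bound and using curvature gives the self-bounding inequality $\|\widehat{\bar{Q}}_t - \bar{Q}_1\|^2 \leq R_0(\widehat{\bar{Q}}_t) - R_0(\bar{Q}_1) \lesssim \psi_t(\|\widehat{\bar{Q}}_t - \bar{Q}_1\|)$. Solving this fixed-point inequality — absorbing the linear term via $\sqrt{M}\,\delta\,\sqrt{\gamma_t x/t} \leq \tfrac14\delta^2 + M\gamma_t x/t$ and balancing $\delta^2 \asymp \sqrt{M}\,\sqrt{\gamma_t/t}\,(\sqrt{M}\,\delta)^{1-p/2}$ in the case $p<2$ — yields $R_0(\widehat{\bar{Q}}_t) - R_0(\bar{Q}_1) \lesssim M(\gamma_t/t)^{1/(1+p/2)} + M\gamma_t x/t$; in the case $p>2$ one instead optimizes the truncation level $r^- \asymp (\gamma_t/t)^{1/p}$, producing the displayed rate $M\big[(\gamma_t/t)^{1/p} + \gamma_t/t + \sqrt{\gamma_t x/t} + \gamma_t x/t\big]$. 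The only genuinely delicate step is the empirical-process control of the adaptively-weighted, data-dependent term $(R_0 - \widehat{R}_t)(\ell(\widehat{\bar{Q}}_t) - \ell(\bar{Q}_1))$: standard empirical-process theory does not apply because the design is adaptively generated and the IS weights $g^\rf/g_s$ can be large, which is exactly what Theorem~\ref{thm:max_ineq_IS_mart_emp_proc} is designed to handle; the curvature bound, the Lipschitz increments, the dyadic peeling, and the fixed-point algebra are all routine.
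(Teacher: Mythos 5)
Note first that the paper itself contains no proof of this statement: it is explicitly a restatement of Theorem~4 of \citet{iswerm}, included only for the reader's convenience, so there is no in-paper argument to compare yours against. Your sketch is nonetheless a sound and natural reconstruction: the basic inequality for the IS-weighted ERM, the curvature bound $R_0(\bar{Q})-R_0(\bar{Q}_1)\geq\|\bar{Q}-\bar{Q}_1\|_{2,Q_0,g^\rf}^2$ (valid under the section's convexity assumption even when $\bar{Q}_0\notin\bar{\mathcal{Q}}$, via the projection argument you invoke), the Lipschitz control of the loss increments, uniform control of the adaptively-weighted process via Theorem~\ref{thm:max_ineq_IS_mart_emp_proc} with peeling, and the fixed-point step do reproduce the stated rates in both regimes $p<2$ (where the entropy integral converges at $0$ and the $\sqrt{\gamma_t x/t}$ term is absorbed by AM--GM) and $p>2$ (where the truncation $r^-\asymp(\gamma_t/t)^{1/p}$ drives the rate and the $\sqrt{\gamma_t x/t}$ term survives). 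The only blemish is bookkeeping: with $\log N_{[\,]}(M\epsilon,\ell(\bar{\mathcal Q}),L_2(P_{Q_0,g^\rf}))\lesssim\epsilon^{-p}$ and localization radius $4\sqrt{M}\delta$ in the loss-difference class, the entropy-integral term scales as $M^{1/2+p/4}\,\delta^{1-p/2}\sqrt{\gamma_t/t}$ rather than the $\sqrt{M}\,(\sqrt{M}\delta)^{1-p/2}\sqrt{\gamma_t/t}$ you wrote; the former is exactly what is needed for the balance $\delta^2\asymp M(\gamma_t/t)^{1/(1+p/2)}$, so your schematic would actually give a slightly wrong power of $M$ if taken literally, though the dependence on $\gamma_t/t$, $x$, and $p$ — which is what the theorem is really about — comes out correctly.
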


% % \section{Proof of theorem \ref{thm:asymp_normal_stab_one_step}}

% % \section{Proof of theorem \ref{thm:}

\section{Additional Empirical Results}
\label{apdx:empirical}
\subsection{Sequential Sample Splitting vs. Cross-Time-Fitting}
The approach we proposed in the main text estimates $\widehat{\bar{Q}}_{t-1}$ using only the data $O(1), \dots, O(t-1)$. This means that potentially few data are available for earlier estimates. In this section, we empirically explore an alternative strategy for fitting $\widehat{\bar{Q}}_{t-1}$ inspired by the cross-time-fitting procedure proposed in \citet{kallus2019efficiently} and which would be theoretically justified under some sufficient mixing (which is not necessary for our sequential approach).
Specifically, we split our data into $F=4$ folds 
% is more computationally efficient than sequential sample splitting. Time cross-fitting operates as follows. Data $O(1), \dots, O(T)$ are split in $F$ equal-sized folds 
and train $F$ outcome regression models, $\widehat{\bar{Q}}_f,\,f=1,2,3,4$, each to be used to make predictions on data in the corresponding fold. The model $\widehat{\bar{Q}}_f$ is trained using observations in all folds except for folds $f$ and $\min(f+1, F)$. As long as the data is sufficiently mixing, dropping fold $f+1$ ensures sufficient independence from future data. At the same time, each model now uses an amount of data that grows linearly in $T$.
% Given that the size of each fold , this makes the training data as good as i.i.d. even if future datapoints are used as long as their time-distance from the observation we need to obtain a prediction on grows linearly in $T$. This is achieved by not using the observations of the same fold and (if applicable) of the next fold.  \answerTODO{\textit{@Nathan: please proof-read.}} 
Further, unlike sequential sample splitting, which requires training of $T-1$ models, cross-time-fitting requires training only $F$ models. \Cref{fig:sss57_misOPE_CADR,fig:cf57_misOPE_CADR} establish parity in the conclusions w.r.t. CADR's coverage compared to all other baseline estimators on 57 OpenML-CC18 datasets, 4 target policies and linear outcome regression models for all estimators that use them when these models are trained with sequential sample splitting (as in \Cref{fig:coverage} of the \cref{sec:multiclass} in the main text) and with time cross-fitting respectively.

\begin{figure*}[!ht]
\centering
\includegraphics[width=0.75\linewidth]{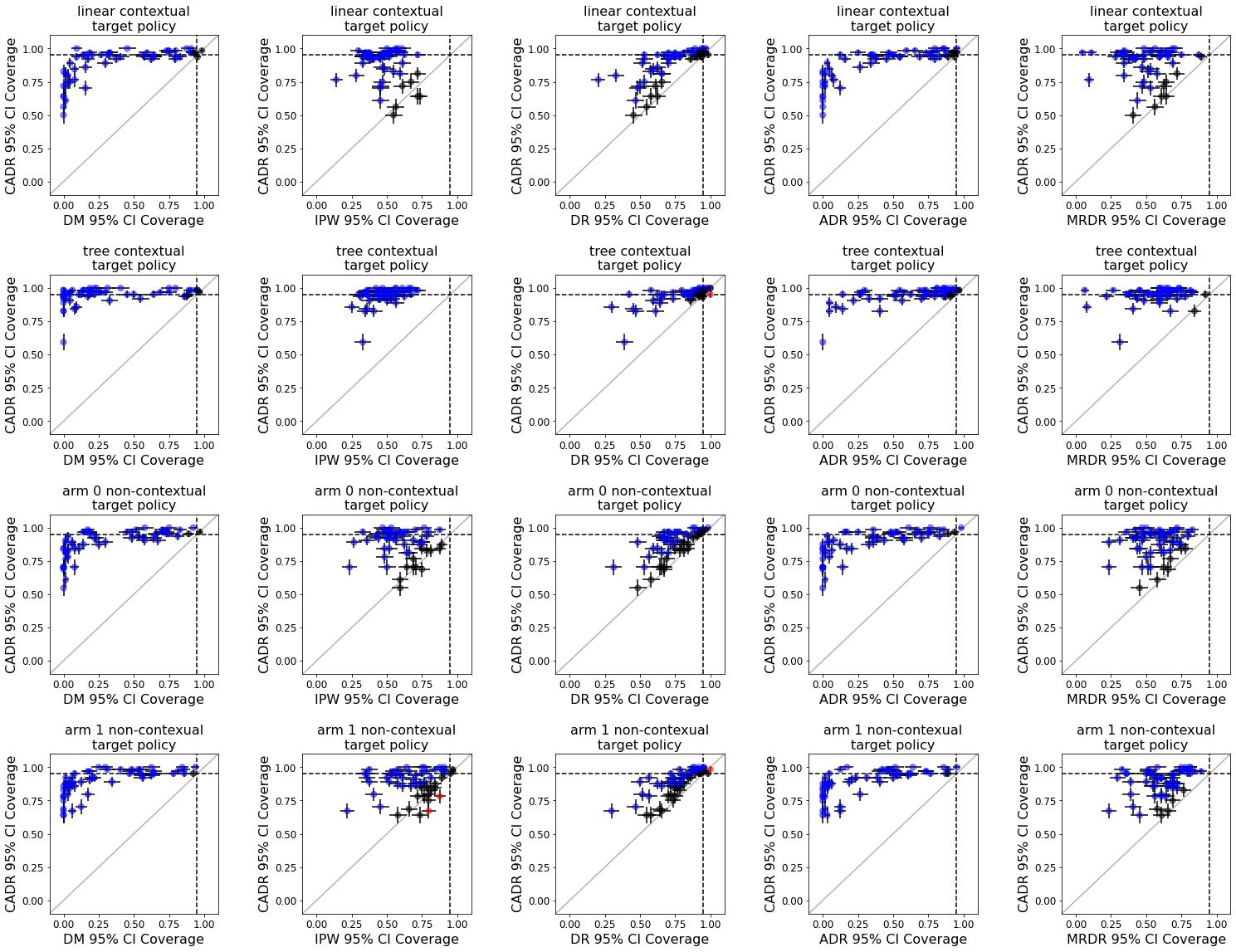}
\caption{Comparison of CADR estimator against DM, IPW, DR, ADR, MRDR w.r.t 95\% confidence interval coverage on 57 OpenML-CC18 datasets and 4 target policies with \textbf{sequential sample splitting} for training the linear outcome regression model of all estimators that use them.}
\label{fig:sss57_misOPE_CADR}
\end{figure*}

\begin{figure*}[!ht]
\centering
\includegraphics[width=0.75\linewidth]{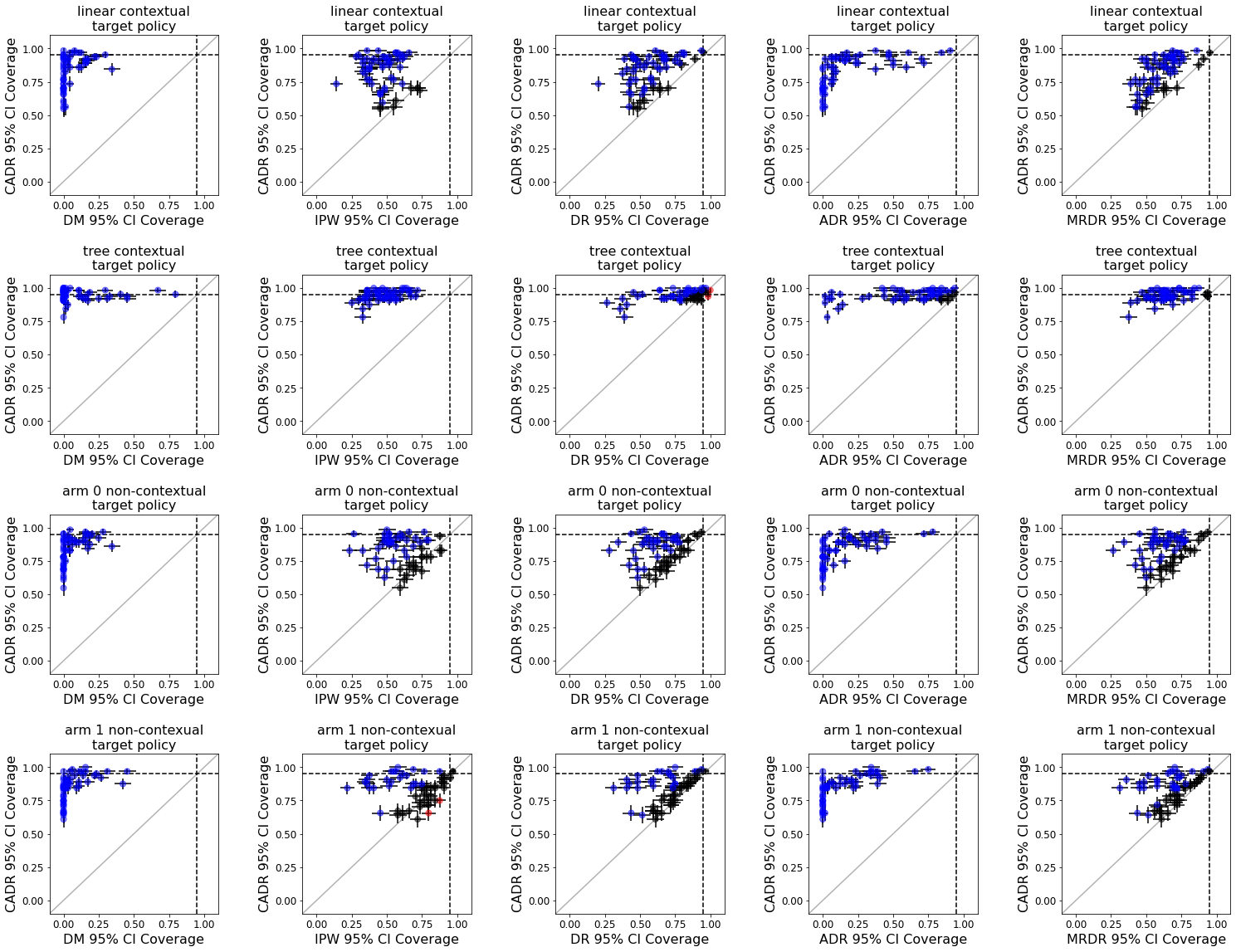}
\caption{Comparison of CADR estimator against DM, IPW, DR, ADR, MRDR w.r.t. 95\% confidence interval coverage on 57 OpenML-CC18 datasets and 4 target policies with \textbf{cross-fitting} for training the linear outcome regression model of all estimators that use them.}
\label{fig:cf57_misOPE_CADR}
\end{figure*}

\subsection{CADR in Misspecified vs. Well-Specified Outcome Regression Models}

Although CADR's advantage over DR is more pronounced when the off-policy estimator's outcome regression model is misspecified (\eg, using linear model on real data), this section establishes the advantage of CADR over all other estimators when they all use a well-specified outcome regression model (\eg, tree).  \Cref{fig:cf72_misOPE_CADR} shows CADR's coverage performance when the outcome regression model of DM, DR, MRDR and CADR is misspecified (linear regression model trained with the default \verb|sklearn| parameters) and \cref{fig:cf72_wellOPE_CADR} shows CADR's coverage performance when the outcome regression model of DM, DR, MRDR and CADR is well-specified (decision tree regression model trained with the default \verb|sklearn| parameters). Each dot represents each one of the 72 datasets and is colored blue when CADR has significantly better coverage than the corresponding baseline column estimator, in red when it has significantly worse coverage and in black when the two coverage are within standard error.  Results are averaged over 64 simulations per dataset and standard errors are shown. CADR remains the best estimator in both cases but as expected, in the misspecified outcome regression model case there are more datasets where CADR has significantly better coverage than DR compared to the well-specified outcome regression model case where there are more datasets for which CADR's and DR's coverage are within standard error. This is because when the error is large and is multiplied by a potentially large inverse propensity score of the logging policy, the variance stabilization performed by CADR is the most effective.

\begin{figure*}[!ht]
\centering
\includegraphics[width=0.75\linewidth]{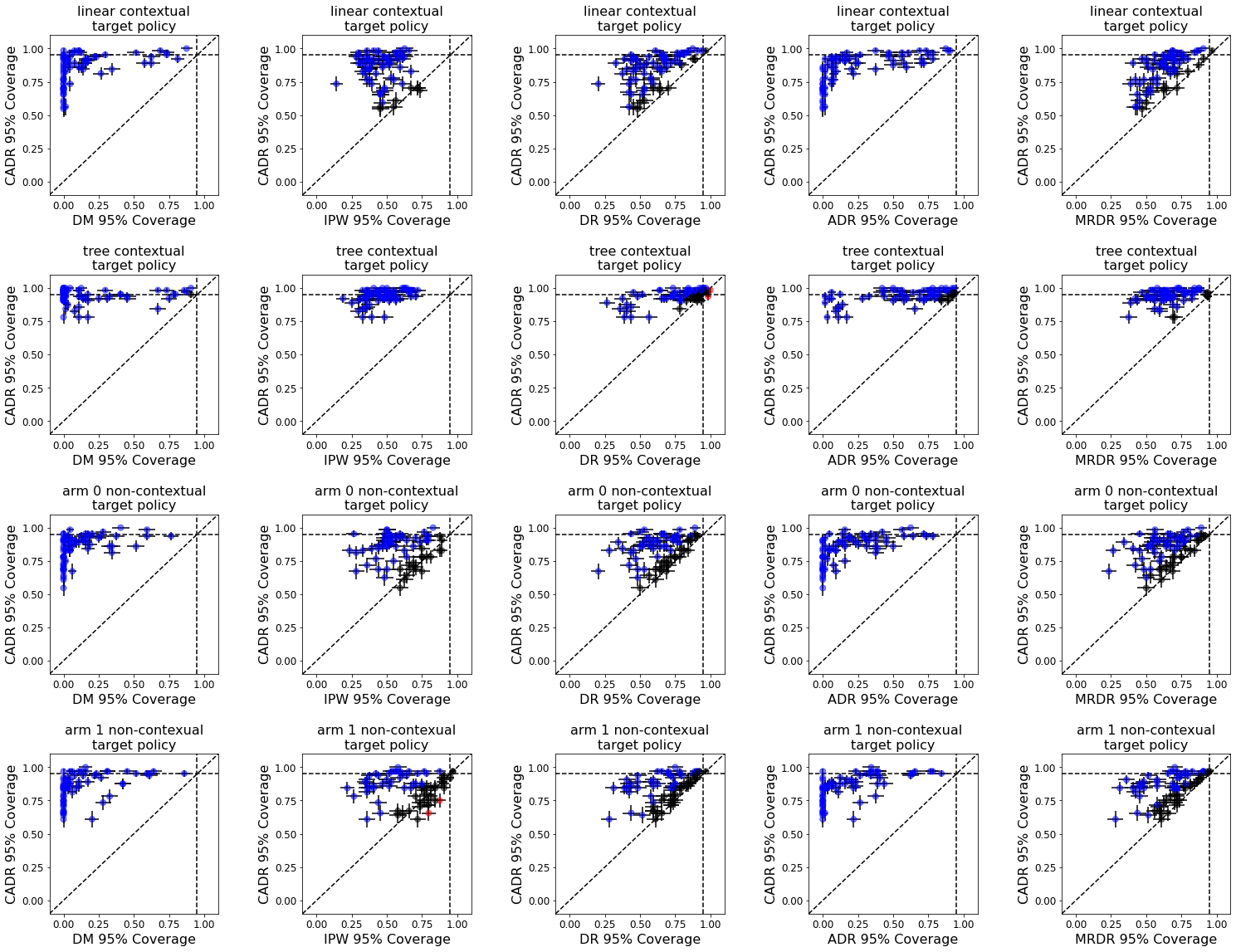}
\caption{Comparison of CADR estimator against DM, IPW, DR, ADR, MRDR w.r.t. 95\% confidence interval coverage on all 72 OpenML-CC18 datasets and 4 target policies with \textbf{linear outcome regression model (misspecified)} trained with cross-fitting of all estimators that use them.}
\label{fig:cf72_misOPE_CADR}
\end{figure*}

\begin{figure*}[!ht]
\centering
\includegraphics[width=0.75\linewidth]{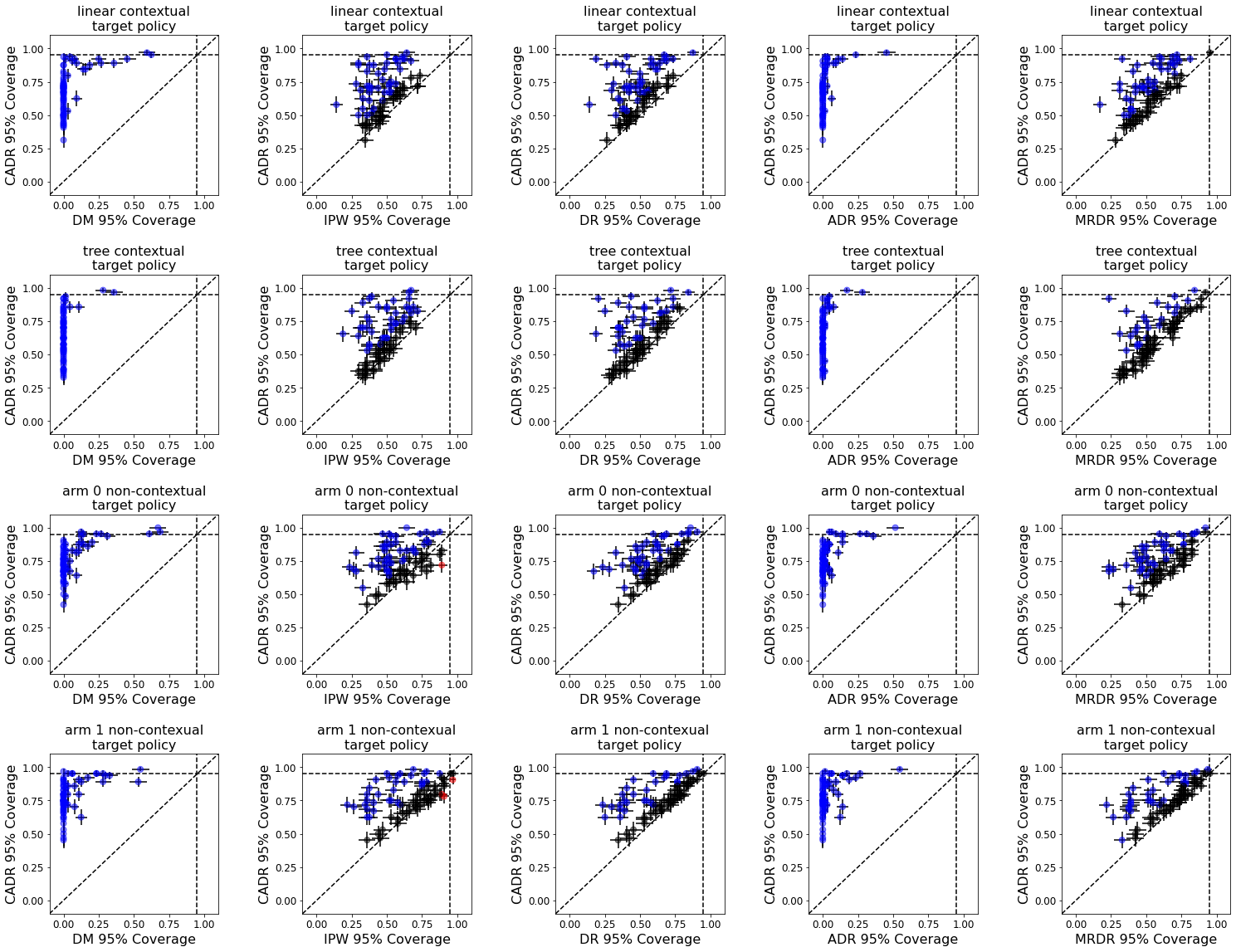}
\caption{Comparison of CADR estimator against DM, IPW, DR, ADR, MRDR w.r.t. 95\% confidence interval coverage on all 72 OpenML-CC18 datasets and 4 target policies with \textbf{tree outcome regression model (well-specified)} trained with cross-fitting of all estimators that use them.}
\label{fig:cf72_wellOPE_CADR}
\end{figure*}

\subsection{Importance Sampling Weighted Training of CADR Outcome Regression Model}

Finally, we consider the effect of using weighted training in the outcome model fitting of CADR akin to MRDR's outcome model fitting, where each training sample $O(s) = (X(s), A(s), Y(s))$ is weighted by $w(s) = \frac{g^*(A(s) | X(s))}{g_s(A(s) | X(s))}$. We call this estimator CAMRDR. \Cref{fig:cf72_misOPE_CAMRDR} shows CAMRDR's coverage performance against baselines and CADR when the outcome regression model of DM, DR, MRDR, CADR and CAMRDR is misspecified (linear regression model trained with the default \verb|sklearn| parameters). \Cref{fig:cf72_wellOPE_CAMRDR} shows CAMRDR's coverage performance against baselines and CADR when the outcome regression model of DM, DR, MRDR, CADR and CAMRDR is well-specified (decision tree regression model trained with the default \verb|sklearn| parameters). Again, each dot represents each one of the 72 datasets and is colored blue when CAMRDR has significantly better coverage than the corresponding column estimator, in red when it has significantly worse coverage and in black when the two coverage are within standard error.  Results are averaged over 64 simulations per dataset and standard errors are shown. Importance sampling weighted training makes a small positive difference compared to CADR in the well-specified case and a small negative difference compared to CADR in the mis-specified case. CAMRDR is better than all other baselines in both cases.

\begin{figure*}[!ht]
\centering
\includegraphics[width=0.80\linewidth]{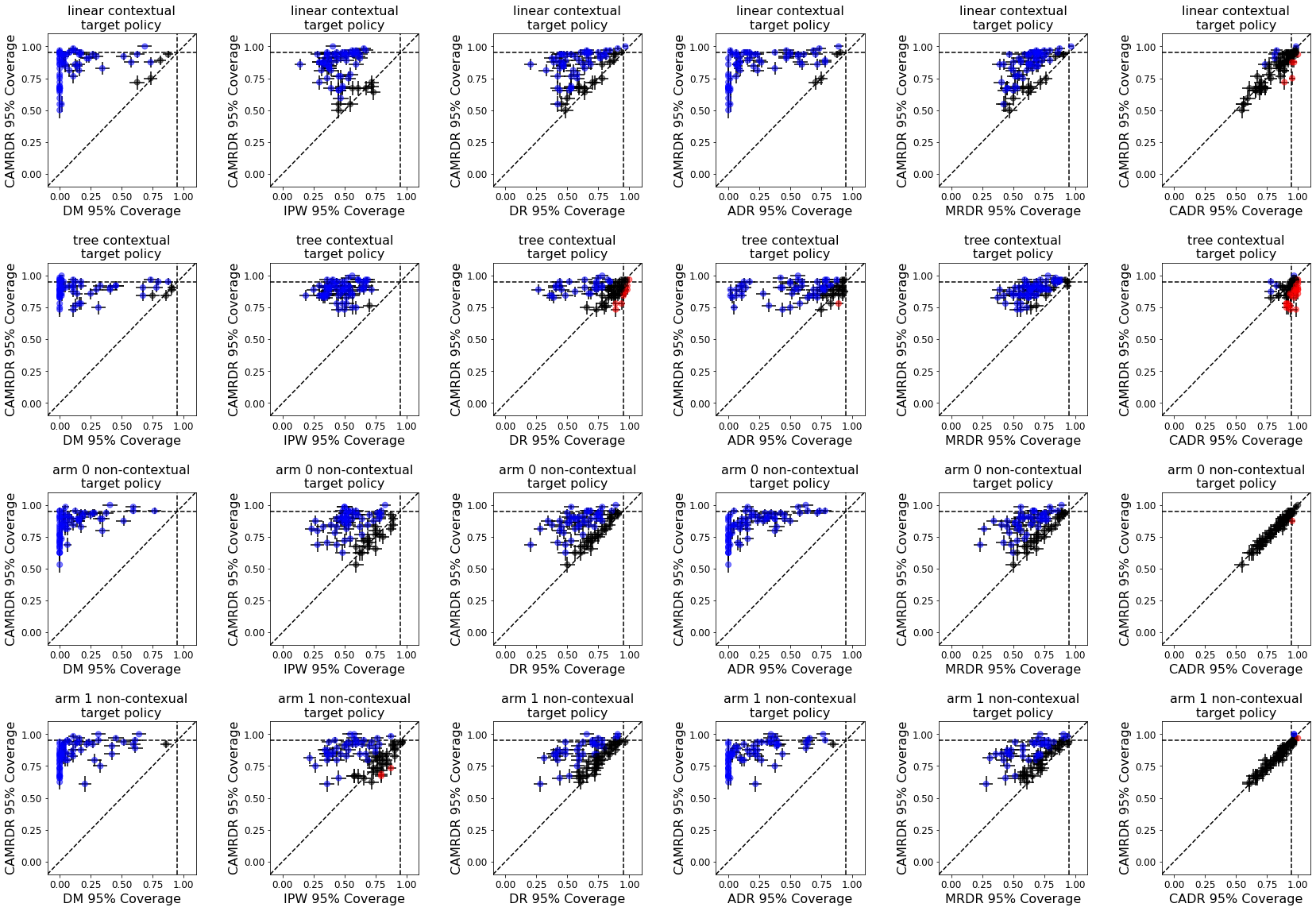}
\caption{Comparison of CAMRDR estimator against DM, IPW, DR, ADR, MRDR and CADR (last column)  w.r.t. 95\% confidence interval coverage on all 72 OpenML-CC18 datasets and 4 target policies with \textbf{linear outcome regression model (misspecified)} trained with cross-fitting.}
\label{fig:cf72_misOPE_CAMRDR}
\end{figure*}

\begin{figure*}[!ht]
\centering
\includegraphics[width=0.80\linewidth]{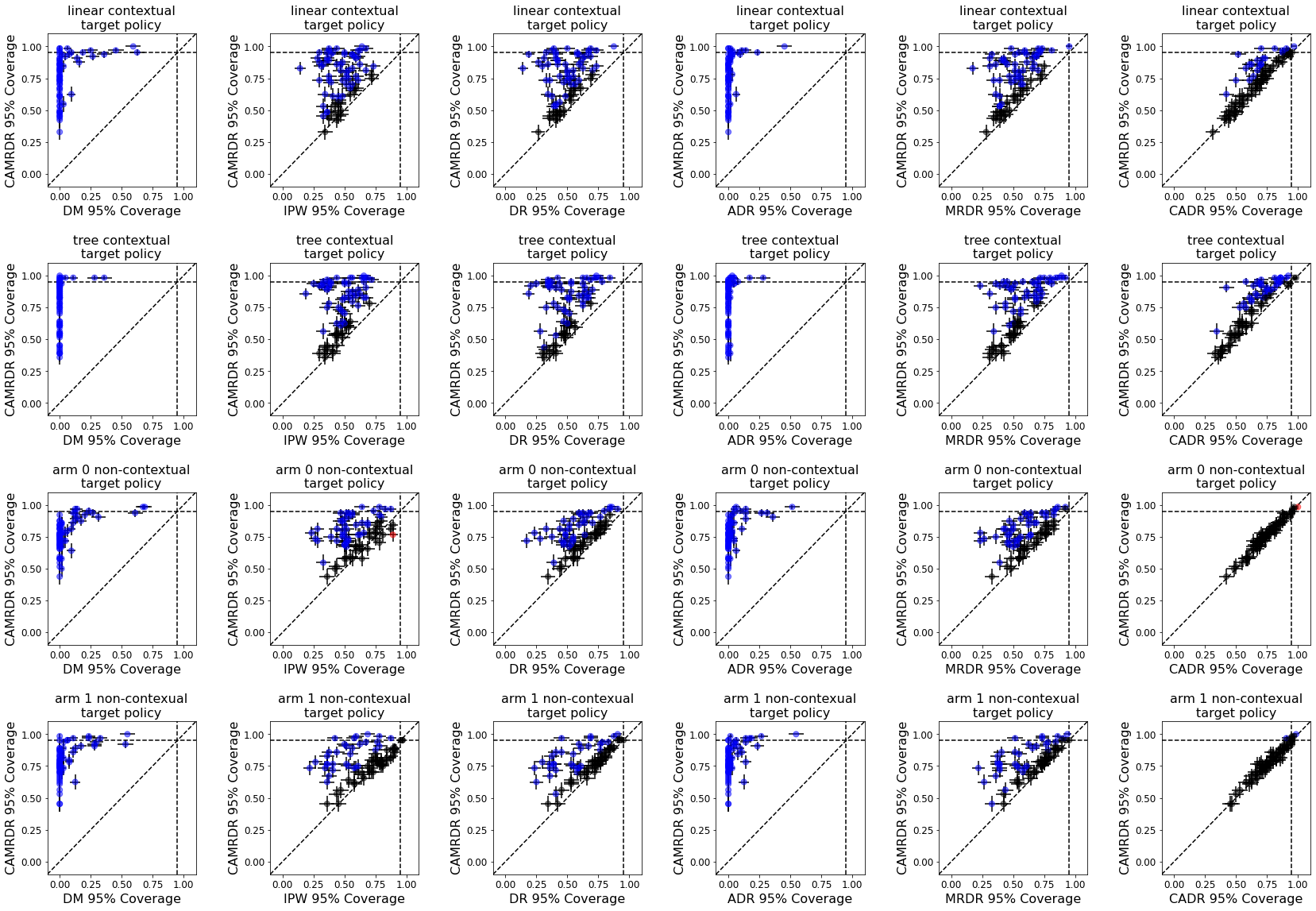}
\caption{Comparison of CAMRDR estimator against DM, IPW, DR, ADR, MRDR and CADR (last column) w.r.t. 95\% confidence interval coverage on all 72 OpenML-CC18 datasets and 4 target policies with \textbf{tree outcome regression model (well-specified)} trained with cross-fitting.}
\label{fig:cf72_wellOPE_CAMRDR}
\end{figure*}

\subsection{Execution Specifics of Experiment Code}
\label{execution}
The IPython notebook to reproduce the experimental results of the main paper and the appendix is included as an attachment in the supplemental materials. One needs to obtain an OpenML API key to run this code (instructions can be found at https://docs.openml.org/Python-guide/) and replace the string \verb|'YOURKEY'| in \verb|summarize_openmlcc18()| and in \verb|download_openmlcc18()| functions with it.
After that, if the notebook is executed as is, it reproduces Figure \ref{fig:cf57_misOPE_CADR} (1h 26min on a 64 CPU Intel Xeon). Changing variable  \verb|ope_outcome_model_training| from \verb|cross_fitting| to \verb|sequential_sample_splitting| reproduces Figures \ref{fig:coverage}/\ref{fig:sss57_misOPE_CADR} (same)  (22h 23min on a 64 CPU Intel Xeon). Changing variable   \verb|task_min_samples| from 1000 to 0 and variable \verb|task_max_contexts| to \verb|np.inf| reproduces Figure \ref{fig:cf72_misOPE_CADR} (20h 20min on a 64 CPU Intel Xeon).
Changing variable  \verb|ope_outcome_model| from \verb|LinearRegression()| to \verb|DecisionTreeRegressor()|, variable   \verb|task_min_samples| from 1000 to 0 and variable \verb|task_max_contexts| to \verb|np.inf| reproduces Figure \ref{fig:cf72_wellOPE_CADR} (26h 8min on a 64 CPU Intel Xeon).
Figures \ref{fig:cf72_misOPE_CAMRDR} and \ref{fig:cf72_wellOPE_CAMRDR} are from the same execution as Figures \ref{fig:cf72_misOPE_CADR} and \ref{fig:cf72_wellOPE_CADR} but with adding \verb|'CAMRDR'| in the \verb|competitors| variable of the \verb|visualize_coverage()| function. 

\end{document}